\crefname{equation}{eqn.}{eqns.}
\newcommand{\ie}{\emph{i.e., }}
\newcommand{\eg}{\emph{e.g., }}
\newcommand{\jnt}{\textrm{joint}}
\newcommand{\lora}{\textrm{LoRA}}
\title{\LARGE Eigenfunction Extraction for Ordered Representation Learning}
\author{
\\
Burak Varıcı\thanks{Equal contribution.} \quad
Che-Ping Tsai\footnotemark[1] \quad
Ritabrata Ray \quad
Nicholas M. Boffi \quad
Pradeep Ravikumar \\
\\
Machine Learning Department, Carnegie Mellon University
}
\date{}
\begin{document}

\maketitle

\begin{abstract}
Recent advances in representation learning reveal that widely used objectives, such as contrastive and non-contrastive, implicitly perform spectral decomposition of a \emph{contextual kernel}, induced by the relationship between inputs and their \emph{contexts}. Yet, these methods recover only the linear span of top eigenfunctions of the kernel, whereas exact spectral decomposition is essential for understanding feature ordering and importance. In this work, we propose a general framework to extract \emph{ordered} and \emph{identifiable}  eigenfunctions, based on modular building blocks designed to satisfy key desiderata, including compatibility with the contextual kernel and scalability to modern settings. We then show how two main methodological paradigms, low-rank approximation and Rayleigh quotient optimization, align with this framework for eigenfunction extraction. Finally, we validate our approach on synthetic kernels and demonstrate on real-world image datasets that the recovered eigenvalues act as effective importance scores for feature selection, enabling principled efficiency–accuracy tradeoffs via adaptive-dimensional representations. \looseness=-1

\end{abstract}

\section{Introduction}\label{sec:intro}

Representation learning~\citep{bengio2012deep} underpins recent advances in artificial intelligence, powering large-scale systems such as information retrieval~\citep{karpukhin2020dense}, vector databases \citep{malkov2018efficient}, and retrieval-augmented generative models~\citep{lewis2020retrieval}. 
Despite this widespread and huge empirical progress, the question of identifiability, whether learned representations correspond to unique and interpretable structures, has often been overlooked~\citep{reizinger2025position}. This gap leaves a fragmented understanding of the mechanisms underlying representation learning.

Recent advances in representation learning theory~\citep{zhai2025contextures,johnson2023contrastive,balestriero2022contrastive} show that widely used objectives, such as contrastive~\citep{haochen2021provable} and non-contrastive~\citep{bardes2022vicreg}, implicitly recover the top-$d$ eigenspace of the \emph{contextual kernel} (see \Cref{sec:problem}). 
This spectral viewpoint is compelling: kernel methods have historically played a central role in machine learning, from support vector machines \citep{cortes1995support} to neural tangent kernels \citep{jacot2018neural}, and spectral decomposition of kernels has been the foundation of manifold learning \citep{tenenbaum2000global}, Laplacian eigenmaps \citep{belkin2003laplacian}, and diffusion maps \citep{coifman2006diffusion}. Yet, despite these connections, the implications of spectral decomposition for modern deep representation learning, particularly from an identifiability perspective, remain underexplored. \looseness=-1

By contrast, spectral methods like eigenvalue and singular value decompositions are indispensable across scientific domains, where extracting eigenfunctions and eigenvalues provides interpretable, fine-grained insights. Motivated by these needs, recent work has developed neural-network-based methods for eigenfunction extraction~\citep{pfau2019spectral,deng2022neural,ryu2024operator}. However, this line of work remains largely disconnected from representation learning: the former prioritizes explicit eigenfunction recovery, while the latter is driven by empirical performance and seldom exploits granular, eigenfunction-level information. 

This disconnection overlooks the key difference between eigenspaces and eigenfunctions. While existing representation learning methods recover only the top-$d$ \emph{eigenspace} (linear subspace spanned by the top-$d$ eigenfunctions), exact eigenfunctions and their associated eigenvalues offer a finer decomposition that assigns importance to individual features. Such identifiability is crucial in applications like feature selection and interpretability. \looseness=-1

To bridge this gap, we propose a general framework to extract identifiable and ordered eigenfunctions using modular building blocks designed to satisfy key desiderata, including compatibility with the contextual kernel and the scalability demands of modern applications. 
We then show two main methodological paradigms for eigendecomposition (low-rank approximation and Rayleigh quotient optimization) fit into this framework, and how they connect to contrastive and non-contrastive learning methods, enabling our modular framework to extract ordered features along with importance scores.

Finally, we validate our methods on synthetic kernels and real-world image datasets. On synthetic data, our algorithms recover the true eigenfunctions, confirming the theoretical guarantees. On image data, recovered eigenvalues provide effective importance scores for feature selection~\citep{kusupati2022matryoshka}. This enables \emph{adaptive-dimensional} representations: practitioners can scale down to reduce storage and compute or scale up for maximum accuracy. Having such representations supports deploying the same model across diverse environments, from resource-constrained devices to high-performance systems, while maintaining a principled efficiency-accuracy tradeoff.

\section{Problem Setting}\label{sec:problem}

\paragraph{Notation.} We adopt bra-ket notation for brevity: $\ket{\phi}$ denotes the function $\phi(\cdot)$, and $\braket{\phi}{\phi'} = \int \phi(x) \phi'(x) \mu(dx)$ is the inner product of $\mcX$ with some underlying measure $\mu$. The rank-one operator $\ketbra{\phi}{\psi}:\mcH_1 \to \mcH_2$ acts as $(\ketbra{\phi}{\psi})\ket{h} := \braket{\psi}{h}\,\ket{\phi}$---the outer product. 

\subsection{Spectral Decomposition of Operators}\label{sec:kernel-eigendecomposition}

Let $\mcH_1$ and $\mcH_2$ be separable Hilbert spaces and $T:\mcH_1 \to \mcH_2$ be a Hilbert-Schmidt (or more generally, compact) operator. By the Hilbert-Schmidt theorem, $T$ admits a \emph{singular value decomposition} (SVD): 
\begin{equation}\label{eq:svd-T-braket}
    T = \sum_{i=1}^{\infty} s_i \ketbra{\phi^*_i}{\psi^*_i} \ . 
\end{equation} 
Here, $(s_i)$ is a non-increasing sequence of non-negative \emph{singular values}, and $(\phi^*_i)$ and $(\psi^*_i)$ are corresponding left and right \emph{singular functions}, forming orthonormal bases for $\mcH_2$ and $\mcH_1$, respectively. These satisfy $T \psi^*_i = s_i \phi^*_i$ and $T^* \phi^*_i = s_i \psi^*_i$, where $T^*$ is the adjoint of $T$. Note that we assume \textbf{all singular values are \emph{distinct}} throughout the paper, ensuring a \emph{unique} decomposition and well-defined singular functions.

\paragraph{EVD Case.} 
An important special case is the self-adjoint operator, \ie $T^*=T$ and  $\mcH_1 = \mcH_2=\mcH$. In this case, SVD reduces to \emph{eigenvalue decomposition} (EVD): $T = \sum_{i=1}^{\infty} \lambda_i \ketbra{\phi^*_i}{\phi^*_i}$, where left and right singular functions coincide ($\phi^*_i=\psi^*_i$) and are called \emph{eigenfunctions}. For a positive (semi)definite operator $T = A^*A$, the eigenvalues equal the squared singular values of $A$, that is $\lambda_i = s_i^2$.
For brevity, \textbf{we refer to singular functions as eigenfunctions when unambiguous}, which will become clearer shortly.

\paragraph{Extracting Top Eigenfunctions.} With the EVD/SVD in place, the central question is \emph{how to extract the eigenfunctions of a compact operator}. In practice, we typically care about the \emph{top-$d$} eigenfunctions associated with the largest eigenvalues $(\lambda_i)_{i\in[d]}$. Numerous algorithms exist for matrix decomposition---variational characterizations, power iteration, factorizations---but only some extend clearly to \emph{infinite-dimensional} settings. Specifically,  inner products and norms extend naturally to Hilbert spaces, making variational formulations based on them a natural choice. 

Before introducing specific methods, we briefly recall \emph{why} extracting the top eigenfunctions is so valuable.  Given a fixed $d<n$, the eigenfunctions associated with the top-$d$ eigenvalues provide a close approximation of the original operator $T$. Formally, let $T_d$ denote the top-$d$ truncated decomposition as $T_d \coloneq \sum_{i=1}^{d} s_i \ketbra{\phi^*_i}{\psi^*_i}$.
By Schmidt's seminal theorem~\citet{schmidt1907theorie}, $T_d$ is the best approximation of $T$ among all operators of rank at most $d$. For additional intuition, the analogous statement for finite dimensions was discovered much later: \citet{eckart1936approximation} established it for the Frobenius norm, and \citet{mirsky1960symmetric} generalized it to all unitarily invariant norms.

\begin{theorem}\label{thm:SEYM} We have the following results for compact operators~\citep{schmidt1907theorie} and finite-dimensional matrices~\citep{eckart1936approximation,mirsky1960symmetric}:
    \begin{enumerate}
        \item Let $T, T' : \mcH_1 \to \mcH_2$ be compact operators where $\rank(T') = d \leq \rank(T)$. Then, we have $\norm{T - T_d} \le \norm{T - T'}$ for Hilbert-Schmidt norm.
        \item Let $\bM, \bM' \in \R^{m \times n}$ be positive semidefinite matrices where $\rank(\bM')=d \leq \rank(\bM)= r \leq \min\{m,n\}$. Let $\bM=\sum_{i=1}^{r} s_i \bu_i \bv_i^{\top}$ be the singular value decomposition of $\bM$ and define $\bM_d=\sum_{i=1}^{d}s_i \bu_i \bv_i^{\top}$. Then $\norm{\bM-\bM_d}\leq \norm{\bM-\bM'}$ for any unitarily invariant norm.
    \end{enumerate}
\end{theorem}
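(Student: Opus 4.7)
My plan is to prove both parts through a single underlying inequality between singular values, then derive each part as a consequence of a different norm-monotonicity property. The unifying lemma is a Weyl-type inequality: for compact operators (or matrices) $A, B$, one has $s_{i+j-1}(A+B) \le s_i(A) + s_j(B)$. Granting this, I set $A = T - T'$ and $B = T'$, so that $T = A + B$. Because $T'$ has rank at most $d$, we have $s_{d+1}(T') = 0$. Taking $i = k$ and $j = d+1$ in the Weyl bound yields
\begin{equation*}
s_{k+d}(T) \;\le\; s_k(T-T') + s_{d+1}(T') \;=\; s_k(T - T').
\end{equation*}
Since the singular values of $T - T_d$ are exactly $(s_{d+1}(T), s_{d+2}(T), \dots)$ by construction, this says $s_k(T - T_d) \le s_k(T - T')$ for every $k \ge 1$. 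This is the key pointwise comparison from which both statements follow.

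For part 1, I use that the Hilbert--Schmidt norm is $\|S\|_{HS}^2 = \sum_k s_k(S)^2$, which is manifestly monotone in the singular values. Squaring and summing the pointwise inequality gives $\|T - T_d\|_{HS}^2 \le \|T - T'\|_{HS}^2$. For part 2, I invoke von Neumann's theorem, which identifies every unitarily invariant norm on matrices with a symmetric gauge function applied to the singular value vector; symmetric gauge functions are monotone in the absolute values of their arguments, so the same pointwise comparison yields $\|M - M_d\| \le \|M - M'\|$.

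The only non-routine step is establishing the Weyl singular value inequality. I would prove it via the Courant--Fischer-type min-max principle: for a compact operator $S$,
\begin{equation*}
s_k(S) \;=\; \min_{\substack{V \subset \mcH_1 \\ \dim V = k-1}} \; \max_{\substack{x \perp V \\ \|x\|=1}} \|S x\|.
\end{equation*}
Given subspaces $V_A$ of dimension $i-1$ and $V_B$ of dimension $j-1$ that realize (or nearly realize) the minima for $s_i(A)$ and $s_j(B)$, their sum $V_A + V_B$ has dimension at most $i+j-2$, and on its orthogonal complement we have $\|(A+B)x\| \le \|Ax\| + \|Bx\| \le s_i(A) + s_j(B)$, giving the bound on $s_{i+j-1}(A+B)$. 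This min-max characterization holds verbatim for compact operators on separable Hilbert spaces, so the argument covers part 1 as well.

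The main obstacle, and the step requiring most care, is the infinite-dimensional min-max characterization and the accompanying verification that rank$(T')=d$ indeed implies $s_{d+1}(T')=0$ in the compact-operator setting; once that is secured, the rest of the argument is clean algebra and an appeal to von Neumann's symmetric gauge theorem for the matrix half. An alternative route for part 1 that avoids Weyl entirely is to let $P$ be the orthogonal projection onto $\overline{\mathrm{range}(T')}$ (dimension $\le d$) and write $\|T - T'\|_{HS}^2 \ge \|(I - P) T\|_{HS}^2 \ge \sum_{k > d} s_k(T)^2$, using that $\mathrm{tr}(T^* P T) \le \sum_{k \le d} s_k(T)^2$ by the variational characterization of partial traces; I would mention this as a shortcut if I wanted to skip the Weyl machinery for the Hilbert--Schmidt case.
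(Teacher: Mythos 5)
The paper does not give its own proof of this theorem: it is stated as a classical result with citations to \citet{schmidt1907theorie} for the Hilbert--Schmidt case and to \citet{eckart1936approximation,mirsky1960symmetric} for the matrix case, and no appendix proof is supplied. Your argument is a correct and self-contained reconstruction of the standard (Mirsky-style) proof, so it fills a gap the paper leaves to the literature rather than replicating anything written there.

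A few remarks on the details, all of which check out. The Weyl inequality $s_{i+j-1}(A+B)\le s_i(A)+s_j(B)$ is exactly the right lemma, and your choice $A=T-T'$, $B=T'$ with $j=d+1$ immediately yields the pointwise comparison $s_k(T-T_d)=s_{d+k}(T)\le s_k(T-T')$, since $\mathrm{rank}(T')\le d$ forces $s_{d+1}(T')=0$ (for a compact operator, rank $\le d$ means the image is at most $d$-dimensional, so at most $d$ nonzero singular values). Your Courant--Fischer derivation of Weyl is fine, with one small refinement worth making explicit: when $\dim(V_A+V_B)<i+j-2$ you should enlarge it to a subspace $W$ of dimension exactly $i+j-2$ containing $V_A+V_B$; then $W^\perp\subseteq (V_A+V_B)^\perp$ and the sup only decreases, so the bound still applies at index $i+j-1$. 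For compact operators the min in the characterization is actually attained by the span of the top $k-1$ right singular vectors, so no limiting argument is needed. Part 1 then follows from monotonicity of $\sum_k s_k^2$, and part 2 from von Neumann's correspondence between unitarily invariant norms and symmetric gauge functions together with monotonicity of gauge functions on the positive orthant; both appeals are correct. Your alternative projection argument for part 1, namely $\|T-T'\|_{\mathrm{HS}}^2\ge\|(I-P)T\|_{\mathrm{HS}}^2\ge\sum_{k>d}s_k(T)^2$ with $P$ the orthogonal projection onto $\overline{\mathrm{range}(T')}$, is also sound and is closer in spirit to Schmidt's original 1907 argument; it buys you a shorter route to the Hilbert--Schmidt statement without any Weyl machinery, at the cost of not covering general unitarily invariant norms.
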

A norm is unitarily invariant if $\norm{U^\ast T V} = \norm{T}$ for all unitary $U : \mcH_2 \to \mcH_2$ and $V : \mcH_1 \to \mcH_1$. This family includes the common matrix norms such as Frobenius norm $\norm{T}_{\rm F}$ and the operator norm $\norm{T}_{\rm op}$, both special cases of Schatten $p$-norms $ \norm{T}_{S_p} \coloneq \left( \sum_{i=1}^{\infty} s_i^p \right)^{\frac{1}{p}}$, with $p=2$ and $p=\infty$, respectively. We discuss variational objectives for minimizing these norms in \Cref{sec:variational-results}.

\begin{remark}[Alternative objectives]\label{remark:d-metric}
   More generally, there could exist other functionals $\mcD(T,A)$ that are also minimized by $T_d$. We focus on the Hilbert-Schmidt and operator norms for their optimization tractability and relevance in prior work. Exploring broader classes of norms and functionals is an interesting direction for future work.
\end{remark}

\subsection{Application to Representation Learning}\label{sec:problem-representation-learning}

In representation learning, we typically learn representations of the input $x \in \mcX$ using its relationship with a \emph{context} variable $a \in \mcA$. We model this relationship with the joint distribution $\pplus(x,a)$. A prominent example is the CLIP model~\citep{CLIP}, where $\pplus(x,a)$ is the distribution of text-image pairs (see \citet[Sec. 2.1]{zhai2025contextures} for more examples). To formalize the relationship, we define the \textbf{contextual kernel} $\kxa: \mcX \times \mcA \to \Rnonn$ as the ratio of the joint distribution to the product of the marginal distributions of inputs, $\px$, and their contexts, $\pa$:
\begin{equation}\label{eq:kxa}  
    \kxa(x,a) = \frac{\pplus(x,a)}{\px(x) \pa(a)}   = \frac{\pplus(a|x)}{\pa(a)} = \frac{\pplus(x|a)}{\px(x)} \ . 
\end{equation}
This kernel induces an integral operator $\txa : \lap \to \lxp$ between the corresponding $L^2$ spaces over $\pa$ and $\px$. For any function $g \in \lap$, this operator simplifies to the conditional expectation:
\begin{equation}\label{eq:txa}
    \begin{aligned}
        (\txa g)(x) &= \int g(a) \kxa(x,a) \pa(a) da = \int g(a) \pplus(a|x) da = \E_A[g(a) | x] .
    \end{aligned}
\end{equation}
Crucially, computing $\txa$ only requires sampling $a \sim \pplus(\cdot | x)$, not explicit access to $\pplus(x,a)$. The spectral decomposition of the contextual kernel is given below.
\begin{lemma}\citep[Proposition 1]{zhai2024understanding}\label{lm:spectral-decomposition}
    Assume that $\txa$ is a compact operator. The spectral decomposition of $\kxa$ is given by $\kxa(x,a) = \frac{\pplus(x,a)}{\px(x) \pa(a)} = \sum_{i=1}^{\infty} s_i \phi^*_i(x) \psi^*_i(a)$, where $\phi_1 \equiv 1$ and $\psi_1 \equiv 1$ are constant functions with singular value $s_1=1$.
\end{lemma}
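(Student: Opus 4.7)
The plan is to invoke the SVD of the compact operator $\txa$, match it against the integral representation given in the definition of $\txa$, and then identify the top singular triple by direct inspection.

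First, by compactness of $\txa$ together with the Hilbert--Schmidt theorem cited in \Cref{sec:kernel-eigendecomposition}, we may write $\txa = \sum_{i=1}^{\infty} s_i \ketbra{\phi^*_i}{\psi^*_i}$ with $(\phi^*_i)$, $(\psi^*_i)$ orthonormal systems in $\lxp$ and $\lap$, respectively. For any $g\in\lap$, equating the two representations of $\txa g$,
\[
\int \kxa(x,a)\, g(a)\,\pa(a)\,da \;=\; (\txa g)(x) \;=\; \sum_{i=1}^{\infty} s_i\,\phi^*_i(x)\,\braket{\psi^*_i}{g} \;=\; \int \Big[\sum_{i=1}^{\infty} s_i\,\phi^*_i(x)\,\psi^*_i(a)\Big] g(a)\,\pa(a)\,da,
\]
and letting $g$ range over a dense subset of $\lap$ yields the expansion $\kxa(x,a) = \sum_i s_i\,\phi^*_i(x)\,\psi^*_i(a)$ in $L^2(\px\otimes\pa)$.

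Next I would identify the top singular triple explicitly. Plugging the constant function $\mathbf{1}\in\lap$ into $\txa$ gives $(\txa\mathbf{1})(x) = \int \pplus(a\mid x)\,da = 1$, so $(\mathbf{1},\mathbf{1})$ is a singular pair with singular value $1$. To confirm this is the \emph{largest}, I would apply the conditional Jensen inequality:
\[
\|\txa g\|_{\lxp}^{2} \;=\; \int \big(\E[g(A)\mid X=x]\big)^{2} \px(x)\,dx \;\leq\; \int \E\!\big[g(A)^{2}\mid X=x\big]\,\px(x)\,dx \;=\; \|g\|_{\lap}^{2},
\]
so $\|\txa\|_{\mathrm{op}}\le 1$, and hence $s_1=1$. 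The standing assumption that singular values are distinct then pins down the constants as the (unique up to sign) top left/right singular functions.

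The only delicate point is the sense of equality in the kernel expansion: it holds in $L^2(\px\otimes\pa)$ (equivalently, pointwise almost everywhere), not in the stronger Mercer/pointwise-everywhere sense, which would demand additional regularity on $\kxa$ and is not claimed here. Everything else is a direct unwinding of the definition of $\txa$ combined with the one-line operator-norm bound above.
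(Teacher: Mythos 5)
The paper does not prove this lemma itself; it imports it as Proposition~1 from \citet{zhai2024understanding}. Your proof is therefore a genuine reconstruction, and it is essentially correct, but two points deserve tightening.

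First, the step ``Plugging the constant function $\mathbf{1}\in\lap$ into $\txa$ gives $(\txa\mathbf{1})(x)=1$, so $(\mathbf{1},\mathbf{1})$ is a singular pair with singular value $1$'' is not a valid inference on its own: $T\psi=\phi$ with $\|\psi\|=\|\phi\|=1$ does not make $(\phi,\psi)$ a singular pair (consider $T=\begin{psmallmatrix}1&1\\0&1\end{psmallmatrix}$, which fixes $e_1$ but has no singular value equal to $1$). You either need the adjoint check as well, $(\tax\mathbf{1}_{\mcX})(a)=\int \pplus(x\mid a)\,dx=1$, which together with $\txa\mathbf{1}_{\mcA}=\mathbf{1}_{\mcX}$ gives $\txa^*\txa\,\mathbf{1}_{\mcA}=\mathbf{1}_{\mcA}$ and hence a bona fide singular pair, or you need to first establish $\|\txa\|_{\mathrm{op}}=1$ and then argue that equality $\|\txa\mathbf{1}_{\mcA}\|=\|\mathbf{1}_{\mcA}\|$ forces $\mathbf{1}_{\mcA}$ into the top singular subspace, which is one-dimensional by the paper's standing distinct-singular-value assumption. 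You supply the operator-norm bound via conditional Jensen in the very next sentence, so the argument is complete once the order of inferences is clarified; the presentation just asserts the conclusion a sentence early.

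Second, the passage from the operator identity $\txa g=\sum_i s_i\braket{\psi^*_i}{g}\,\phi^*_i$ to the kernel identity $\kxa=\sum_i s_i\phi^*_i\psi^*_i$ holding in $L^2(\px\otimes\pa)$ implicitly requires $\txa$ to be Hilbert--Schmidt (equivalently $\kxa\in L^2(\px\otimes\pa)$), not merely compact; a compact operator need not have a square-integrable kernel, and the bilinear series need not converge in $L^2(\px\otimes\pa)$. The lemma as stated only assumes compactness, so if you want the $L^2$ equality you should either strengthen the hypothesis or phrase the convergence more carefully (e.g., for $\px$-a.e.\ $x$, $\kxa(x,\cdot)=\sum_i s_i\phi^*_i(x)\psi^*_i(\cdot)$ in $\lap$). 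This is exactly the ``delicate point'' you flag at the end, and you are right that no pointwise Mercer-type statement is being claimed; just be aware the $L^2(\px\otimes\pa)$ reading also needs the Hilbert--Schmidt strengthening.
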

In practice, we use two $d$-dimensional encoders, $\Phi := [\phi_1,\dots,\phi_d]: \mcX \mapsto \R^d$ for inputs and $\Psi:= [\psi_1,\dots,\psi_d]: \mcA \mapsto \R^d$ for their contexts. These encoders are trained to approximate the contextual kernel with their inner product: $\kxa(x,a) \approx \Phi(x)^\top \Psi(a) = \sum_{i=1}^d \phi_i(x)\psi_i(a)$.

\paragraph{EVD Case.}
A common special case involves learning a \emph{single} encoder for one space. For instance, in vision self-supervised learning, context $a$ is an augmentation of the input image $x$, and a single encoder is used~\citep{chen2020simple}. This setting is modeled by the \emph{positive-pair kernel} $\kaa : \mcA \times \mcA \to \Rnonn$~\citep{johnson2023contrastive}: 
\begin{equation}
\kaa(a,a') = \frac{\pplus(a,a')}{\pa(a) \pa(a')} = \frac{\int \pplus(a|x) \pplus(a'|x) d \px(x) }  {\pa(a) \pa(a')},
\end{equation}
which captures the probability of context instances $(a,a')$ sampled conditioned on the same input. The operator $\taa$ of $\kaa$ is self-adjoint and given by $\taa = \tax \txa$, where $\tax$ is the adjoint of $\txa$. Therefore, EVD of $\kaa$ is $ \kaa(a,a') =  \sum_{i=1}^{\infty} s^2_i \psi^*_i(a) \psi^*_i(a')$, and single encoder $\Psi$ suffices to approximate the kernel, \ie $\kaa(a,a') \approx \Psi(a)^\top\Psi(a') = \sum_{i=1}^d \psi_i(a)\psi_i(a')$. Dual kernel $\kxx$ can be defined similarly~\citep{zhai2024understanding}. Note that eigenfunctions of $\kxx$ and $\kaa$ are left and right singular functions of $\kxa$, respectively.

\boxedtextrelaxed{\textbf{Objective.} Our goal is to extract the top-$d$ eigenpairs of the contextual kernel $\kxa$ (or $\txa$) induced by the interaction between $X$ and $A$.}

\subsection{Desiderata}\label{sec:desiderata}

This paper provides a unified perspective on \emph{eigenfunction extraction for representation learning, aiming to establish a foundation for identifiable and ordered representations.} The study of eigenfunctions, long rooted in kernel methods~\citep{scholkopf2002learning}, has recently shifted to neural approximations to meet modern scalability demands. Despite growing interest, findings remain fragmented and often lack a clear machine learning focus; crucial distinctions (\eg learning \emph{exact} eigenfunctions versus only their linear span) are underexplored. To this end, we first formalize key desiderata. 

\subsubsection{Compatibility with Contextual Kernel}\label{sec:desiderata-contextual-kernel}

Our perspective links representation learning and eigenfunction extraction, focusing on methods that learn the eigenfunctions of the contextual kernel $\kxa$.  In machine learning practice, we have only \emph{sample access} to the joint $P^+(x,a)$, \eg drawing $x \sim \px$ and $a \sim P^+(\cdot | x)$. Algorithms must therefore recover the top-$d$ eigenfunctions of $\kxa$ given only these samples.

Neural approaches are well-suited for this task as stochastic mini-batch training overcomes the scalability limits of classical methods (\eg $\mathcal{O}(n^3)$ for kernel PCA), thereby scaling to large, high-dimensional data. A recurring challenge, however, is optimization instability arising from biased gradient estimates, where post-hoc remedies can introduce issues~\citep{pfau2019spectral}. Thus, a method should be inherently compatible with stable, theoretically sound stochastic optimization. 

\boxedtextrelaxed{\textbf{Desideratum:} Extract eigenfunctions using only sample access to $P^+(x,a)$, with efficient mini-batch processing, and theoretically sound optimization.}

\subsubsection{Exact Eigendecomposition}\label{sec:desiderata-eigendecomposition}

Recovering the top-$d$ \textbf{eigenspace} (the linear span of the top-$d$ eigenfunctions) is a simpler problem than recovering the exact eigenfunctions themselves. We say that an encoder $\Phi = [\phi_1,\cdots,\phi_d]$ \textbf{extracts the eigenspace} of $\kxa$, if $\sspan \oset{\phi_1,\cdots,\phi_d} =  \sspan \oset{\phi^*_1,\cdots,\phi^*_d}$ where $\oset{\phi^*_1,\cdots,\phi^*_d}$ are the true top-$d$ eigenfunctions. 

While learning the eigenspace may suffice for some applications (\ie when a linear probe atop $\Phi$ is good enough), an exact eigendecomposition yields \emph{ordered} eigenfunctions and their eigenvalues. These eigenvalues can serve as importance scores and enable adaptive-dimensional representations, which is increasingly valuable in billion-scale search settings, \eg for efficient inference as shown in Matryoshka representation learning~\citep{kusupati2022matryoshka}. 

Our goal is to develop a general framework capable of extracting these precise eigenfunctions and eigenvalues. 
We say that encoder $\Phi$ \textbf{extracts the ordered eigenfunctions} of $\kxa$ if $\phi_i \equiv \pm \phi^*_i$ for all $i \in [d]$, where $(\phi^*_i)_{i\in[d]}$ are top-$d$ left singular functions of $\txa$. This also implies recovering the singular values as $\norm{\tax \phi_i}_\px = |s_i | \norm{\psi^*_i}_\pa = s_i$. 
The following simple example illustrates the importance of knowing the exact eigenfunctions.
\begin{example}\label{example:ridge}
    Consider a ridge regression model built on encoder $\Phi$: $\min_{\bw,b} \; \E \big[\paren{\bw^{\top} \Phi(X) + b - Y}^2\big]  + \beta \norm{\bw}_2^2$.  This is suboptimal because its uniform penalty treats all features equally, whereas features from less important eigenfunctions (with smaller eigenvalues) should be penalized more heavily. Learning only the eigenspace prevents such principled regularization. In contrast, knowing the exact eigenfunctions enables constructing an optimally weighted encoder, $\tdPhi = [\sqrt{\lambda_1} \phi^*_1, \cdots, \sqrt{\lambda_d} \phi^*_d]$~\citep{zhai2024stkr}. 
\end{example}
\boxedtextrelaxed{\textbf{Desideratum:} An algorithm must extract the \emph{eigenfunctions}, not just their collective \emph{eigenspace}.}

\subsubsection{Unconstrained Optimization}\label{sec:desiderata-unconstrained}

Some eigenfunction extraction methods optimize under orthonormality constraints. While methods like projected gradient descent can directly handle constrained optimization, implementing the projection step is challenging for neural networks with millions of parameters.

\boxedtextrelaxed{\textbf{Desideratum:} The learning objective should be an \emph{unconstrained} optimization problem.}

\subsubsection{Computational Efficiency}\label{sec:desiderata-efficiency}

A core premise of neural approaches is their scalability. Hence, computational efficiency is a key desideratum, with two primary components.
\begin{description}[leftmargin=1em]
    \item[Joint Optimization.] Sequentially learning eigenfunctions---finding $\phi^*_1$, then using it to find $\phi^*_2$, and so on---is highly inefficient for large $d$ and can yield suboptimal solutions. Instead, one should \emph{jointly learn} all $d$ eigenfunctions in a single process. 
    \item[Joint Parameterization.]  Optimizing eigenfunctions jointly does not specify their parameterization, \eg some methods use separate neural networks for each eigenfunction~\citep{deng2022neural}, which is atypical in machine learning. Instead, \emph{parameter sharing} (\eg a single network with $d$ heads) is essential for efficiency.
\end{description}
\boxedtextrelaxed{\textbf{Desideratum:} An algorithm must support \emph{joint learning and parameterization} of the eigenfunctions.}

\section{Eigenfunction Extraction Framework}\label{sec:framework}

In this section, we build the framework for satisfying the desiderata outlined in \Cref{sec:desiderata}. We present a set of general methods (building blocks) for constructing eigenfunction extraction algorithms, where each block is designed to fulfill one or more of those desiderata. We begin by defining a generic optimization problem.
\begin{definition}[Base Optimization Problem]\label{def:base-optimization}
    Let $\mcO_1,\ldots,\mcO_d$ be a sequence of minimization problems, where each $\mcO_j$ has an objective of the form:
    \begin{equation}\label{eq:base-optimization-problem}
    \min_{(\phi_1,\ldots,\phi_j) \in \mcC_j} \mcL_j(\phi_1,\ldots,\phi_j) \ , 
    \end{equation}
    where $\mcC_j$ is the feasible set and each function $\phi_i$ lies in a shared function space $\mcF$, \eg $\lxp$. 
\end{definition}

\subsection{From Eigenspace to Eigenfunctions}\label{sec:eigenspace-to-eigenfunctions}

Extracting the exact eigenfunctions is the main goal, but learning the eigenspace is typically easier. Consequently, exact-extraction methods build on \emph{base} methods that recover the eigenspace. We formalize such a base method as an \emph{eigenspace extractor}, and outline three generic techniques to obtain exact eigenfunctions from it: two via a joint training principle, \emph{nesting}, and one via a post-hoc transformation, \emph{Rayleigh–Ritz}. 
\begin{definition}[Eigenspace Extractor]
    A base optimization problem $\mcO$ is an \textbf{eigenspace extractor} for a compact self-adjoint operator $T: \mcH \to \mcH$ if, for each $j \in [d]$, every minimizer $(\phi_1,\dots,\phi_j)$ of $\mcO_j$ spans the top-$j$ eigenspace of $T$:
        \begin{equation}
            \sspan(\phi_1,\ldots,\phi_j) = \sspan(\phi^*_1,\ldots,\phi^*_j) \ ,
        \end{equation}
        where $\phi^*_i$ is the $i^\textrm{th}$ eigenfunction of $T$.\footnote{To streamline exposition, we use the terminology of eigenfunctions; the definitions readily extend to singular functions by replacing $(\phi_j)_{j\in[d]}$ with pairs $\{(\phi_j,\psi_j)\}_{j \in [d]}$ in the base problem. For a non-self-adjoint $T$, an eigenspace extractor means $\sspan(\phi_1,\dots,\phi_j)=\sspan(\phi^*_1,\dots,\phi^*_j)$ and $\sspan(\psi_1,\dots,\psi_j)=\sspan(\psi^*_1,\dots,\psi^*_j)$.}
\end{definition}
To extract \emph{individual} eigenfunctions from an eigenspace extractor, we first formalize nesting.
\begin{definition}[Orthogonal Nested Minimizers]\label{def:nested-minimizer}
    A sequence of base problems $\mcO_1,\ldots,\mcO_d$ is said to have \emph{orthogonal nested minimizers} if (i) for all minimizers $(\hat \phi_1, \cdots \hat \phi_{j-1}) \in \argmin \mcO_{j-1}$, there exists $\hat \phi_j \not \equiv 0$ such that $(\hat \phi_1, \cdots \hat \phi_j) \in \argmin \mcO_{j}$, and (ii) all such $\hat \phi_j$ are orthogonal to preceeding functions, \ie $\braket{\hat\phi_j}{\hat\phi_i}_{\mcH} = 0$ for all $1 \leq i < j$.
\end{definition}
This condition ensures identifiability of eigenfunctions under nested eigenspace extraction: we can always find a new base function orthogonal to all previously extracted ones, as we will show shortly.

\subsubsection{Sequential Nesting}\label{sec:sequential-nesting}

The first approach \emph{sequential nesting} builds on the insight that a sequence of eigenspace extractors $\mcO_1,\dots,\mcO_d$ with \textbf{orthogonal} nested minimizers identifies the individual eigenfunctions.
If $\mcO_1$ admits a solution $\hat \phi_1$, then it must coincide with the first eigenfunction up to a scaling factor, \ie $\hat \phi_1 = c_1 \phi^*_1$. Then by \Cref{def:nested-minimizer}, there exists an orthogonal base function $\hat \phi_2$ satisfying $(\hat \phi_1, \hat \phi_2) \in \argmin_{\phi_1,\phi_2} \mcO_2( \phi_1,\phi_2)$, which can be found by solving $\hat \phi_2 \in \argmin_{\phi_2} \mcO_2(\hat \phi_1,\phi_2)$. Since $\sspan(\hat \phi_1,\hat \phi_2) = \sspan(\phi^*_1,\phi^*_2)$, it follows that $\hat \phi_2 = c_2 \phi^*_2$. 
This leads to a sequential optimization process where each new eigenfunction is found by solving $ \hat \phi_j \in \argmin_{\phi_j \in \mcF} \mcO_j(\hat \phi_1,\ldots,\hat \phi_{j-1},\phi_j)$.

\begin{theorem}(Proof in \Cref{app:proof-sequential-nesting})\label{thm:sequential-nesting-general}
    Assume that the base problems $\mcO_1,\dots,\mcO_d$ (i) are eigenspace extractors for a compact operator $T$, and (ii) admit orthogonal nested minimizers. Then, sequentially solving 
    \begin{equation}
        \tilde\mcO_j : \min_{\phi_j } \mcL_j\big( \hat \phi_1, \ldots, \hat \phi_{j-1}, \phi_j \big) \ , 
    \end{equation} 
    such that $(\hat \phi_1, \ldots, \hat \phi_{j-1}, \phi_j) \in \mcC_j$ and where $\hat \phi_i$ is the solution from step $i < j$, recovers the $j$-th eigenfunction of $T$ up to a scaling factor $c_j$, \ie $\hat \phi_j = c_j \phi^*_j$. 
\end{theorem}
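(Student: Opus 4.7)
The plan is a straightforward induction on $j \in [d]$, using the two hypotheses in tandem: condition (i) pins down the linear span of the first $j$ iterates to be the top-$j$ eigenspace, while condition (ii) forces the newly introduced iterate to be orthogonal to its predecessors, which (given distinct singular values) uniquely identifies it up to sign. For the base case $j=1$, since $\mcO_1$ is an eigenspace extractor, any minimizer $\hat\phi_1$ satisfies $\sspan(\hat\phi_1) = \sspan(\phi^*_1)$, hence $\hat\phi_1 = c_1 \phi^*_1$ for some $c_1 \neq 0$.

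For the inductive step, assume $\hat\phi_i = c_i \phi^*_i$ with $c_i \neq 0$ for $i < j$, produced by sequentially solving $\tilde\mcO_1,\ldots,\tilde\mcO_{j-1}$. Let $\tilde\phi_j$ be any minimizer of $\tilde\mcO_j$. First I would observe that $(\hat\phi_1,\ldots,\hat\phi_{j-1})$ is itself a minimizer of $\mcO_{j-1}$, since by the inductive hypothesis it spans the top-$(j-1)$ eigenspace, which by condition (i) is exactly what the minimizers of $\mcO_{j-1}$ must span. By condition (ii), there exists some $\hat\phi_j \not\equiv 0$ completing this tuple to a minimizer of $\mcO_j$. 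Consequently $\mcL_j(\hat\phi_1,\ldots,\hat\phi_{j-1},\tilde\phi_j) \leq \mcL_j(\hat\phi_1,\ldots,\hat\phi_{j-1},\hat\phi_j) = \min\mcO_j$, since $\tilde\phi_j$ minimizes $\mcL_j$ with the first $j-1$ coordinates frozen, so $(\hat\phi_1,\ldots,\hat\phi_{j-1},\tilde\phi_j) \in \argmin\mcO_j$ as well. Condition (ii) then applies to $\tilde\phi_j$, giving $\braket{\tilde\phi_j}{\hat\phi_i}_{\mcH} = 0$ for $i < j$; since $\hat\phi_i = c_i \phi^*_i$ with $c_i \neq 0$, this is the same as $\tilde\phi_j \perp \phi^*_i$ for $i < j$. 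Condition (i) applied to this joint minimizer gives $\sspan(\hat\phi_1,\ldots,\hat\phi_{j-1},\tilde\phi_j) = \sspan(\phi^*_1,\ldots,\phi^*_j)$, so $\tilde\phi_j \in \sspan(\phi^*_1,\ldots,\phi^*_j)$. Combined with orthogonality to the first $j-1$ eigenfunctions, $\tilde\phi_j = c_j \phi^*_j$ for some scalar $c_j$, and $c_j \neq 0$ because otherwise the span would drop to a $(j-1)$-dimensional subspace, contradicting condition (i).

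The main obstacle I anticipate is the gap between \emph{sequential} and \emph{joint} optimality: condition (ii) guarantees orthogonality only for jointly optimal tuples, whereas the algorithm operates on $\tilde\mcO_j$ with the first $j-1$ arguments frozen. Bridging this gap is the observation that the minimum value of the restricted problem $\tilde\mcO_j$ coincides with $\min\mcO_j$ because there \emph{exists} a joint minimizer of $\mcO_j$ with the prescribed prefix $(\hat\phi_1,\ldots,\hat\phi_{j-1})$; hence every restricted minimizer automatically becomes a joint minimizer and inherits condition (ii). Beyond this, the argument is bookkeeping: tracking nonzero scalings through the induction and invoking the distinct-singular-values assumption from \Cref{sec:kernel-eigendecomposition} to ensure that any element of $\sspan(\phi^*_1,\ldots,\phi^*_j)$ orthogonal to $\phi^*_1,\ldots,\phi^*_{j-1}$ is a scalar multiple of $\phi^*_j$.
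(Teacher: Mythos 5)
Your proof takes the same route as the paper: induction on $j$, with the key step being the bridge from sequential to joint optimality via existence of a nested completion, followed by applying the eigenspace extractor property and orthogonality. You correctly identify and resolve the sequential-vs-joint gap in your final paragraph.

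There is, however, one flaw in your inductive step. You claim ``$(\hat\phi_1,\ldots,\hat\phi_{j-1})$ is itself a minimizer of $\mcO_{j-1}$, since by the inductive hypothesis it spans the top-$(j-1)$ eigenspace, which by condition (i) is exactly what the minimizers of $\mcO_{j-1}$ must span.'' This applies condition (i) in the wrong direction: being an eigenspace extractor says that every minimizer spans the top-$(j-1)$ eigenspace, not that every tuple spanning the eigenspace is a minimizer. (Concretely, for LoRA with $T$ non-self-adjoint, $\hat\phi_i = a_i\phi_i^*$ and $\hat\psi_i = b_i\psi_i^*$ are minimizers only when $a_ib_i = s_i$, so the span is a strictly weaker condition; for constrained RQ, arbitrary scalings won't even be feasible.) The fix is simple and is what the paper does: strengthen the inductive hypothesis to include not only $\hat\phi_i = c_i\phi_i^*$ but also $(\hat\phi_1,\ldots,\hat\phi_i) \in \argmin\mcO_i$ for each $i<j$. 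Your own argument already establishes this stronger conclusion at each step (the restricted minimizer $\tilde\phi_j$ completes the prefix to a joint minimizer of $\mcO_j$), so it closes the induction cleanly; you just need to carry it forward rather than re-deriving it from the span condition.
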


\begin{remark}
    The term ``sequential nesting'' was introduced by~\cite{ryu2024operator}, but the concept builds on sequential methods in prior work~\citep{gemp2021eigengame,bengio2004learning,deng2022neural}. \Cref{thm:sequential-nesting-general} generalizes these approaches, casting them as special cases. While sequential nesting violates the desideratum of joint training \emph{and} parameterization, it serves as a crucial starting point. \Cref{sec:sequential-to-joint} outlines a standard lift to joint optimization. \end{remark}

\subsubsection{Joint Nesting}\label{sec:joint-nesting-general}

The second approach, \emph{joint nesting}, learns all top-$d$ eigenfunctions simultaneously. The key idea is that true eigenfunctions can be characterized as global minimizers of a single objective function, formed by a weighted sum of the base objectives $(\mcO_j)_{j \in [d]}$. 

The closest prior work by \cite{ryu2024operator} sums low-rank approximation objectives $(\mcO_j)_{j \in [d]}$ to jointly learn the top-$d$ eigenfunctions. We show that this is a special case of general joint nesting,  which extracts eigenfunctions from \emph{any} eigenspace extractor.
Formally, for positive coefficients $w_1,\dots,w_d$, define the \textbf{joint nested optimization} problem $\mcO_{\jnt}$:
\begin{equation}\label{eq:def-joint-nesting}
   \minimize_{(\phi_1,\ldots, \phi_d) \in \mcC_{[d]}} \mcL_{\jnt} \coloneq \sum_{i=1}^d w_i \mcL_i(\phi_1,\ldots,\phi_i)\ ,
\end{equation}
where $\mcC_{[d]}:= \bigcap_{i \in [d]}\mcC_i$ is intersection of feasible sets. 

\begin{theorem}(Proof in \Cref{app:proof-joint-nesting})\label{thm:joint-nesting-general} 
    If $\mcO_1,\dots,\mcO_d$ are eigenspace extractors for $T$ with orthogonal nested minimizers, then for any positive weights $(w_i)_{i\in[d]}$, solving $\mcO_{\jnt}$ recovers the eigenfunctions up to scaling, \ie its minimizers are of the form $(c_1 \phi^*_1,\ldots, c_d\phi^*_d)$, where $(\phi^*_i)_{i\in[d]}$ are the top-$d$ eigenfunctions of $T$. 
\end{theorem}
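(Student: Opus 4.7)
The plan is to decouple the joint objective into its individual components via a lower-bound argument, and then use the nested eigenspace and orthogonality structure to pin down each eigenfunction individually. The critical observation is that with strictly positive weights, the weighted sum $\mcL_{\jnt}$ is minimized if and only if every $\mcL_i$ is simultaneously at its own minimum, provided simultaneous minimization is achievable; the eigenspace extractor property then constrains the span of each prefix, and property~(ii) of \Cref{def:nested-minimizer} will supply the extra orthogonality needed to peel off individual eigenfunctions one at a time.

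First, I would establish the lower bound $\mcL_{\jnt}(\phi_1,\ldots,\phi_d) \geq \sum_{i=1}^d w_i m_i$ for every feasible tuple, where $m_i := \min \mcO_i$. This follows from $\mcC_{[d]} = \bigcap_i \mcC_i$, which guarantees $(\phi_1,\ldots,\phi_i) \in \mcC_i$ and hence $\mcL_i(\phi_1,\ldots,\phi_i) \geq m_i$ for every $i$. Next, I would show this bound is tight by constructing a witness: iterating property~(i) of \Cref{def:nested-minimizer} produces $(\hat\phi_1,\ldots,\hat\phi_d)$ with each prefix in $\argmin \mcO_j$, so $\mcL_{\jnt}(\hat\phi_1,\ldots,\hat\phi_d) = \sum_i w_i m_i$. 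Since $w_i > 0$ and each $\mcL_i \geq m_i$, any minimizer $(\phi_1,\ldots,\phi_d)$ of $\mcL_{\jnt}$ must saturate every term, i.e., $(\phi_1,\ldots,\phi_j) \in \argmin \mcO_j$ for all $j \in [d]$.

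The final step is an induction on $j$ to conclude $\phi_j = c_j \phi^*_j$. The base case $j=1$ is immediate from the eigenspace extractor property: $\sspan(\phi_1) = \sspan(\phi^*_1)$ implies $\phi_1 = c_1 \phi^*_1$. For the inductive step, the eigenspace extractor property yields $\phi_j \in \sspan(\phi^*_1,\ldots,\phi^*_j)$, while property~(ii) of \Cref{def:nested-minimizer}, applied to $(\phi_1,\ldots,\phi_{j-1}) \in \argmin \mcO_{j-1}$ extended by $\phi_j$, forces $\phi_j$ to be orthogonal to each $\phi_i$ for $i<j$. By the inductive hypothesis, this orthogonality is to $\sspan(\phi^*_1,\ldots,\phi^*_{j-1})$; combined with $\phi_j \in \sspan(\phi^*_1,\ldots,\phi^*_j)$, this pins down $\phi_j = c_j \phi^*_j$. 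Nontriviality ($c_j \neq 0$) follows because $\sspan(\phi_1,\ldots,\phi_j) = \sspan(\phi^*_1,\ldots,\phi^*_j)$ has dimension $j$, so $\phi_j$ cannot vanish.

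The main obstacle I anticipate is formally justifying the application of property~(ii) to an arbitrary joint minimizer in the induction step. The definition of orthogonal nested minimizers is phrased in terms of extensions of minimizers of $\mcO_{j-1}$, so I must verify that any joint minimizer $(\phi_1,\ldots,\phi_d)$ naturally fits this framing: the lower-bound saturation argument shows $(\phi_1,\ldots,\phi_{j-1}) \in \argmin \mcO_{j-1}$ and $(\phi_1,\ldots,\phi_j) \in \argmin \mcO_j$, so the component $\phi_j$ qualifies as an orthogonal extension to which property~(ii) applies. A secondary care point is ensuring the feasible sets $\mcC_i$ are compatible with the intersection $\mcC_{[d]}$ so that the lower bound can be interpreted componentwise; this is essentially built into the definition of $\mcC_{[d]}$ and should require only a brief remark.
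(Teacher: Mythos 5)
Your proposal is correct and follows essentially the same route as the paper. You establish that a joint minimizer must saturate each $\mcO_j$ individually (via the explicit lower bound $\sum_i w_i m_i$; the paper phrases this as a direct comparison against an orthogonal nested minimizer, but the argument is identical), and then peel off individual eigenfunctions via the eigenspace extractor and orthogonality properties — the paper packages that last induction as an invocation of its sequential-nesting theorem whereas you redo it inline, but the reasoning is the same.
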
 

\begin{remark}\label{remark:joint-nesting}
    Jointly optimizing nested objectives is a broadly used principle. Matryoshka representation learning~\citep{kusupati2022matryoshka} sums objectives across embedding lengths, a technique later used to produce efficient variable-length embeddings~\citep{openai2024embedding,vera2025embeddinggemma}. Additional applications of adaptive representations are discussed in \Cref{app:related_work_adaptive_representations}. 
\end{remark}

\subsubsection{Rayleigh-Ritz Method}\label{sec:rayleigh-ritz}

The third technique generalizes the classic Rayleigh-Ritz method for computing eigenvectors~\citep{Ritz+1909+1+61,PhysRev.43.830,trefethen1997numerical}. In the kernel setting, it serves as a post-processing step: given any orthonormal basis of the top eigenspace, project $T$ onto that subspace to obtain a finite matrix problem. The eigenvectors of this matrix then specify the correct linear combinations of basis functions to form the true eigenfunctions, as stated formally below.

\begin{theorem}\label{thm:rayleigh-ritz}(Proof in \Cref{app:proof-rayleigh-ritz})
    Suppose that $(\phi_i)_{i \in [d]}$ form an orthonormal basis for the top-$d$ eigenspace of a self-adjoint compact operator $T$. Let $\bB \in \R^{d \times d}$ be a matrix with entries $\bB_{ij} \vcentcolon = \braket{\phi_i}{T \phi_j}$. Denote the eigenpairs of $\bB$ by $\{(\lambda_i,\by_i) : i \in [d]\}$, where $(\lambda_i)_{i \in [d]}$ are in non-increasing order and $\{\by_i \in \R^d : i \in [d]\}$ are the corresponding eigenvectors. Then, the top $i^\textrm{th}$ eigenfunction of $T$ is given by $\phi^*_i = \sum_{j=1}^d (\by_i)_{j} \phi_j$, with the corresponding eigenvalue $\lambda^*_i=\lambda_i$.
\end{theorem}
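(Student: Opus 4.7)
The plan is to recognize that the top-$d$ eigenspace $V_d \coloneq \sspan(\phi^*_1,\ldots,\phi^*_d) = \sspan(\phi_1,\ldots,\phi_d)$ is invariant under $T$, since $T\phi^*_i = \lambda^*_i \phi^*_i \in V_d$ for each $i$. Consequently $T|_{V_d}$ is a self-adjoint operator on a $d$-dimensional subspace, and the matrix $\bB$ is precisely its representation in the orthonormal basis $(\phi_j)_{j \in [d]}$. In particular, since $T\phi_j \in V_d$, expanding in the orthonormal basis yields the key identity
\begin{equation*}
    T\phi_j = \sum_{k=1}^d \braket{\phi_k}{T\phi_j}\phi_k = \sum_{k=1}^d \bB_{kj}\phi_k.
\end{equation*}

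Given an eigenpair $(\lambda_i, \by_i)$ of $\bB$, I would then verify by direct computation that $\tilde \phi_i \coloneq \sum_j (\by_i)_j \phi_j$ is an eigenfunction of $T$ with eigenvalue $\lambda_i$:
\begin{equation*}
    T \tilde\phi_i = \sum_j (\by_i)_j\, T\phi_j = \sum_{j,k} (\by_i)_j \bB_{kj}\phi_k = \sum_k (\bB \by_i)_k \phi_k = \lambda_i \sum_k (\by_i)_k \phi_k = \lambda_i \tilde\phi_i.
\end{equation*}
Self-adjointness of $T$ makes $\bB$ symmetric, $\bB_{ij} = \braket{\phi_i}{T\phi_j} = \braket{T\phi_i}{\phi_j} = \bB_{ji}$, so $\bB$ admits $d$ real orthonormal eigenvectors and this construction produces $d$ eigenfunctions of $T$ lying inside $V_d$.

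Finally, I would match these to the true top-$d$ eigenfunctions of $T$. Because $T|_{V_d}$ is self-adjoint on a $d$-dimensional space and $V_d$ is precisely the span of $\phi^*_1,\ldots,\phi^*_d$, the spectrum of $T|_{V_d}$ (hence of $\bB$) is exactly the multiset $\{\lambda^*_1,\ldots,\lambda^*_d\}$. Sorting both spectra in non-increasing order gives $\lambda_i = \lambda^*_i$. Under the paper's standing assumption that all eigenvalues are distinct, each eigenspace of $\bB$ is one-dimensional, so the unit eigenvector $\by_i$ is determined up to sign, and therefore $\tilde\phi_i = \pm \phi^*_i$, matching the claimed formula.

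The main obstacle is essentially conceptual rather than computational: one must first identify $\bB$ as the matrix of $T|_{V_d}$ in the orthonormal basis $(\phi_j)$, which hinges on the $T$-invariance of $V_d$ and the fact that $(\phi_j)$ spans it. Once that observation is in hand, the eigenfunction identity reduces to a one-line unwinding of matrix-vector multiplication, and the ordering and uniqueness of the identification follow immediately from self-adjointness of $\bB$ together with the distinct-eigenvalue hypothesis.
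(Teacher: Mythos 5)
Your proof is correct and takes essentially the same approach as the paper: both identify $\bB$ as the matrix of $T$ on the $T$-invariant subspace $V_d$ in the orthonormal basis $(\phi_j)$ and unwind the eigenvector equation $\bB\by_i = \lambda_i\by_i$ into $T\tilde\phi_i = \lambda_i\tilde\phi_i$. If anything, your version is slightly cleaner: the paper's intermediate claim that $\bigl(\sum_{i=1}^d\ketbra{\phi_i}{\phi_i}\bigr)T = T$ is only valid after restricting to $V_d$ (the left side is the rank-$d$ operator $T_d$, not $T$), whereas your use of the invariance $T\phi_j = \sum_k \bB_{kj}\phi_k$ sidesteps that issue, and you also spell out the final matching $\lambda_i = \lambda_i^*$ and the sign ambiguity under the distinct-eigenvalue assumption, which the paper leaves implicit.
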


The key advantage of the Rayleigh-Ritz method over the joint-nesting is that it circumvents the need to retrain the eigenspace extractor, as it operates through a post-hoc EVD procedure. This makes it especially useful when working with state-of-the-art embedding models that are not open-source, since eigenpairs can still be estimated by applying the Rayleigh-Ritz method directly through model inference.  Detailed algorithms are provided in \Cref{app:rayleigh_ritz_implementation}.

\subsection{Sequential to Joint Optimization}\label{sec:sequential-to-joint}

A sequential optimization process (as in sequential nesting) can be converted to a single end-to-end trainable objective. The key is to form a single loss function by summing the sequential objectives while strategically \emph{blocking gradients} to enforce the sequential dependency. Deep learning frameworks implement this with a stop-gradient ${\rm sg}(\cdot)$ operation, that forwards its input but treats it as constant during backpropagation, thereby \emph{freezing} the parameters that produced it~\citep{deng2022neural,pfau2019spectral}. The following general theorem formalizes this technique.
\begin{theorem}\label{thm:sequential-to-joint}
    Let $(\hat \phi_{\theta_1}, \ldots, \hat \phi_{\theta_d})$ be an orthogonal nested minimizer of unconstrained objectives $\mcL_{i}(\phi_{\theta_1},\ldots,\phi_{\theta_i})$. This minimizer can be found by solving the joint optimization problem 
    \begin{equation}
        \min_{\theta_1,\dots,\theta_d} \sum_{i=1}^d \mcL_i \big({\rm sg}(\phi_{\theta_1}), \dots, {\rm sg}(\phi_{\theta_{i-1}}), \phi_{\theta_i} \big) \ .
    \end{equation}
\end{theorem}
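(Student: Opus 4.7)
The plan is to exploit the semantics of the stop-gradient operator: ${\rm sg}$ is the identity on forward evaluations but has zero Jacobian under backpropagation. Thus the joint loss $\mcL_{\rm sg}(\theta_1, \ldots, \theta_d) \vcentcolon= \sum_{i=1}^d \mcL_i({\rm sg}(\phi_{\theta_1}), \ldots, {\rm sg}(\phi_{\theta_{i-1}}), \phi_{\theta_i})$ coincides \emph{in value} with the plain nested sum $\sum_{i=1}^d \mcL_i(\phi_{\theta_1}, \ldots, \phi_{\theta_i})$, but its parameter gradients are different: the sg factors block any gradient flow through the preceding encoder arguments. This distinction is what turns a coupled joint optimization into a genuinely sequential one.

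Next I would decouple the partial derivatives. For any fixed $k \in [d]$, the parameter $\theta_k$ appears in the $i$-th summand of $\mcL_{\rm sg}$ only when $i \geq k$. When $i > k$, it enters exclusively as an argument of an ${\rm sg}$ block, so the chain rule forces the contribution to vanish. Only the $i=k$ term is nontrivial, in which $\phi_{\theta_k}$ is the unique unfrozen argument. Hence
\[
\nabla_{\theta_k} \mcL_{\rm sg}(\theta_1, \ldots, \theta_d) \;=\; \nabla_{\theta_k}\, \mcL_k(\phi_{\theta_1}, \ldots, \phi_{\theta_{k-1}}, \phi_{\theta_k}),
\]
where on the right-hand side the first $k-1$ arguments are treated as constants. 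This is precisely the first-order stationarity condition of the sequential subproblem $\min_{\theta_k} \mcL_k(\hat\phi_{\theta_1}, \ldots, \hat\phi_{\theta_{k-1}}, \phi_{\theta_k})$ that defines the orthogonal nested minimizer.

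Finally, I would verify that the sequential nested minimizer $(\hat\phi_{\theta_1}, \ldots, \hat\phi_{\theta_d})$ is indeed a minimizer of $\mcL_{\rm sg}$. By the hypothesis of \Cref{def:nested-minimizer}, each $\hat\phi_{\theta_k}$ minimizes $\mcL_k(\hat\phi_{\theta_1}, \ldots, \hat\phi_{\theta_{k-1}}, \cdot)$, so every summand simultaneously attains its conditional minimum at this point; since the summands in $\mcL_{\rm sg}$ only ever see preceding $\phi_{\theta_j}$ through ${\rm sg}$, no summand can be further reduced by varying $\theta_k$ while holding the frozen arguments fixed. Combined with the decoupling above, this shows $(\hat\phi_{\theta_1}, \ldots, \hat\phi_{\theta_d})$ is a stationary minimizer of $\mcL_{\rm sg}$, and conversely any stationary point of $\mcL_{\rm sg}$ satisfies the same family of decoupled conditions and therefore is an orthogonal nested minimizer (modulo the sign/scale ambiguity inherent to the sequential problem).

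The main obstacle is conceptual rather than computational: one must carefully distinguish the joint sg objective from the plain joint sum $\sum_i \mcL_i$, because the latter would allow later subproblems to perturb earlier encoders in exchange for a lower total loss, thereby destroying the ordering guaranteed by sequential nesting. The proof thus rests entirely on translating the operational semantics of ${\rm sg}$ into an algebraic decoupling of the KKT conditions, and then invoking the orthogonal-nested-minimizer hypothesis to ensure the decoupled system admits the intended sequential solution.
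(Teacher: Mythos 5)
Your proposal takes essentially the same route as the paper: observe that $\mathrm{sg}$ zeroes out any backward-pass contribution from later summands to earlier parameters, so (under disjoint parameterization) $\nabla_{\theta_k}\mcL_{\rm sg}$ reduces to the gradient of the $k$-th sequential subproblem alone, and the joint optimization decouples into the sequential procedure covered by \Cref{thm:sequential-nesting-general}. One small caution: your final ``conversely'' overreaches --- a stationary point of $\mcL_{\rm sg}$ satisfies the decoupled first-order conditions, but \Cref{thm:sequential-nesting-general} needs each $\hat\phi_{\theta_k}$ to be a \emph{global} minimizer of its sequential subproblem, which stationarity alone does not give; the theorem only claims that the nested minimizer \emph{can be found} this way, not that every critical point of $\mcL_{\rm sg}$ is one, and the paper's proof is correspondingly one-directional.
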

\begin{proof}
    The proof follows from the analysis of the gradient flow. The operator ${\rm sg}(\cdot)$ ensures that the loss term $\mcL_i$ does not provide a gradient to $\theta_j$ for $j<i$. Assuming that the parameters $(\theta_1,\dots,\theta_k)$ are disjoint, the gradient for $\theta_j$ is calculated solely from the term $\mcL_j$. This effectively decouples the optimization into the desired sequential structure, where each $\theta_i$ is optimized with the preceding solutions $\phi_{\theta_1},\dots,\phi_{\theta_{i-1}}$ treated as fixed inputs. Therefore, the theorem statement follows from \Cref{thm:sequential-nesting-general}.
\end{proof}

While this technique enables joint optimization, a significant drawback is the need for \emph{disjoint parameters} per eigenfunction, violating joint parameterization. Since parameter sharing is crucial for efficiency, methods like joint nesting are typically more suitable.

\subsection{Unconstrained Reformulation}\label{sec:constrained-to-unconstrained}

Constrained problems, such as $\min_{\Phi} \mcL(\Phi)$ subject to constraints $\{\mcC_j(\Phi) \leq 0 : j \in [m]\}$, are often converted to unconstrained ones via a Lagrangian~\citep[Chapter 5]{boyd2014convex}. This involves introducing positive \emph{Lagrangian multipliers} $\mu_j$ and penalty functions $\rho_j$ to create a new objective with penalty terms: $\mcL_{\rm unconstr.} = \mcL(\Phi) + \sum_{j=1}^{m} \mu_j \rho_j(\mcC_j(\Phi))$. In practice, the multipliers $(\mu_j)_{j \in [m]}$ become tunable positive hyperparameters that control the violations of each constraint.

\paragraph{Sample Splitting.} A common challenge arises when the constraints are defined at the population level, \ie 
$\mcC(\Phi) = \E_X \mcL_{\mcC}(\Phi(X))$, where $\mcC$ (\eg variance) is expressed as an expectation of losses over samples. For the variational objective to remain tractable, we require $\rho(\mcC(\Phi))$ to be decomposable across samples. One practical choice is to set $\rho$ as the squared error, which allows for unbiased estimation of the penalty functions. Specifically, we split each batch into two independent subsets and estimate the expectations separately; the independence of the two subsets gives 
\begin{equation}
    \big(\E_X \mcL_{\mcC}(\Phi(X))\big)^2 
    = \E_X \mcL_{\mcC}(\Phi(X)) \cdot \E_{X'} \mcL_{\mcC}(\Phi(X')) \ . 
\end{equation}

\paragraph{Bilevel Optimization.} One principled approach for addressing the problem of biased gradient estimates is to rewrite the problem as a bilevel optimization. The inner level estimates the expectation $ \E_X [\mcL_{\mcC}(\Phi(X))]$ over a batch, and the outer level applies $\rho$ to this estimate. The Spectral Inference Networks (SpIN) framework~\citep{pfau2019spectral} uses this strategy for learning eigenfunctions.

\section{Variational Objectives}\label{sec:variational-results}

With the general framework in hand, this section analyzes two objective categories for eigenfunction extraction: Low-Rank Approximation (LoRA) and Rayleigh Quotient (RQ). We show how both methods fit our framework for extracting the eigenspace and recovering ordered eigenfunctions, connect to contrastive and non-contrastive learning, and enable our modular approach to extract ordered features with importance scores. A broader discussion of relevant work appears in \Cref{app:related-work}. 

\subsection{Low-rank Approximation Methods}\label{sec:lora}

LoRA provides an \emph{unconstrained} route to eigenspace extraction. It builds on the observation that the best rank-$d$ approximation of a continuous kernel $K$ comes from truncating its Mercer expansion after the first $d$ eigenpairs, and the eigenfunctions can be recovered by minimizing the squared approximation error. This idea goes back to~\cite{bengio2004learning}, who proposed a sequential method resembling nesting.

\subsubsection{Eigenspace Extraction via LoRA}\label{sec:lora-eigenspace-extractor}

LoRA objectives can be used to learn the \emph{singular} functions of a compact linear operator $T: \mcH_1 \to \mcH_2$. Recall that the SVD of $T$ is $T = \sum_{i=1}^{\infty} s_i \ketbra{\phi^*_i}{\psi^*_i}$. The base LoRA objective is given by minimizing the Hilbert-Schmidt norm (Frobenius norm in finite-dimensional Euclidean space) of the approximation error: 
\begin{equation}\label{eq:lora-base}
    \minimize_{\psi_i \in \mcH_1, \ \phi_i \in \mcH_2} \norm[\Big]{T - \sum_{i=1}^{d} \ketbra{\phi_i}{\psi_i}}_{\rm HS}^2 \ .
\end{equation}
As stated in \Cref{thm:SEYM}, the solution of this is given by the truncated SVD of $T$~\citep{schmidt1907theorie}, that is, a minimizer $(\hat\Phi, \hat\Psi)$ of \eqref{eq:lora-base} satisfies $\sum_{i=1}^{d} \ketbra{\hat\phi_i}{\hat\psi_i} = \sum_{i=1}^{d} s_i \ketbra{\phi^*_i}{\psi^*_i}$. This classical result establishes LoRA as an ideal building block for our framework.
\begin{theorem}(Proof in \Cref{app:proof-lora-eigenspace-extractor})\label{thm:lora-eigenspace-extractor}
    The LoRA objective in \Cref{eq:lora-base} is an eigenspace extractor for $T$, \ie $\sspan(\hat\phi_1,\dots,\hat\phi_d) = \sspan(\phi^*_1,\dots,\phi^*_d)$ and $\sspan(\hat\psi_1,\dots,\hat\psi_d) = \sspan(\psi^*_1,\dots,\psi^*_d)$. Furthermore, it admits orthogonal nested minimizers, such that $\hat \phi_i = a_i \phi^*_i$ and $\hat \psi_i = b_i \psi^*_i$ for some nonzero scalars $a_i, b_i$ where $a_i b_i = s_i$. \looseness=-1
\end{theorem}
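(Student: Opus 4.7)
The plan is to leverage Schmidt's theorem (the first part of \Cref{thm:SEYM}), which asserts that the truncated SVD $T_d = \sum_{i=1}^{d} s_i \ketbra{\phi^*_i}{\psi^*_i}$ is the best rank-$d$ approximation of $T$ in Hilbert--Schmidt norm, and is unique under the distinct singular values assumption stated in \Cref{sec:kernel-eigendecomposition}. Given any minimizer $(\hat\phi_1,\ldots,\hat\phi_d,\hat\psi_1,\ldots,\hat\psi_d)$ of \Cref{eq:lora-base}, the operator $\sum_{i=1}^d \ketbra{\hat\phi_i}{\hat\psi_i}$ has rank at most $d$ and attains the Schmidt minimum, and so must equal $T_d$.

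From the operator identity $\sum_{i=1}^d \ketbra{\hat\phi_i}{\hat\psi_i} = T_d$, I would read off ranges on both sides. The range of $T_d$ equals $\sspan(\phi^*_1,\ldots,\phi^*_d)$, while the range of the left-hand side is contained in $\sspan(\hat\phi_1,\ldots,\hat\phi_d)$. Since $T_d$ has rank exactly $d$ when $s_d > 0$, the $\hat\phi_i$ are forced to be linearly independent, and the two spans coincide. The symmetric argument applied to the adjoint $T_d^*$ yields $\sspan(\hat\psi_1,\ldots,\hat\psi_d) = \sspan(\psi^*_1,\ldots,\psi^*_d)$, establishing the eigenspace-extractor property.

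For the orthogonal nested minimizer property, I would fix any $(\hat\phi_1,\ldots,\hat\phi_{j-1},\hat\psi_1,\ldots,\hat\psi_{j-1}) \in \argmin \mcO_{j-1}$; by the first step, $\sum_{i=1}^{j-1}\ketbra{\hat\phi_i}{\hat\psi_i} = T_{j-1}$. Extending to $\mcO_j$ requires $(\hat\phi_j,\hat\psi_j)$ satisfying $\sum_{i=1}^j \ketbra{\hat\phi_i}{\hat\psi_i} = T_j$, which forces $\ketbra{\hat\phi_j}{\hat\psi_j} = s_j\ketbra{\phi^*_j}{\psi^*_j}$. Since a rank-one operator admits a decomposition unique up to reciprocal scaling, this yields $\hat\phi_j = a_j \phi^*_j$ and $\hat\psi_j = b_j \psi^*_j$ with $a_j b_j = s_j$. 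Orthogonality is then immediate: each previous $\hat\phi_i$ lies in $\sspan(\phi^*_1,\ldots,\phi^*_{j-1})$, which is orthogonal to $\phi^*_j$ by orthonormality of the true singular functions (and analogously on the right). This same construction proves the scaling identity $a_i b_i = s_i$ in the ``furthermore'' clause.

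The main obstacle I anticipate is the disciplined use of the distinct singular values assumption. If any of the top $d$ singular values coincided, the best rank-$d$ approximation would no longer be unique (one could rotate within a degenerate singular subspace), and the individual rank-one factors $\ketbra{\phi^*_j}{\psi^*_j}$ would not be identifiable; I would invoke distinctness at two separate points, first to pin down $T_d$ itself, and then to guarantee that after peeling off $T_{j-1}$ the remaining term $s_j\ketbra{\phi^*_j}{\psi^*_j}$ is an identifiable rank-one operator. A minor secondary subtlety is verifying $s_d > 0$ (equivalently $\rank(T) \geq d$) so that $T_d$ has rank exactly $d$ and the range inclusion upgrades to an equality of spans.
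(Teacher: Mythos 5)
Your proposal is correct and follows essentially the same route as the paper's proof: Schmidt's theorem (\Cref{thm:SEYM}(i)) pins down $\sum_{i}\ketbra{\hat\phi_i}{\hat\psi_i}=T_d$, the range/dimension argument then yields the eigenspace-extractor property, and the nested property follows by peeling off $T_{j-1}$ to isolate the rank-one factor $s_j\ketbra{\phi^*_j}{\psi^*_j}$ and identifying it up to reciprocal scaling. The only minor difference is presentational: you subtract the operator identities $L_j=T_j$ and $L_{j-1}=T_{j-1}$ directly, starting from an arbitrary minimizer of $\mcO_{j-1}$, whereas the paper re-invokes Schmidt's theorem for the residual $T-T_{d-1}$ under an induction hypothesis that the first $d-1$ factors are already in canonical form; your version is, if anything, a slightly more faithful check of the ``for all minimizers'' clause in \Cref{def:nested-minimizer}.
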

\noindent\textit{Proof Sketch.} We first prove the eigenspace property using the fact that the ranges of the solution operator $L_d \coloneq \sum_{i=1}^{d} \ketbra{\hat\phi_i}{\hat\psi_i}$ and the truncated operator $T_d \coloneq \sum_{i=1}^{d} s_i \ketbra{\phi^*_i}{\psi^*_i}$ must be identical. Then, we prove the orthogonal nested minimizer property by induction: assuming the first $d-1$ components are correct, the problem reduces to finding the best rank-$1$ approximation of the residual operator, $T-T_{d-1}$, whose unique solution yields the next singular function pair. 

Importantly, \Cref{thm:lora-eigenspace-extractor} provides a direct pathway to satisfying our desiderata. Since LoRA is an unconstrained eigenspace extractor with orthogonal nested minimizers, any of the sequential nesting (\Cref{thm:sequential-nesting-general}), joint nesting (\Cref{thm:joint-nesting-general}), or Rayleigh-Ritz method (\Cref{thm:rayleigh-ritz}) can be used to extract exact ordered singular functions in conjunction with the base LoRA objective.

For gradient-based optimization, we need a more convenient form of the LoRA objective in~\Cref{eq:lora-base}, which we can derive by expanding the squared Hilbert-Schmidt norm. 
Following \cite[Lemma 3.1]{ryu2024operator}, the approximation error decomposes into terms involving $T$ and inner products of the candidate functions, and subsequently, we define the \emph{low-rank approximation} (LoRA) objective $\mcL_d \coloneq \mcL_{\lora}(\Phi, \Psi)$ as 
\begin{equation}\label{eq:lora}
    \mcL_d \coloneq -2\sum_{i=1}^{d} \braket{\phi_i}{T \psi_i}_{\mcH_2} + \sum_{i=1}^{d} \sum_{j=1}^{d} \braket{\phi_i}{\phi_j}_{\mcH_2} \braket{\psi_i}{\psi_j}_{\mcH_1} \ .
\end{equation}
Minimizing $\mcL_d$ is equivalent to minimizing the original objective in \Cref{eq:lora-base}, as the two objectives only differ by the constant $\norm[\big]{T}_{\rm HS}^2$ (proof in \Cref{app:proof-lora-error-HS}). Therefore, $\mcL_d$ is also an eigenspace extractor with orthogonal nested minimizers, and it can be used in conjunction with the methods in \Cref{sec:eigenspace-to-eigenfunctions} to extract the \emph{ordered} top-$d$ singular functions.

\subsubsection{Optimization for Contextual Kernel}\label{sec:lora-contextual-kernel}

Next, we specialize the generic LoRA objective to the contextual kernel $\kxa = \frac{P^+(x,a)}{\px(x)\pa(a)}$. In this case, we have $T=\txa$, $\mcH_1 = \lap$, $\mcH_2 = \lxp$, and encoders $\Phi : \mcX \to \R^d$ and $\Psi : \mcA \to \R^d$. Substituting $\kxa$, the first term of $\mcL_d$ becomes an expectation over the joint $P^+(x,a)$:
\begin{equation}
    \sum_{i=1}^{d} \braket{\phi_i}{ \txa \psi_i} = \E_{(x,a) \sim P^+}\left[ \Phi(x)^{\top} \Psi(a) \right] \ .
\end{equation}
By expanding the inner products, the second term becomes an expectation over the product of marginals:
\begin{equation}
    \sum_{i=1}^{d} \sum_{j=1}^{d} \braket{\phi_i}{\phi_j} \braket{\psi_i}{\psi_j} = \E_{x \sim \px, a \sim \pa} \big[ \big(\Phi(x)^\top \Psi(a) \big)^2 \big] \ .
\end{equation}
Combining these yields the final variational objective:
\begin{equation}\label{eq:lora-contextual-final}
    \begin{aligned}
        \mcL_{\lora}(\Phi, \Psi) &= -2 \E_{x,a \sim P(x,a)}[ \Phi(x)^\top \Psi(a)] + \E_{x \sim \px, a \sim \pa} \big[ \big(\Phi(x)^\top \Psi(a) \big)^2 \big]  \ , 
    \end{aligned}
\end{equation}
which we can minimize using stochastic gradient descent. The full derivation is provided in \Cref{app:lora-contextual-kernel}.

\paragraph{Connection to contrastive learning.} Contrastive learning is a dominant self-supervised learning paradigm. A common objective, the Spectral Contrastive Loss (SCL)~\citep{haochen2021provable} is defined as 
\begin{equation}\label{eq:spectral-contrastive-loss}
    \begin{aligned}
        \mcL_{\rm SCL}:= - \E_{x \sim \px} &\E_{a, a^+ \sim P^+(a|x)}[\Psi(a)^\top \Psi(a^+)] + \frac{1}{2}\E_{a, a^{-} \sim \pa} \big[ \big(\Psi(a)^\top \Psi(a^{-}) \big)^2 \big]  \ , 
    \end{aligned}
\end{equation}
where $(a,a^+)$ are positive pairs (from the same $x$) and $(a,a^{-})$ are negative pairs. This is exactly the LoRA objective in \eqref{eq:lora} for $T = \taa$. Thus, the well-known result that SCL extracts the eigenspace of $\taa$~\citep{zhai2025contextures,johnson2023contrastive} follows directly from \Cref{thm:lora-eigenspace-extractor}, since SCL is a special case of $\mcL_{\lora}$. More broadly, the framework extends contrastive learning to non-self-adjoint operators for extracting subspaces of both left and right singular functions. 

\subsection{Rayleigh Quotient Optimization}\label{sec:rayleigh}

The second approach for neural eigenfunction extraction is based on the Rayleigh quotient (RQ)~\citep{horn2012matrix}. Classically used for finding the top eigenpairs of a matrix, the RQ principle also extends to eigenfunction extraction.
Specifically, for a compact linear operator $T: \mcH_1 \to \mcH_2$, we seek orthonormal functions that maximize the summed Rayleigh quotients $\braket{\phi_i}{T \psi_i}_{\mcH_2}$, \ie
\begin{equation}\label{eq:rq-svd}
\min_{\substack{\psi_i \in \mcH_1 \\ \phi_i \in \mcH_2}}
    -\sum_{i=1}^{d} \braket{\phi_i}{ T  \psi_i}_{\mcH_2}
    \;\;\st\;\;
    \begin{aligned}
        \braket{\phi_i}{\phi_j}_{\mcH_2} &= \delta_{ij} \\
        \braket{\psi_i}{\psi_j}_{\mcH_1} &= \delta_{ij}
    \end{aligned} \ ,
\end{equation}
where $\delta_{ij}=1$ if $i =j$ and $\delta_{ij}=0$, otherwise. The solution of this problem establishes RQ as an eigenspace extractor with orthogonal nested minimizers.

\begin{theorem}(Proof in \Cref{app:proof-rq-eigenspace-extractor})\label{thm:rq-eigenspace-extractor}
    The RQ objective in \Cref{eq:rq-svd} is an eigenspace extractor for $T$, \ie $\sspan(\hat\phi_1,\dots,\hat\phi_d) = \sspan(\phi^*_1,\dots,\phi^*_d)$ and $\sspan(\hat\psi_1,\dots,\hat\psi_d) = \sspan(\psi^*_1,\dots,\psi^*_d)$. Furthermore, it admits orthogonal nested minimizers, $\hat \phi_i = c_i \phi^*_i$ and $\hat \psi_i = c_i \psi^*_i$ for $c_i = \pm 1$.
\end{theorem}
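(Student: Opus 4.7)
The plan is to reformulate the constrained problem as a trace maximization, invoke a Ky--Fan / von Neumann trace inequality to lower bound the objective by $-\sum_{i=1}^d s_i$, and then read off both the eigenspace and nested-minimizer properties from the equality case. Concretely, I would stack the candidate singular functions into isometries $U=[\phi_1,\dots,\phi_d]\colon\R^d\to\mcH_2$ and $V=[\psi_1,\dots,\psi_d]\colon\R^d\to\mcH_1$, so that the orthonormality constraints become exactly $U^*U=V^*V=I_d$ and the negated objective rewrites as $\operatorname{tr}(U^*TV)$. The compression $C \coloneq U^*TV$ is a $d\times d$ matrix whose singular values satisfy $s_i(C)\le s_i(T)$ (by min--max/interlacing under restriction to a subspace), so $\operatorname{tr}(C)\le \sum_{i=1}^d s_i(C)\le\sum_{i=1}^d s_i(T)$. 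This matches the claimed minimum value and certifies that every feasible point achieves objective at least $-\sum_{i=1}^d s_i$.

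For the eigenspace extraction, I would analyze when equality is attained. Saturating $\sum_i s_i(C)\le \sum_{i=1}^d s_i(T)$ (a Ky--Fan-type inequality) forces the ranges of $U$ and $V$ to contain, and hence equal (by dimension count), the top-$d$ left and right singular subspaces of $T$; saturating the trace-vs-nuclear-norm bound $\operatorname{tr}(C)\le\sum_i s_i(C)$ additionally forces $C$ to be positive semidefinite. Together, under the assumption of distinct singular values, this establishes $\sspan(\hat\phi_i)_{i\le d}=\sspan(\phi^*_i)_{i\le d}$ and $\sspan(\hat\psi_i)_{i\le d}=\sspan(\psi^*_i)_{i\le d}$, with the individual vectors determined only up to a shared orthogonal rotation within these top-$d$ subspaces.

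For orthogonal nested minimizers, I would proceed by induction on $j$. Suppose $(\hat\phi_i,\hat\psi_i)_{i<j}$ minimizes $\mcO_{j-1}$; by the previous step their spans coincide with the top-$(j-1)$ singular subspaces, so any feasible continuation $(\phi_j,\psi_j)$ that is orthogonal to all prior $\hat\phi_i,\hat\psi_i$ must lie in $\sspan(\phi^*_k)_{k\ge j}$ and $\sspan(\psi^*_k)_{k\ge j}$ respectively. Writing $\phi_j=\sum_{k\ge j}\alpha_k\phi^*_k$ and $\psi_j=\sum_{k\ge j}\beta_k\psi^*_k$ with $\|\alpha\|=\|\beta\|=1$, the incremental contribution is $\sum_{k\ge j}s_k\alpha_k\beta_k \le s_j\sum_{k\ge j}|\alpha_k\beta_k|\le s_j$ by Cauchy--Schwarz. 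The strict gap $s_{j+1}<s_j$ (distinct singular values) together with the equality case of Cauchy--Schwarz then forces $\alpha_k=\beta_k=0$ for all $k>j$ and $\alpha_j=\beta_j=c_j\in\{\pm1\}$ (the common sign coming from the requirement $\alpha_j\beta_j\ge 0$). This yields $\hat\phi_j=c_j\phi^*_j$ and $\hat\psi_j=c_j\psi^*_j$, giving both orthogonality to previously selected functions and the ordered recovery, so that \Cref{thm:sequential-nesting-general} applies.

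The main obstacle I anticipate is correctly handling the non-uniqueness at the base-problem level: any simultaneous orthogonal rotation of $(U,V)$ within the top-$d$ subspace remains a minimizer, so the individual $\hat\phi_i,\hat\psi_i$ are only pinned down by the additional orthogonality structure imposed in the nested problems, not by the constrained optimization itself. Related care is needed to ensure that the sign $c_j$ is shared between $\hat\phi_j$ and $\hat\psi_j$ (this follows from the positivity condition in the equality case of Cauchy--Schwarz, which would fail if $\alpha_j$ and $\beta_j$ had opposite signs), and to justify the infinite-dimensional tail arguments via compactness of $T$ so that $s_k\to 0$ and the SVD expansions converge absolutely.
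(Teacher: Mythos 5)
Your proposal is correct and reaches the same conclusion, but by a genuinely different route for the eigenspace step. The paper's proof is coordinate-based: it expands $\phi_i,\psi_i$ in the singular-function bases, applies the AM--GM inequality to bound the objective by $\sum_j s_j c_j$ with $c_j=\sum_i(\alpha_{ij}^2+\beta_{ij}^2)/2$, and then uses Bessel's inequality plus $\sum_j c_j=d$ to conclude that the maximum forces $c_j=1$ for $j\le d$ and $\alpha_{ij}=\beta_{ij}$ (with the AM--GM equality case). You instead package the candidates into isometries $U,V$, invoke the Ky--Fan/von Neumann trace bound $\operatorname{tr}(U^*TV)\le\sum_{i\le d}s_i(U^*TV)\le\sum_{i\le d}s_i(T)$, and read off the eigenspace property from the equality case. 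Both are valid; the paper's argument is more elementary and self-contained, whereas yours is shorter once you admit the compression inequality and its equality case (which is worth citing or proving, since for rectangular compressions this is less textbook-standard than the Hermitian version). For the nested-minimizer part your approaches converge: both proceed by induction, with you bounding the increment $\sum_{k\ge j}s_k\alpha_k\beta_k$ via Cauchy--Schwarz and $s_k\le s_j$, whereas the paper delegates to the already-established eigenspace property plus orthogonality. One place where your argument is actually slightly tighter than the paper's: by tracking the equality conditions of Cauchy--Schwarz you explicitly derive that the sign $c_j$ is shared between $\hat\phi_j$ and $\hat\psi_j$ (since $\alpha_j\beta_j\ge0$ is forced); the paper's induction step concludes $\hat\phi_d=\pm\phi^*_d$, $\hat\psi_d=\pm\psi^*_d$ from span and normalization alone, which does not by itself pin the shared sign---that extra fact requires noting that opposite signs would give $\braket{\phi_d}{T\psi_d}=-s_d$ rather than the maximal $+s_d$. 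Your treatment handles this cleanly.
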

\Cref{thm:rq-eigenspace-extractor} implies that we can apply the methods in \Cref{sec:eigenspace-to-eigenfunctions} to extract exact eigenfunctions using the RQ objective, similarly to \Cref{thm:lora-eigenspace-extractor} for LoRA.

\paragraph{EVD Case.} 
We can specialize RQ to the EVD case, which is the setting considered by prior work~\citep{pfau2019spectral,deng2022neural}. Specifically, for a self-adjoint $T: \mcH \to \mcH$, we can optimize one set of functions: $\min_{\phi_i \in \mcH}  -\sum_{i=1}^{d} \braket{\phi_i}{T \phi_i}_{\mcH}$ such that $\braket{\phi_i}{\phi_j}_{\mcH} = \delta_{ij}$.
Additionally, we establish another objective that extracts eigenfunctions (up to permutation) directly. 

\begin{theorem}[Proof in \Cref{app:proof-rq-direct-eigenfunctions}]\label{thm:rq-direct-eigenfunctions}
    The following objective (\Cref{eq:rq-direct-eigenfunctions-appendix}) obtains a permutation of the true eigenfunctions directly, \ie the minimizers $(\hat\phi_i)_{i\in [d]}$ satisfy $\hat \phi_i = \pm \phi^*_{\sigma(i)}$ for some permutation $\sigma:[d]\mapsto[d]$.
    \begin{equation}\label{eq:rq-direct-eigenfunctions}
        \min_{\substack{\phi_i \in \mcH}} - \sum_{i=1}^{d} \braket{\phi_i}{ T  \phi_i}_{\mcH}\;\;\st\;\;
        \begin{aligned}
            \braket{\phi_i}{\phi_i}_{\mcH} &= 1, \; \forall i. \\
            \braket{\phi_i}{T \phi_j}_{\mcH} &= 0, \;\forall i \neq j. \\
        \end{aligned}
    \end{equation}
\end{theorem}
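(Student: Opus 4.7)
The plan is to derive the KKT conditions of the constrained problem and show that any global minimizer must consist of orthonormal eigenfunctions of $T$ drawn from the top-$d$ eigenpairs. I form the Lagrangian with real multipliers $\mu_k$ for the unit-norm constraints and symmetric multipliers $\nu_{kl} = \nu_{lk}$ for the $T$-orthogonality constraints,
\begin{equation}
\mcL = -\sum_{k=1}^d \braket{\phi_k}{T\phi_k}_\mcH + \sum_{k=1}^d \mu_k \left(\braket{\phi_k}{\phi_k}_\mcH - 1\right) + \sum_{k \neq l} \nu_{kl}\, \braket{\phi_k}{T\phi_l}_\mcH \, .
\end{equation}
Taking the Fréchet derivative with respect to each $\phi_k$ and using self-adjointness of $T$, the stationary condition becomes
\begin{equation}\label{eq:rq-direct-kkt-sketch}
T\phi_k = \mu_k \phi_k + \sum_{j \neq k} \nu_{kj}\, T\phi_j \, .
\end{equation}

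The next step is to extract structural information from \eqref{eq:rq-direct-kkt-sketch} by pairing it with the $\phi_i$'s. Taking the inner product with $\phi_k$ and using the unit-norm and $T$-orthogonality constraints yields $\mu_k = \braket{\phi_k}{T\phi_k}_\mcH$, identifying each Lagrange multiplier with the Rayleigh quotient at $\phi_k$. Pairing with $\phi_l$ for $l \neq k$ produces $-\mu_k \braket{\phi_l}{\phi_k}_\mcH = \nu_{kl}\mu_l$, and swapping the roles of $k$ and $l$ gives $-\mu_l \braket{\phi_l}{\phi_k}_\mcH = \nu_{kl}\mu_k$. Eliminating $\nu_{kl}$ yields $(\mu_k^2 - \mu_l^2)\braket{\phi_l}{\phi_k}_\mcH = 0$, so whenever $|\mu_k| \neq |\mu_l|$ I obtain $\phi_k \perp \phi_l$ and $\nu_{kl}=0$. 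Substituting back into \eqref{eq:rq-direct-kkt-sketch} collapses the stationary condition to $T\phi_k = \mu_k \phi_k$, so each $\phi_k$ is a unit-norm eigenfunction of $T$ with eigenvalue $\mu_k$. Maximizing $\sum_k \mu_k$ over orthonormal eigenfunction tuples, combined with the paper's distinct-eigenvalue assumption, then forces $\{\mu_k\}_{k\in [d]}$ to coincide with the top-$d$ eigenvalues $\{\lambda_i\}_{i \in [d]}$, yielding $\hat\phi_i = \pm \phi^*_{\sigma(i)}$ for some permutation $\sigma:[d]\to[d]$. A direct check confirms that any such tuple is feasible and achieves the optimal value $\sum_{i=1}^d \lambda_i$.

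The main obstacle I anticipate is handling the degenerate regime where some multipliers satisfy $\mu_k^2 = \mu_l^2$ at a stationary point, since there the orthogonality conclusion above does not follow locally. I plan to resolve this by a global-vs-local comparison: any such degenerate configuration either contradicts the distinct-eigenvalue assumption of the paper or strictly undershoots $\sum_{i=1}^d \lambda_i$, so cannot be a global minimizer. A secondary technical point is verifying constraint qualification so that Lagrange multipliers exist in the first place; this should follow from linear independence of $\{\phi_k\}$ together with $\{T\phi_k\}$ at any candidate optimum, which is itself a consequence of the near-eigenfunction structure that emerges from the KKT analysis.
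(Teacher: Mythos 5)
Your KKT derivation is, up to a change of coordinates, exactly the paper's: the stationarity equation $T\phi_k = \mu_k\phi_k + \sum_{j\neq k}\nu_{kj}T\phi_j$, the identification $\mu_k = \braket{\phi_k}{T\phi_k}$, and the relation $(\mu_k^2-\mu_l^2)\braket{\phi_l}{\phi_k}=0$ all appear verbatim in the appendix (with $\phi_i$ expanded as $\sum_j\alpha_{ij}\phi^*_j$, $\beta_{ij}=\sqrt{\lambda_j}\alpha_{ij}$, so that $\braket{\phi_l}{\phi_k}=\sum_j\beta_{lj}\beta_{kj}/\lambda_j$). So the approach is the same, and the non-degenerate branch ($\mu_k^2\neq\mu_l^2$ for all $k\neq l$) goes through cleanly in both.

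The genuine gap is the degenerate case, which you defer with ``any such degenerate configuration either contradicts the distinct-eigenvalue assumption or strictly undershoots $\sum_{i=1}^d\lambda_i$.'' Neither prong holds as stated. First, the distinct-eigenvalue hypothesis is about the spectrum of $T$, whereas $\mu_k=\braket{\phi_k}{T\phi_k}$ is the Rayleigh quotient of a feasible $\phi_k$ that has \emph{not yet} been shown to be an eigenfunction; two such Rayleigh quotients coinciding is perfectly consistent with distinct eigenvalues of $T$. (Your chain ``$\mu_k=\mu_l \Rightarrow$ same eigenvalue $\Rightarrow \phi_k=\pm\phi_l$'' only works \emph{after} one knows the $\phi$'s are eigenfunctions, which is precisely what is lost in this branch.) Second, ``strictly undershoots'' is the content that needs proof, not a step you can invoke. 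The paper devotes the bulk of its argument to exactly this: it partitions $[d]$ into groups of equal $\mu$'s, shows that within each group $S_i$ the KKT system reduces to an eigenproblem for a $|S_i|\times|S_i|$ matrix $\bM_i$ with $\trace(\bM_i)=|S_i|$, argues that the supporting indices $k_j$ across groups are distinct, and then invokes the arithmetic-harmonic mean inequality to get $\sum_{j\in S_i}\mu_j \le \sum_{j\in S_i}\lambda_{k_j}$ with equality forcing $|S_i|=1$. Without this (or an equivalent substitute), a stationary point with $\mu_k=\mu_l$ and non-eigenfunction $\phi_k,\phi_l$ is not ruled out, and the theorem is not established.

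A secondary issue is your handling of constraint qualification: justifying the existence of multipliers by appealing to ``the near-eigenfunction structure that emerges from the KKT analysis'' is circular, since that structure is a \emph{conclusion} of the KKT analysis. The paper avoids this by first passing to the convex relaxation $\sum_k\beta_{ik}^2/\lambda_k\le 1$, arguing the maximizer lies on the boundary, and only then writing the Lagrangian with nonnegative multipliers. You would need some analogous preparatory step.
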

Compared to the previous Rayleigh quotient objective, \Cref{eq:rq-direct-eigenfunctions} imposes a modified orthogonality constraint $\braket{\phi_i}{T \phi_j}_{\mcH} = 0, \;\forall i \neq j$. This change alters the identifiability of the minimizer of the objective--from being unique only up to an orthonormal transformation to being identifiable up to a permutation of functions. Remarkably, this result generalizes the findings of \citet{deng2022neural}, who established this property for EVD only for $d=1$. Finally, we note that recovering the eigenvalues (so the precise ordering of the eigenfunctions) can be done by computing $\hat \lambda_i = \braket{\hat \phi_i}{T \hat \phi_i}$, for $i\in[d]$.

\paragraph{Contextual Kernel.} We can specialize the general RQ objective to the contextual kernel. By setting $T=\txa$, \Cref{eq:rq-svd} becomes
\begin{equation}\label{eq:rq-svd-population}
    \begin{aligned}
        \minimize_{\Phi, \Psi}
        \; &\E_{(x,a) \sim P^+(x,a)}\big[\norm{ \Phi(x) -  \Psi(a)}_2^2\big], \;\; \st \;\; 
        \braket{\phi_i}{\phi_j}_\px =  \braket{\psi_i}{\psi_j}_{\pa} = \delta_{ij} \ ,
    \end{aligned}
\end{equation}
where we added the constant $\E_{(x,a) \sim P^+}[\norm{\Phi(x)}_2^2 + \norm{\Psi(a)}_2^2] = 2d$ to the objective. This squared-distance form yields a more stable training process.

\paragraph{Connection to Non-contrastive Learning.} Interestingly, in the EVD case, the RQ objective recovers a well-known non-contrastive self-supervised learning objective, where \Cref{eq:rq-svd-population} becomes
\begin{equation}\label{eq:rq-population}
    \begin{aligned}
        \min_{\Psi} \; \E_{x \sim \px} & \E_{a, a^+ \sim P^+(a|x)}  [ \norm{ \Psi(a) -  \Psi(a^+)}_2^2] \;\; \st \;\; \braket{\psi_i}{\psi_j}_{\pa} = \delta_{ij} \ .
    \end{aligned}
\end{equation}
To solve this constrained problem, we can use the Lagrangian penalty functions (\Cref{sec:constrained-to-unconstrained}) as
\begin{equation}\label{eq:rq-loss}
    \begin{aligned}
        \mcL_{\rm RQ} &= \E_{x \sim \px} \E_{a, a^+ \sim P^+(a|x)}  \big[\norm{\Psi(a) -  \Psi(a^+)}_2^2\big] \\
        & + \frac{\mu}{d} \sum_{i=1}^d \big(\E_{a \sim P(a)} [\psi_i(a)^2] -1\big)^2  + \frac{\nu}{d(d-1)} \sum_{i\neq j} \big(\E_{a \sim P(a)} [\psi_i(a)\psi_j(a)]\big)^2 \ , 
    \end{aligned}    
\end{equation}
where $\mu, \nu>0$ are hyperparameters. Finally, we can use the sample splitting trick (\Cref{sec:constrained-to-unconstrained}) to estimate the penalty terms in an unbiased manner.

\paragraph{Comparison to VICReg~\citep{bardes2022vicreg}.}
We note that \Cref{eq:rq-loss} resembles the VICReg objective, as its three components correspond to invariance, variance, and covariance terms. However, VICReg employs a hinge loss penalty on the variance term, in contrast to the squared loss penalty used here. Also, the original implementation is biased in estimating the population covariance matrix, as it does not apply sample splitting. A detailed discussion of their empirical performance is provided in \Cref{app:compare_rq_vicreg}.

\section{Experiments}\label{sec:experiments}

We compare various approaches in our proposed framework on both synthetic and real-world datasets. Implementation details and additional results for both settings are provided in \Cref{app:exp-details}.

\subsection{Verification on Synthetic Kernels}\label{sec:exp-synthetic}

We start with synthetic kernels with known ground-truth decompositions, offering a direct signal on the performance of eigendecomposition. In this experiment, we compare the performance of LoRA and RQ objectives. To extract eigenfunctions, we use both joint nesting and Rayleigh-Ritz strategies, and exclude the sequential nesting option since its high computational cost makes it impractical for representation learning. 

\paragraph{Setup.} We construct synthetic kernels using orthonormal function bases.  Formally, to construct rank-$r$ kernel, we let $\kaa(a,a') = \sum_{i=1}^{r} \lambda_i \psi_i(a) \psi_i(a')$, where $\psi_i$ is either $(i-1)^\textrm{th}$ Legendre polynomial or $(i-1)^\textrm{th}$ Fourier base functions, with exponentially decayed eigenvalues $\lambda_i \propto \exp(-0.3i)$. We let $\pa(a) = 1/2^p$ be uniform over $[-1,1]^p$ and derive $P^+(a,a')$ from the definition of $\kaa$ accordingly. We set the input dimension to $p=1$ and $p=2$ for Legendre and Fourier kernels, respectively. Since we find that Fourier kernels are easier to learn, we adopt a more challenging setting for them compared to the Legendre kernels.

\paragraph{Evaluation Metrics.}
Let $\{(\lambda_i,\psi_i)\}_{i \in [r]}$ be the true eigenpairs and $\{(\hat\lambda_i, \hat\psi_i)\}_{i\in[d]}$ be the estimates up to rank~$d$. \looseness=-1

\begin{itemize}
     \item \textbf{Eigenvalue relative absolute error (EV-RAE)}: RAE is defined as
     $\frac{1}{d} \sum_{i=1}^{d} \frac{|\lambda_i - \hat{\lambda}_i|}{\lambda_i}$.
    
     \item \textbf{Eigenfunction mean squared error (EF-MSE)}: We measure the MSE defined on $\lap$:
     \begin{equation}
     \frac{1}{d}\sum_{i=1}^{d} \norm{\psi_i - \hat{\psi}_i}_\pa^2 
     = \frac{1}{d}\sum_{i=1}^{d} \E_{a \sim \pa} \left[ \left(\psi_i(a) - \hat{\psi}_i(a) \right)^2 \right].
     \end{equation}
     Since eigenfunctions can only be recovered up to a sign ambiguity, we select either $\hat{\psi}_i$ or $-\hat{\psi}_i$ that minimizes the MSE for each $i$.
 \end{itemize}

\begin{table}[htbp]
\centering
\resizebox{\textwidth}{!}{
\begin{tabular}{c|c|c|c|c|c|c|c}
\toprule
\multicolumn{2}{c|}{Group} & \multicolumn{2}{c|}{Low-rank} & \multicolumn{4}{c}{Rayleigh Quotient}  \\
\midrule
\multicolumn{2}{c|}{Objectives} & \multicolumn{2}{c|}{SCL (\Cref{eq:spectral-contrastive-loss}) } & 
\multicolumn{2}{c|}{RQ (\Cref{eq:rq-loss})} &
\multicolumn{2}{c}{VICReg (\Cref{eqn:vicreg})}  \\
\midrule
\multicolumn{2}{c|}{Eigenspace to eigenfunctions} & Rayleigh Ritz & Joint nesting & Rayleigh Ritz & Joint nesting & Rayleigh Ritz & Joint nesting  \\
\midrule \midrule
\multicolumn{8}{c}{Legendre Kernels} \\
\toprule
$p=1,r=6$ & EF(MSE) & $0.111_{ \pm 0.049 }$  & $\mathbf{0.068}_{ \pm 0.015 }$  & $0.085_{ \pm 0.005 }$  & $0.118_{ \pm 0.005 }$  & $0.254_{ \pm 0.002 }$  & $0.293_{ \pm 0.006 }$  \\ 
\cmidrule{2-8} 
 $d=3$ & EV(RAE) & $\mathbf{0.045}_{ \pm 0.004 }$  & $0.049_{ \pm 0.005 }$  & $0.071_{ \pm 0.002 }$  & $0.057_{ \pm 0.003 }$  & $0.067_{ \pm 0.002 }$  & $0.073_{ \pm 0.005 }$  \\ 
 \midrule
$p=1,r=8$ & EF(MSE) & $0.222_{ \pm 0.031 }$  & $\mathbf{0.077}_{ \pm 0.010 }$  & $0.122_{ \pm 0.003 }$  & $0.136_{ \pm 0.004 }$  & $0.397_{ \pm 0.007 }$  & $0.844_{ \pm 0.091 }$  \\ 
\cmidrule{2-8} 
 $d=4$ & EV(RAE) & $0.068_{ \pm 0.017 }$  & $0.071_{ \pm 0.015 }$  & $0.080_{ \pm 0.002 }$  & $\mathbf{0.067}_{ \pm 0.001 }$  & $0.131_{ \pm 0.007 }$  & $0.179_{ \pm 0.014 }$  \\ 
 \midrule
$p=1,r=10$ & EF(MSE) & $0.229_{ \pm 0.039 }$  & $\mathbf{0.109}_{ \pm 0.010 }$  & $0.153_{ \pm 0.005 }$  & $0.233_{ \pm 0.026 }$  & $0.480_{ \pm 0.003 }$  & $1.000_{ \pm 0.050 }$  \\ 
\cmidrule{2-8} 
$d=5$ & EV(RAE) & $0.075_{ \pm 0.009 }$  & $\mathbf{0.066}_{ \pm 0.010 }$  & $0.101_{ \pm 0.003 }$  & $0.092_{ \pm 0.004 }$  & $0.173_{ \pm 0.003 }$  & $0.300_{ \pm 0.064 }$  \\ 
\midrule \midrule
\multicolumn{8}{c}{Fourier Kernels} \\
\toprule
$p=2,r=6$ & EF(MSE) & $0.065_{ \pm 0.005 }$  & $\mathbf{0.058}_{ \pm 0.003 }$  & $0.088_{ \pm 0.003 }$  & $0.091_{ \pm 0.002 }$  & $0.200_{ \pm 0.001 }$  & $0.218_{ \pm 0.001 }$  \\ 
\cmidrule{2-8} 
 $d=3$ & EV(RAE) & $\mathbf{0.023}_{ \pm 0.006 }$  & $0.028_{ \pm 0.007 }$  & $0.062_{ \pm 0.004 }$  & $0.046_{ \pm 0.003 }$  & $0.043_{ \pm 0.004 }$  & $0.052_{ \pm 0.004 }$  \\ 
 \midrule
$p=2,r=8$ & EF(MSE) & $0.180_{ \pm 0.020 }$  & $\mathbf{0.105}_{ \pm 0.002 }$  & $0.133_{ \pm 0.003 }$  & $0.134_{ \pm 0.001 }$  & $0.448_{ \pm 0.153 }$  & $0.554_{ \pm 0.053 }$  \\ 
\cmidrule{2-8} 
 $d=4$ & EV(RAE) & $0.046_{ \pm 0.011 }$  & $\mathbf{0.044}_{ \pm 0.011 }$  & $0.071_{ \pm 0.002 }$  & $0.064_{ \pm 0.003 }$  & $0.183_{ \pm 0.083 }$  & $0.238_{ \pm 0.036 }$  \\ 
 \midrule
$p=2,r=10$  & EF(MSE) & $0.195_{ \pm 0.016 }$  & $\mathbf{0.133}_{ \pm 0.003 }$  & $0.179_{ \pm 0.005 }$  & $0.183_{ \pm 0.003 }$  & $0.454_{ \pm 0.039 }$  & $0.812_{ \pm 0.073 }$  \\ 
\cmidrule{2-8} 
$d=5$ & EV(RAE) & $\mathbf{0.054}_{ \pm 0.010 }$  & $0.054_{ \pm 0.009 }$  & $0.077_{ \pm 0.003 }$  & $0.069_{ \pm 0.003 }$  & $0.218_{ \pm 0.035 }$  & $0.370_{ \pm 0.038 }$  \\ 
 \midrule
\bottomrule
\end{tabular}}
\caption{Comparison of different methods for estimating eigenfunctions and eigenvalues on the Legendre and Fourier kernels. Each value is $\text{mean}_{\text{std.error}}$ over $5$ runs.
}
\label{tab:synthetic_comparison}
\end{table}

\begin{figure}[t!]
    \centering
    \includegraphics[width=\textwidth]{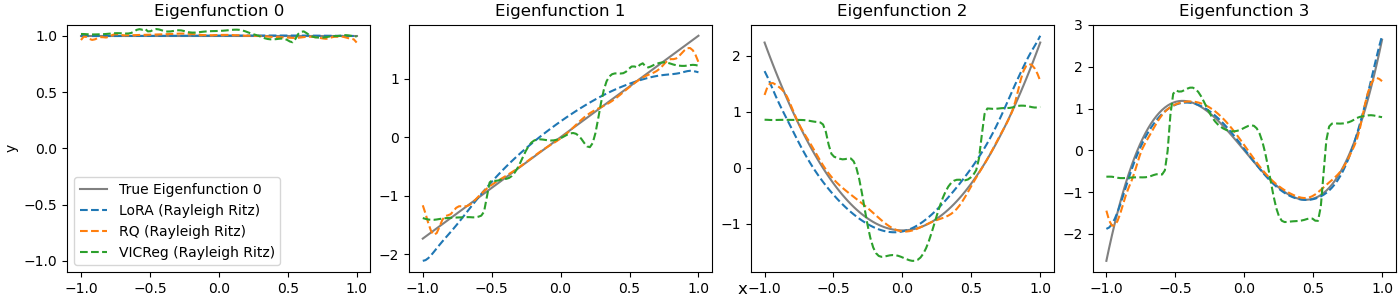}
    \caption{Comparison of eigenfunctions estimated by different methods (LoRA, RQ, and VICReg) with the ground-truth eigenfunctions (solid gray line). Results are shown for Legendre kernels with Rayleigh–Ritz post-processing ($p=1$, $r=8$, $d=4$).}
    \label{fig:eigenfunction_comparison}
\end{figure}

\paragraph{Results.} Table~\ref{tab:synthetic_comparison} reports the average EF-MSE and EV-RAE across different objectives and optimization strategies. The results demonstrate several key findings. First, both LoRA and RQ consistently recover the top-$d$ eigenfunctions with low estimation errors across all configurations (varying $r$ and $d$), confirming our theoretical results. Among them, LoRA with joint nesting achieves the most accurate extraction, corroborating the empirical results in \citep{ryu2024operator}.

Moreover, both joint nesting and Rayleigh–Ritz effectively extracted eigenfunctions from the eigenspace, consistent with our analysis. In contrast, VICReg consistently produces poor estimates. This highlights the drawback of penalizing empirical covariance rather than population covariance, which leads to biased estimation.
These observations are visually confirmed in \Cref{fig:eigenfunction_comparison}, where the reconstructed eigenfunctions of all methods closely match the true eigenfunctions, except for VICReg. Additional visualizations are provided in \Cref{app:synthetic-exp}.

\subsection{Feature Importance for Adaptive-dimensional Representations}\label{sec:real-world-exp}

In this experiment, we demonstrate that ordered eigenfunctions enable adaptive-dimensional representations. Treating eigenvalues as feature-importance scores and ranking eigenfunctions accordingly, the representation can be truncated to any desired dimension. This provides a principled way to balance accuracy and efficiency: higher-dimensional features generally improve accuracy but demand greater computational resources, while lower-dimensional features reduce storage and computation at the cost of performance. 

\paragraph{Setup.} We follow the standard self-supervised learning protocol on CIFAR-10~\citep{krizhevsky2009learning} and ImageNette, which consists the first 10 classes of ImageNet~\citep{deng2009imagenet}. We pretrain the ResNet-18 with VICReg~\citep{bardes2022vicreg} and SCL objectives for eigenspace extraction.
We set the dimension of the full representation $d=512$ and evaluate the performance of the adaptive dimensions of lengths $r=\{4,8,16,32,64,128,256\}$. We use the standard linear probe accuracy for evaluation. For SCL, we follow the original implementation~\citep{haochen2021provable}, applying $\ell_2$ normalization to the final representation before computing the loss (\Cref{eq:spectral-contrastive-loss}), \ie $\Psi'(a) = \Psi(a) / \norm{\Psi(a)}_2$. This normalization substantially stabilizes training and improves performance (by about 10\% on ImageNette).

We compare the two eigenfunction extraction approaches, joint nesting (JN) and Rayleigh-Ritz (RR), with independently trained low-dimensional fixed feature (FF) representations, and randomly selected (RS) features on the full representations (whose results are averaged over $300$ runs). We note that the joint nesting model is separately trained under the same settings, and Rayleigh-Ritz method is applied on the single full-representation model. 

\begin{figure}[t]
  \centering
  \begin{subfigure}[t]{0.24\linewidth}
      \centering
      \includegraphics[width=\linewidth]{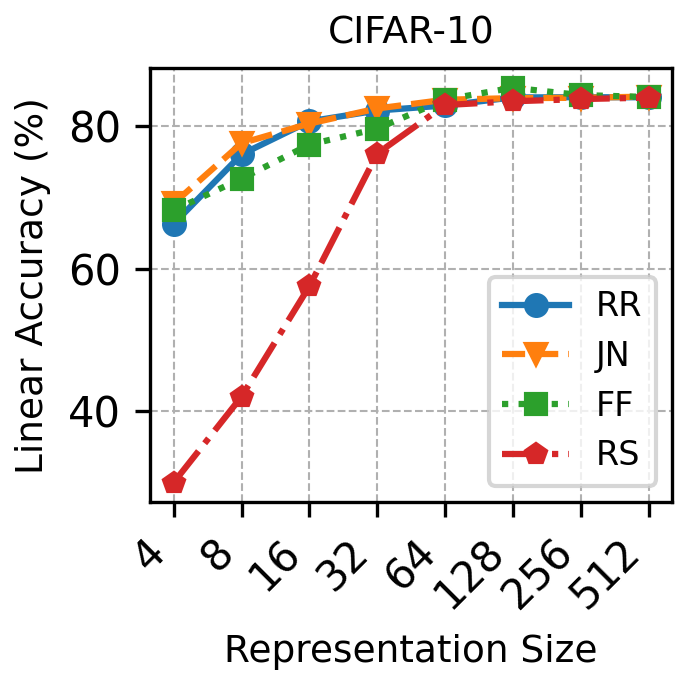}
      \caption{VICreg on CIFAR-10.}
      \label{fig:cifar10_vicfreg}
  \end{subfigure}
  \hfill
  \begin{subfigure}[t]{0.24\linewidth}
      \centering
      \includegraphics[width=\linewidth]{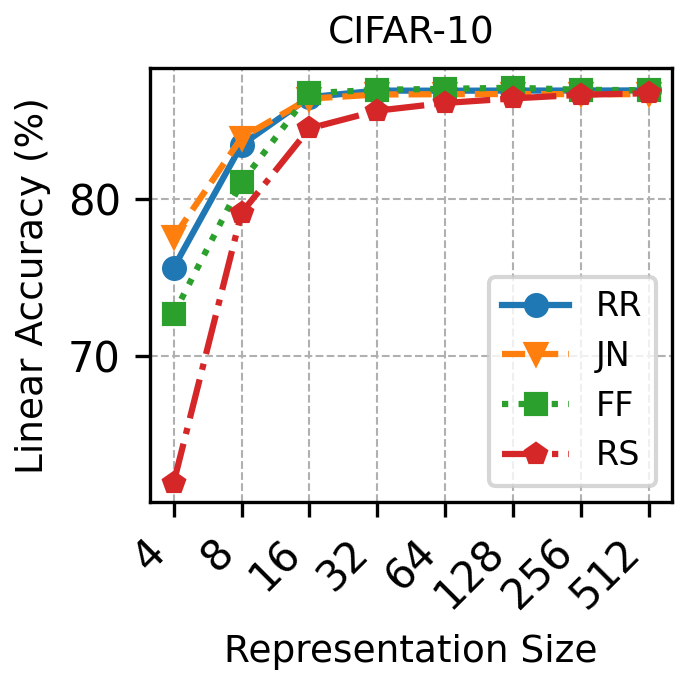}
      \caption{SCL on CIFAR-10.}
      \label{fig:cifar10_scl}
  \end{subfigure}
  \hfill
  \begin{subfigure}[t]{0.24\linewidth}
      \centering
      \includegraphics[width=\linewidth]{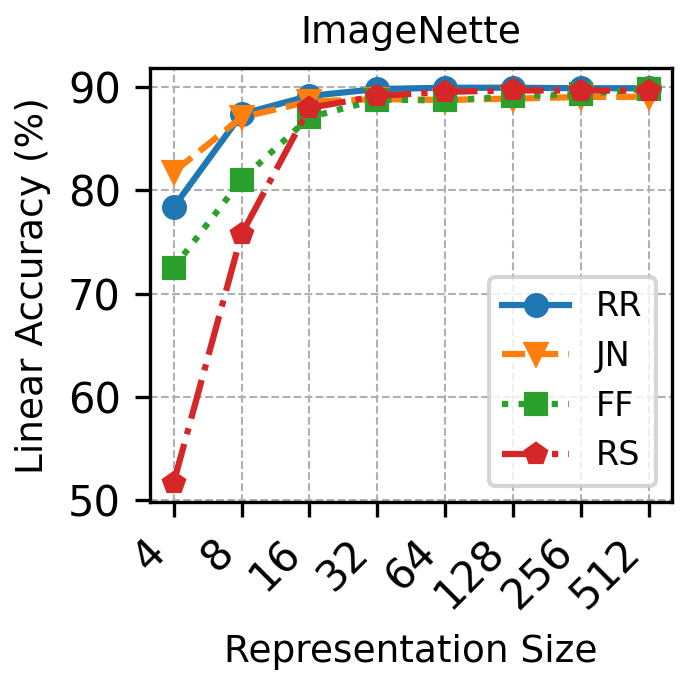}
      \caption{VICReg on ImageNette.}
      \label{fig:imagenette_vicreg}
  \end{subfigure}
  \hfill
  \begin{subfigure}[t]{0.24\linewidth}
      \centering
      \includegraphics[width=\linewidth]{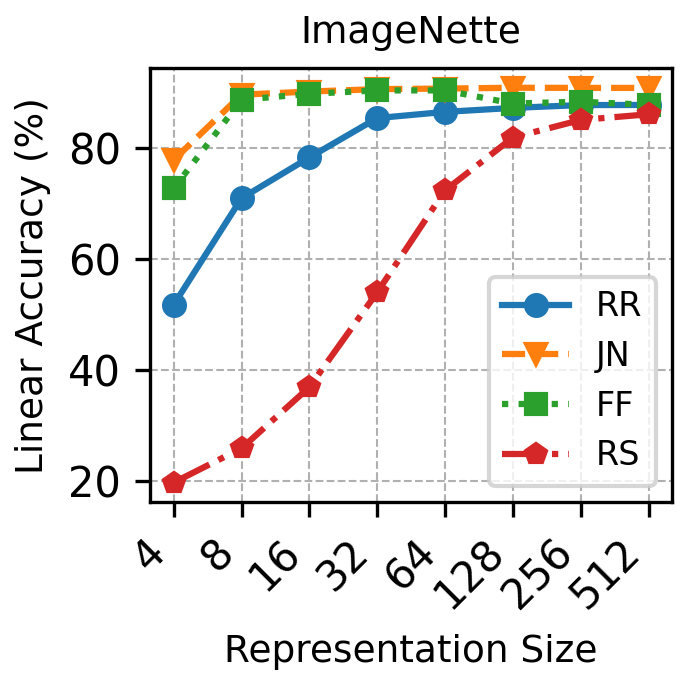}
      \caption{SCL on ImageNette.}
      \label{fig:imagenette_scl}
  \end{subfigure}
  \caption{Linear evaluations of VICReg and SCL on different representation sizes on CIFAR-10 and ImageNette.}
  \label{fig:image_results}
\end{figure}

\paragraph{Results.} \Cref{fig:image_results} reports the linear probe accuracies of VICReg and SCL on ImageNette and CIFAR-10. As expected, the accuracy increases with the dimension of the representation. In the rightmost three panels, joint nesting \textbf{(JN)} and Rayleigh–Ritz \textbf{(RR)} consistently match or surpass fixed features \textbf{(FF)} across all dimensions, while all three methods substantially outperform random selection \textbf{(RS)} at lower dimensions. These results highlight the strong efficiency–accuracy trade-off achieved by JN and RR, especially in low-dimensional regimes.\\

For SCL on ImageNette, RR notably underperforms both FF and JN. We hypothesize that this is due to the $\ell_2$-normalization of SCL's output representations. This normalization causes a much slower eigenvalue decay compared to VICReg, as illustrated in \Cref{fig:decay}, which means more eigenfunctions are of similar importance. Consequently, truncating the representation, as RR does, results in significant information loss.

\begin{wrapfigure}{r}{0.3\textwidth}
    \centering
    \includegraphics[width=\linewidth]{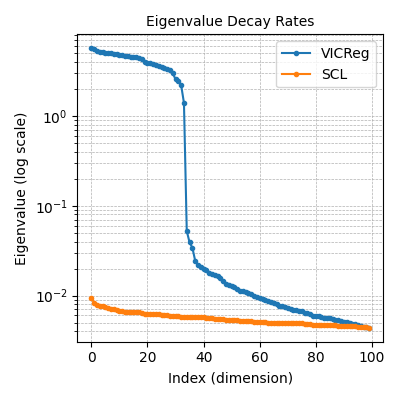}
    \caption{Top 100 eigenvalues of SCL and VICReg on ImageNette. }
    \label{fig:decay}
\end{wrapfigure}

In contrast, JN avoids this issue by operating on the representations before normalization, \ie $\Psi'_r(a) = \Psi_r(a) / \norm{\Psi_r(a)}_2$ for all $r$, effectively learning features on hyperspheres of different sizes, and achieves consistently better performance than FF. From a theoretical standpoint, the $\ell_2$-normalization in SCL prevents the objective's minimizers from corresponding to the true eigenspace, which is why our theoretical analysis does not directly apply.

From a practical standpoint, we recommend using RR for VICReg, as it is more flexible than JN: reduced-dimensional representations can be obtained from a single full model through post-processing, whereas JN requires specifying the target dimension during training. This makes RR particularly appealing for proprietary embedding models, where eigenpairs can be estimated without access to model weights or training pipelines. 
For SCL, as the commonly used $\ell_2$-normalization stabilization trick can degrade RR’s performance, we recommend using JN instead.

\section{Conclusion and Future Work} \label{sec:conclusion}

We present a unified framework for extracting ordered and identifiable eigenfunctions to advance representation learning. We formalize key desiderata, provide modular methods, and connect contrastive and non-contrastive methods to the two major eigendecomposition paradigms. Experiments on image datasets validate that ordered eigenfunctions enable adaptive-dimensional representations. This framework provides a principled foundation for building ordered representation learning systems. 

We conclude with two promising future directions. First, recent studies \citep{reizinger2025position,pmlr-v235-huh24a} have shown that representations learned from contrastive and non-contrastive methods are remarkably similar. We hypothesize that this similarity can be explained by our identifiability results in \Cref{sec:variational-results}, which suggest that both approaches effectively extract the eigenspace of the contextual kernel. A natural next step is to conduct larger-scale experiments to further test this hypothesis and extend our analysis beyond the current LoRA and RQ objectives. The second direction is to explore alternative norms beyond the Hilbert–Schmidt norm. More generally, one may ask how to obtain a top-$d$ approximation of a given kernel under arbitrary distance measures. Investigating broader classes of norms and objectives that are both theoretically grounded and computationally tractable represents an exciting avenue for future work.

\subsubsection*{Acknowledgements}
We thank Runtian Zhai and Randall Balestriero for helpful discussions and feedback on this paper. We acknowledge the support of AFRL and DARPA via FA8750-23-2-1015, ONR via
N00014-23-1-2368, and NSF via IIS-1909816.

\clearpage

\appendix

\doparttoc 
\faketableofcontents 

\part{Appendix} 
\parttoc 

\clearpage

\section{Additional Discussion on Related Work}\label{app:related-work}

\subsection{Adaptive Representation Learning} \label{app:related_work_adaptive_representations}
Matryoshka Representation Learning (MRL)~\citep{kusupati2022matryoshka} first introduced a training framework for learning flexible representations that can adapt to tasks with varying computational budgets. The core idea aligns with the principle of joint nesting, where objectives are summed across embedding lengths derived from the same full-dimensional vector. Since its introduction, this concept has been extended to a wide range of applications, including large language models~\citep{openai2024embedding,nussbaum2025nomic,yu2025arcticembed,vera2025embeddinggemma}, diffusion models~\citep{gu2023matryoshka}, multi-modal models~\citep{cai2025matryoshka,hu2024matryoshka}, recommender systems~\citep{wang-etal-2024-train}, text embeddings~\citep{cai2025matryoshka}, sparse autoencoder~\citep{bussmann2025learning}, and federated learning~\citep {NEURIPS2024_7a9ee756}.

Structured representations can also be achieved through various methods. \citet{wang2024nonnegative} introduced non-negative contrastive learning, showing that sparse yet expressive representations can be learned with deep encoders. In this approach, representations remain high-dimensional but can be stored in sparse matrices, substantially reducing computational and storage costs. This idea has also been demonstrated in state-of-the-art embedding models~\citep{wen2025beyond}. Other approaches incorporate additional structure into the learned representations, such as hierarchical representations~\citep{desai2023hyperbolic,you2025hierarchical} and principal component analysis (PCA)–based representations~\citep{zhu2018self}. Furthermore, \citet{xu2024neural} used nesting to learn structured representations that decompose multivariate dependencies without the need for retraining.

We note that our eigenfunction extraction framework provides a theoretical foundation for understanding the success of structured representation methods, particularly MRL and PCA-based representations. By combining MRL with the spectral perspective of representation learning, we demonstrate that MRL effectively extracts the ordered eigenfunctions of the representation space. Consequently, features associated with larger eigenvalues naturally emerge in the earlier dimensions. Furthermore, our framework reveals that eigenfunctions can be extracted through a PCA-style procedure, the Rayleigh–Ritz method. This leads to an important conclusion: essential features can be recovered directly from a single full-representation model, potentially alleviating the need for MRL.

\subsection{Theories of Representation Learning}
The theoretical study of representations has been a central theme in machine learning research. The most relevant line of work for this paper is from spectral theory. From a spectral perspective, \citet{haochen2021provable} first introduced the spectral contrastive loss, derived by spectrally decomposing the positive-pair kernel using a low-rank approximation. \citet{johnson2023contrastive} further showed that several widely used objectives, such as NT-Logistic and NT-XEnt, implicitly approximate the same kernel. More recently, \citet{zhai2025contextures} demonstrated that the VICReg objective can be interpreted as performing a Rayleigh quotient–based kernel decomposition. Building on this perspective, follow-up studies proposed various practical extensions, including feature collapse detection~\citep{garrido2023rankme}, representation alignments~\citep{fumero2024latent}, and zero-mean regularization~\citep{zhou2024zero}.

In contrast, our work unifies the insights from \citet{haochen2021provable,johnson2023contrastive,zhai2025contextures} and introduces a general framework for extracting eigenfunctions through their eigenspace extractors. This framework provides a principled way to order learned representations according to their importance scores, thereby enabling the construction of adaptive representations. In a closely related study, \citet{zhang2024identifiable} propose a tri-factor contrastive learning objective that extracts eigenvalues by enforcing orthonormality constraints within the contrastive objective. However, their method is restricted to contrastive learning settings, making it less general than our eigenfunction extraction framework.

Other works approach representation learning from diverse theoretical perspectives. \cite{garrido2023on} shows that contrastive objectives and non-contrastive methods are equivalent under certain assumptions.
\cite{hjelm2018learning, Tschannen2020On} show that representation learning can be seen as maximizing mutual information between inputs and their latent representations. 
\cite{zhai2024understanding,wang2024understanding,pmlr-v162-saunshi22a,wang2022chaos,lee2021predicting, brady2023provably,brady2025interaction} study how learned representations transfer across tasks and distributions. The works on feature dynamics~\citep{pmlr-v119-wang20k,wang2023message,huang2025a,wen2021toward} and equivariance/invariance~\citep{gupta2024structuring,pmlr-v202-garrido23b,xiao2021what} focus on how representations evolve during training and how structural constraints, such as symmetry or transformation consistency, shape the learned embedding space. 

\cite{reizinger2024cross,daunhawer2023identifiability,pmlr-v258-rusak25a,lyu2022understanding} propose identifiability theories showing that these representation learning techniques recover some particular data generation process. 
An alternative modeling of the representation space is proposed by the causal representation learning literature, which aims to form a causal understanding of the world by learning appropriate representations that support causal interventions, reasoning, and planning~\citep{scholkopf2021toward}.
Specifically, this approach proposes building interpretable and identifiable models from the ground up, and the identifiability of \emph{causal} representations has been extensively studied in various setups~\citep{ahuja2023interventional,varici2025score,yao2025unifying,liang2023causal,moran2025towards}. Inspired by this progress, follow-up work studied extracting identifiable representations from data with a geometric notion of \emph{concept}~\citep{rajendran2024from}, rooted in the linear representation hypothesis~\citep{jiang2024origins}.

\subsection{Algorithms of Representation Learning }
Modern representation learning approaches train deep encoders to produce representations that generalize to downstream tasks. Early methods relied on generative or context-based pretext tasks~\citep{gidaris2018unsupervised}, but recent advances center on instance discrimination via paired augmentations. \textbf{Contrastive methods} generate two views from the same input and train the network to maximize agreement of positive pairs (same inputs) while pushing apart negatives (different inputs). Prominent examples include SimCLR
~\citep{chen2020simple, oord2018representation}, MoCo~\citep{he2020momentum}, and SCL~\citep{haochen2021provable}. \textbf{Non-contrastive methods} remove explicit negatives and rely on architectural biases or regularizers to avoid collapse. Prominent examples include BYOL~\citep{NEURIPS2020_f3ada80d}, SimSIAM~\citep{chen2021exploring}, VICReg~\citep{bardes2022vicreg}, Barlow Twins~\citep{zbontar2021barlow}, SWAV~\citep{caron2020unsupervised}, DINO~\citep{caron2021emerging,oquab2024dinov,simeoni2025dinov3}, and JEPA~\citep{assran2023self, assran2025v}. These methods can also be applied to multi-modal learning, where shared representation spaces are learned from information from aligned data from different modalities~\citep{CLIP, jia2021scaling, yuan2021florence, li2022blip,chuang2025meta,tschannen2025siglip}. Other representation learning methods include masked language/image modeling~\citep{devlin2018bert,he2022masked,chang2023muse}. See the self-supervised-learning cookbook~\citep{Balestriero2023ACO} for a summary of these methods.

\section{Eigendecomposition of Finite-Dimensional Matrices}\label{app:matrix-EVD}

We provide some background on the eigendecomposition of finite-dimensional matrices as the precursor for the optimization viewpoint for two main approaches for kernel decomposition: low-rank approximation and Rayleigh quotient optimization.

Consider a symmetric positive semidefinite matrix $\bM \in \R^{N \times N}$ of rank $n$. Formally, the rank of a matrix is defined as the dimension of its column (or row) space, \ie the maximum number of linearly independent columns or rows. Thus, a rank-$n$ symmetric matrix can always be expressed as the sum of exactly $n$ rank-one matrices:
\begin{equation}
    \bM = \sum_{i=1}^{n} \bu_i \bu_i^\top \ , \quad \bu_i \in \R^N \ .
\end{equation}
However, this decomposition is not unique without further constraints, as there exist infinitely many ways to write $\bM$ as a sum of rank-$1$ matrices. Uniqueness arises precisely when we require the vectors forming the rank-$1$ matrices to be orthonormal, \ie 
\begin{equation}
    \bu_i = \sqrt{\lambda_i} \bv_i \ , \quad \forall i \in [n] \ , \quad \text{where} \quad \bv_i^\top \bv_j = \delta_{ij} \ ,
\end{equation}
where $\delta_{ij}=1$ for $i=j$ and $0$ otherwise. Here, $(\lambda_i)_{i \in [n]}$ are the positive eigenvalues of $\bM$  and $(\bv_i)_{i \in [n]}$ are the corresponding eigenvectors. Assuming $\lambda_1 > \dots > \lambda_n > 0$, we have a \emph{unique} eigendecomposition of~$\bM$,
\begin{equation}
    \bM = \sum_{i=1}^{n} \lambda_i \bv_i \bv_i^\top \ , \quad \bv_i \in \R^N \ .
\end{equation}

Before moving on to specific methods, we emphasize what makes learning the top eigenvectors appealing: given a fixed number $d<n$, the eigenvectors corresponding to the top-$d$ eigenvalues can be used to closely approximate the original matrix $\bM$. Formally, let $\bM_d$ denote the top-$d$ truncated eigendecomposition of $\bM$ as
\begin{equation}\label{eq:def-truncated-evd}
    \bM_d \coloneq \sum_{i=1}^{d} \lambda_i \bv_i \bv_i^\top \ .
\end{equation}
\Cref{thm:SEYM}(ii) states that the truncated eigendecomposition provides the best low-rank approximation of $\bM$. Specifically, for symmetric positive semidefinite matrices $\bM, \bA \in \R^{N \times N}$ where $\rank(\bA) = d  \leq \rank (\bM) = n$, we have
\begin{equation}\label{eq:EYM-matrices}
    \norm{\bM - \bM_d} \leq \norm{\bM - \bA}  \ . 
\end{equation}

\paragraph{Optimization viewpoint.} 
Given a symmetric positive definite matrix $\bM \in \R^{n \times n}$, denote its eigendecomposition by $\bM = \bV \Sigma \bV^\top$, where $\bV \in \R^{n \times n}$ has columns that are the eigenvectors, and $\bSigma = \diag(\lambda_1,\cdots,\lambda_n)$ with $\lambda_1 \geq \dots \geq \lambda_n > 0$. For clarity of the exposition, assume that $\lambda_d$ has multiplicity 1, \ie $\lambda_d > \lambda_{d+1}$. The main objective is to recover the top-$d$ eigenvectors, $\bv_1,\cdots,\bv_d \in \R^{n}$. As a simpler precursor to this objective, one may consider recovering the top-$d$ \textbf{eigenspace}, that is, the column space of the matrix $\bV_d \coloneq [\bv_1,\cdots,\bv_d] \in \R^{n \times d}$. To recover the top-$d$ eigenspace, we consider two classical approaches that can be formulated as optimization problems: low-rank matrix approximation and Rayleigh quotient optimization.
\begin{itemize}[leftmargin=*]
    \item \textbf{Low-rank matrix approximation} can be formulated as an unconstrained optimization problem. Specifically, we can minimize the Frobenius norm of the approximation error:
    \begin{equation}\label{eq:lora-matrix-generic}
        \min_{\bU \in \R^{n \times d}} \norm{\bM - \bU \bU^\top|}_{\fro}^2 \ .
    \end{equation}
    \Cref{thm:SEYM}(ii) readily implies that the objective is minimized if and only if $\bU \bU^\top = \bM_d$. Note that by definition, $\bM_d = \bV_d \bSigma_d \bV_d^\top$, where $\bSigma_d = \diag(\lambda_1,\dots,\lambda_d)$, and $\bU = \bV_d \sqrt{\bSigma_d}$ is a solution. However, the solution is not unique as $\hat\bU = \bU \bQ = \bV_d \sqrt{\bSigma_d} \bQ $ for any orthonormal matrix $\bQ \in \R^{d \times d}$ also satisfies the equality. For a minimizer $\hat \bU$, the objective value becomes $\norm{\bM - \bV_d\bV_d^\top|}_{\fro}^2 = \sum_{i=d+1}^{n} \lambda_i^2$.

    \item \textbf{Rayleigh quotient optimization} aims to find the dominant eigenvectors of a matrix by maximizing the Rayleigh quotient:
    \begin{equation}
        R(\bM, \bu) \coloneq \frac{\bu^* \bM \bu}{\bu^* \bu} \ ,
    \end{equation}
    where $\bu^*$ denotes the conjugate transpose of $\bu$. The maximum value of $R(\bM, \bu)$ is equal to the largest eigenvalue $\lambda_1$ of $\bM$; for Hermitian matrices, this corresponds to the spectral norm of $\bM$. The eigenvector achieving this maximum is the unit-norm vector $\bv_1$ that satisfies $\bM \bv_1 = \lambda_1 \bv_1$. Subsequent top eigenpairs $(\lambda_i,\bv_i)$ for $i \in \{2,\dots,d\}$ can be obtained by sequentially maximizing the Rayleigh quotient over the orthogonal complement of the preceding eigenvectors. Similarly to the low-rank approximation objective above, if we relax the objective to recover only the top-$d$ eigenspace, then Rayleigh quotient optimization becomes equivalent to maximizing the following objective:
    \begin{equation}\label{eq:rayleigh-matrix-generic-eigenspace}
        \begin{aligned}
        \max_{\bU \in \R^{n \times d}}  & \trace(\bU^\top \bM \bU) 
        = \max_{\bU \in \R^{n \times d}}  \sum_{i=1}^{d} \bu_i^{\top}\bM\bu_i \ , \;\;
        \st & \bU^\top\bU = \mathbf{I}_d \ , 
        \end{aligned}
    \end{equation}
    where $\trace(\cdot)$ denotes the trace operation. The maximizer of the objective is given by $\hat \bU = \bV_d\bQ$, where $\bQ \in \R^{d \times d}$ is an orthonormal matrix. Note that the maximum value of the objective is $\sum_{i=1}^{d} \lambda_i$, the sum of the top-$d$ eigenvalues.
\end{itemize}
Therefore, solving either \Cref{eq:lora-matrix-generic} or \Cref{eq:rayleigh-matrix-generic-eigenspace} yields the top-$d$ eigenspace; that is, the column space of solution $\hat\bU$ satisfies 
\begin{equation}
    \sspan\{\hat\bu_1,\hat\bu_2,\cdots, \hat\bu_d \} = \sspan\{\bv_1, \bv_2, \cdots,\bv_d\} \ .
\end{equation}

\section{Proofs for the Eigenfunction Extraction Framework}\label{app:proofs-framework}

\subsection{\texorpdfstring{Proof of~\Cref{thm:sequential-nesting-general}}{Proof of Theorem~2} }\label{app:proof-sequential-nesting}

\paragraph{\Cref{thm:sequential-nesting-general}.}
{\it
    Assume that the base problems $\mcO_1,\dots,\mcO_d$ (i) are eigenspace extractors for a compact operator $T$, and (ii) admit orthogonal nested minimizers. Then, sequentially solving 
    \begin{equation}
        \tilde\mcO_j : \min_{\phi_j } \mcL_j\big( \hat \phi_1, \ldots, \hat \phi_{j-1}, \phi_j \big) \ , 
    \end{equation} 
    such that $(\hat \phi_1, \ldots, \hat \phi_{j-1}, \phi_j) \in \mcC_j$ and where $\hat \phi_i$ is the solution from step $i < j$, recovers the $j$-th eigenfunction of $T$ up to a scaling factor $c_j$, \ie $\hat \phi_j = c_j \phi^*_j$. }

\begin{proof}
     We prove the desired result by induction on $d$ to show that given the conditions in the theorem statement for $\mcO_1,\dots,\mcO_d$, we will obtain $\hat \phi_i = c_i \phi^*_i$ where $c_i \neq 0 \in \R$, $~\forall i \in [d]$,
    First, the base case $i=1$ follows from the fact that $\mcO_1$ is an eigenspace extractor, which implies $\hat \phi_1 = c_1 \phi^*_1$  for some nonzero scalar $c_1$.
    
    Now assume that the induction hypothesis is true up to $j-1$, that is, $\hat \phi_i = c_i \phi^*_i$ and $(\hat \phi_1, \ldots, \hat \phi_i) \in \argmin_{\phi_1,\dots,\phi_i} \mcO_{i}$ for all $i \in [j-1]$. Since $\mcO_1, \cdots, \mcO_j$ admit orthogonal nested minimizers, we know that given $(\hat \phi_1, \ldots, \hat \phi_{j-1}) \in \argmin \mcO_{j-1}$, the solution $\hat \phi_j \coloneq \argmin_{\phi_j \in \mcF} \mcO_j(\hat \phi_1, \ldots, \hat \phi_{j-1}, \phi_j)$ exists, and 
    \begin{equation}
        \min_{\phi_j \in \mcF} \tilde{\mcO}_j(\hat \phi_1, \ldots, \hat \phi_{j-1}, \phi_j) = \min_{(\phi_1,\cdots,\phi_d ) \in \mcC_j} \mcO_j( \phi_1, \ldots,  \phi_{j-1}, \phi_j).
    \end{equation}
    This implies that solving the sequential objective $\tilde{\mcO}_j$ achieves the same minimum as the original base problem $\mcO_j$, and $(\hat \phi_1, \ldots, \hat \phi_{j-1}, \hat \phi_j)$ is a minimizer of $\mcO_j$. Moreover, since $\mcO_j$ is an eigenspace extractor, we have  
    \begin{align}\label{eq:eigenspace-span-aux1}
        \sspan(\hat \phi_1, \ldots, \hat \phi_j) = \sspan(c_1 \phi^*_1, \ldots, c_{j-1} \phi^*_{j-1}, \hat \phi_j)  = \sspan(\phi^*_1, \ldots, \phi^*_j). 
    \end{align}
    Finally, since the orthogonal nested minimizer property says that $\hat \phi_j$ is orthogonal to all previous functions, \ie $\braket{\hat\phi_j}{\hat\phi_i}_{\mcH} = \braket{\hat\phi_j}{c_i\phi^*_i}_{\mcH}= 0$ for all $1 \leq i < j$, we must have $\hat \phi_j = c_j \phi^*_j$ for some nonzero $c_j \in \R$. This completes the induction step, and extending it to $j=d$ completes the proof. 
\end{proof}

\subsection{\texorpdfstring{Proof of~\Cref{thm:joint-nesting-general}}{Proof of Theorem~3}}\label{app:proof-joint-nesting}

\paragraph{\Cref{thm:joint-nesting-general}.}
{\it
    If $\mcO_1,\dots,\mcO_d$ are eigenspace extractors for $T$ with orthogonal nested minimizers, then for any positive weights $(w_i)_{i\in[d]}$, solving $\mcO_{\jnt}$ recovers the eigenfunctions up to scaling, \ie its minimizers are of the form $(c_1 \phi^*_1,\ldots, c_d\phi^*_d)$, where $(\phi^*_i)_{i\in[d]}$ are the top-$d$ eigenfunctions of $T$. }

\begin{proof} 
    Let $(\hat \phi_1,\dots,\hat \phi_d)$ denote an orthogonal nested minimizer of $\mcO_1,\dots,\mcO_d$. By definition, this implies
    \begin{equation}\label{eq:orthogonal-nested-joint-nesting-proof}
        (\hat \phi_1,\dots,\hat \phi_j) \in \argmin_{\phi_1,\dots,\phi_j} \mcO_j \ , \quad \forall j \in [d] \ .
    \end{equation}
    Also, let $(\phi'_1,\dots,\phi'_d)$ denote a minimizer of $\mcO_{\jnt}$ with weights $(w_1,\dots,w_d)$. \Cref{eq:orthogonal-nested-joint-nesting-proof} gives that
    \begin{equation}
        \mcL_j(\hat \phi_1,\dots,\hat \phi_j) \leq \mcL_j(\phi'_1,\dots,\phi'_j) \ , \quad \forall j \in [d] \ ,
    \end{equation}
    which immediately implies
    \begin{equation}
        \sum_{j=1}^{d} w_j \mcL_j(\hat \phi_1,\dots,\hat \phi_j) \leq \sum_{j=1}^{d} w_j\mcL_j(\phi'_1,\dots,\phi'_j) 
    \end{equation}
    for any nonnegative weights $(w_j)_{j \in [d]}$. Since $(\phi'_1,\dots,\phi'_d)$ is a minimizer of $\mcO_{\jnt}$, the inequality must hold with equality for all $j\in[d]$, \ie $\mcL_j(\hat \phi_1,\dots,\hat \phi_j) = \mcL_j(\phi'_1,\dots,\phi'_j)$.     Therefore, we also have 
    \begin{equation}\label{eq:nested-condition-joint-nesting-proof}
        (\phi'_1,\dots,\phi'_j) \in \argmin_{\phi_1,\dots,\phi_j} \mcO_j(\phi_1,\dots,\phi_j) \ .
    \end{equation}
    For $j=1$ this readily gives $\phi'_1 = \alpha_1 \phi^*_1$ for a nonzero scalar $c_1$. 
    Now recall that $\mcO_1,\dots,\mcO_d$ have the orthogonal nested minimizer property and observe that $(\phi'_1,\dots,\phi'_d)$ satisfy the first condition due to \Cref{eq:nested-condition-joint-nesting-proof}. Consequently, the second condition in \Cref{def:nested-minimizer} implies that $\phi'_j$ is orthogonal to all previous functions, \ie $\braket{\phi'_j}{\phi'_i}=0$ for $i < j$, and $(\phi'_1,\dots,\phi'_d)$ is also an orthogonal nested minimizer of $\mcO_1,\dots,\mcO_d$. Then, using the eigenspace extractor property, \Cref{thm:sequential-nesting-general} gives that $(\phi'_1,\dots,\phi'_d)$ is of the form $(c_1 \phi^*_1,\dots,c_d \phi^*_d)$ where $(\phi^*_i)_{i \in [d]}$ are the top-$d$ eigenfunctions of $T$.
\end{proof}

\subsection{\texorpdfstring{Proof of~\Cref{thm:rayleigh-ritz}}{Proof of Theorem 4}}\label{app:proof-rayleigh-ritz}

\paragraph{\Cref{thm:rayleigh-ritz}.}
{\it
    Suppose that $(\phi_i)_{i \in [d]}$ form an orthonormal basis for the top-$d$ eigenspace of a self-adjoint compact operator $T$. Let $\bB \in \R^{d \times d}$ be a matrix with entries $\bB_{ij} \vcentcolon = \braket{\phi_i}{T \phi_j}$. Denote the eigenpairs of $\bB$ by $\{(\lambda_i,\by_i) : i \in [d]\}$, where $(\lambda_i)_{i \in [d]}$ are in non-increasing order and $\{\by_i \in \R^d : i \in [d]\}$ are the corresponding eigenvectors. Then, the top $i^\textrm{th}$ eigenfunction of $T$ is given by $\phi^*_i = \sum_{j=1}^d (\by_i)_{j} \phi_j$, with the corresponding eigenvalue $\lambda^*_i=\lambda_i$.
}
\begin{proof}
    Substituting $\bB_{ij}= \braket{\phi_i}{T \phi_j}$ into $\bB y_{\ell} = \lambda_{\ell} \by_{\ell}$ for each $i, \ell \in [d]$ we have 
    \begin{align}
        \lambda_{\ell} {y}_{\ell_i} &= \sum_{j=1}^d \braket{\phi_i}{T \phi_j}y_{\ell_j} = \braket{\phi_i}{ T \big( \sum_{j=1}^d y_{\ell_j}  \phi_j \big)} \\
        \implies \lambda_{\ell} \sum_{i=1}^{d} {y}_{\ell_i} \ket{\phi_i} &= \sum_{i=1}^{d} \ketbra{\phi_i}{\phi_i}{T \big ( \sum_{j=1}^d y_{\ell_j}  \ket{\phi_j} \big ) } \\
        &= \Big( \sum_{i=1}^d \ketbra{\phi_i}{\phi_i} \Big ) {T\big ( \sum_{i=1}^d {y}_{\ell_i} \ket{\phi_i} \big ) } \ . \label{eq:rayleigh-ritz-aux1}
    \end{align}
    Since $T$ is compact and self-adjoint, by the spectral theorem $T = \sum_{j=1}^\infty \lambda^*_j \ketbra{\phi^*_j}{\phi^*_j}$. Next, since $\phi_1,\ldots,\phi_d$ is an orthonormal basis of the subspace spanned by the orthonormal eigenfunctions $\phi^*_1,\ldots,\phi^*_d$, for each $j\in[d]$ we have $\sum_{i=1}^d \braket{\phi_i}{\phi^*_j} \ket{\phi_i} = \ket{\phi^*_j}$, where $\braket{\phi_i}{\phi^*_j}$ is the coordinate of $\phi^*_j$ with respect to the basis $\phi_i$. Thus, we can write 
    \begin{align}
        \Big ( \sum_{i=1}^d \ketbra{\phi_i}{\phi_i} \Big )T &=  \Big ( \sum_{i=1}^d \ketbra{\phi_i}{\phi_i} \Big ) \Big ( \sum_{j=1}^\infty \lambda_j \ketbra{\phi^*_j}{\phi^*_j}  \Big )  \\
        &= \sum_{j=1}^\infty \lambda_j \Big ( \sum_{i=1}^d \ketbra{\phi_i}{ \braket{\phi_i}{\phi^*_j} \phi^*_j} \Big )\\
        &= \sum_{j=1}^\infty \lambda_j \Big (  \sum_{i=1}^d   \braket{\phi_i}{\phi^*_j} \ketbra{ \phi_i}{ \phi^*_j} \Big )\\
        &= \sum_{j=1}^\infty \lambda_j \ketbra{\phi^*_j}{\phi^*_j}\\
        &= T \ .
    \end{align}
    Substituting this equality into \Cref{eq:rayleigh-ritz-aux1}, we obtain
    \begin{align}
        T \Big ( \sum_{i=1}^d {y}_{\ell_i} \ket{\phi_i} \Big ) = \lambda_{\ell} \Big ( \sum_{i=1}^d {y}_{\ell_i} \ket{\phi_i} \Big ).
    \end{align}
    Therefore, $\sum_{i=1}^d {y}_{\ell_i} \ket{\phi_i} = \ket {\phi^*_{\ell}}$ is an eigenfunction with the corresponding eigenvalue $\lambda_{\ell}$.
\end{proof}

\section{Proofs and Discussion on Low-rank Approximation}\label{app:lora}

\subsection{\texorpdfstring{Proof of~\Cref{thm:lora-eigenspace-extractor}}{Proof of Theorem 6}}\label{app:proof-lora-eigenspace-extractor}

\paragraph{\Cref{thm:lora-eigenspace-extractor}.}
{\it
    The LoRA objective in \Cref{eq:lora-base} is an eigenspace extractor for $T$, \ie $\sspan(\hat\phi_1,\dots,\hat\phi_d) = \sspan(\phi^*_1,\dots,\phi^*_d)$ and $\sspan(\hat\psi_1,\dots,\hat\psi_d) = \sspan(\psi^*_1,\dots,\psi^*_d)$. Furthermore, it admits orthogonal nested minimizers, such that $\hat \phi_i = a_i \phi^*_i$ and $\hat \psi_i = b_i \psi^*_i$ for some nonzero scalars $a_i, b_i$ where $a_i b_i = s_i$.}

\begin{proof}
    Let $(\hat\Phi, \hat\Psi)$ be a global minimizer of the objective in \Cref{eq:lora-base}. \Cref{thm:SEYM}(i) says that $(\hat\Phi, \hat\Psi)$ satisfy
    \begin{equation}
        \sum_{i=1}^{d} \ketbra{\hat\phi_i}{\hat\psi_i} =  \sum_{i=1}^{d}  s_i  \ketbra{\phi^*_i}{\psi^*_i}  \ .
    \end{equation}
    Denote the left and right-hand sides by the operators $L_d \coloneq \sum_{i=1}^{d} \ketbra{\hat\phi_i}{\hat\psi_i}$ and $T_d \coloneq \sum_{i=1}^{d} s_i \ketbra{\phi^*_i}{\psi^*_i}$. Since the two operators are identical, their ranges are also identical. For any $\psi \in \mcH_1$, we have
    \begin{equation}
        T_d \psi = \sum_{i=1}^{d} s_i \braket{\psi}{\psi^*_i} \phi^*_i \ ,
    \end{equation}
    which implies that $\range(T_d) \subseteq \sspan(\phi^*_1,\dots,\phi^*_d)$. Furthermore, we observe that any member of $\sspan(\phi^*_1, \ldots, \phi^*_d)$ can be written as $\sum_{i=1}^{d} c_i \phi^*_i$. By setting $\psi = \sum_{i=1}^{d} \frac{c_i}{s_i} \psi^*_i$ and using the orthonormality of $(\psi^*_i)_{i \in [d]}$, we observe $T_d \psi = \sum_{i=1}^{d} c_i \phi^*_i$ for any choice of $c \in \R^d$. So, $\sspan(\phi^*_1,\dots,\phi^*_d) \subseteq \range(T_d) $. Thus, $\range(T_d) = \sspan(\phi^*_1,\dots,\phi^*_d)$. Next, for any $\psi \in \mcH_1$, we have
    \begin{equation}
        L_d \psi = \sum_{i=1}^{d} \braket{\psi}{\hat\psi_i} \hat\phi_i \ ,
    \end{equation}
    which implies that $\range(L_d) = \range(T_d) \subseteq \sspan(\hat\phi_1,\dots,\hat\phi_d)$, and $\sspan(\phi^*_1,\dots,\phi^*_d) \subseteq \sspan(\hat\phi_1,\dots,\hat\phi_d)$. This implies that $\sspan(\hat\phi_1,\dots,\hat\phi_d)$ has at least dimension $d$, which means that $\hat\phi_1,\dots,\hat\phi_d$ must be linearly independent functions, and in fact $\sspan(\hat\phi_1,\dots,\hat\phi_d)$ has exactly dimension $d$. Thus, we conclude that $\sspan(\hat\phi_1,\dots,\hat\phi_d) = \sspan(\phi^*_1,\dots,\phi^*_d)$. Similarly, we can show that $\sspan(\hat\psi_1,\dots,\hat\psi_d) = \sspan(\psi^*_1,\dots,\psi^*_d)$, which concludes the proof that LoRA objective is an eigenspace extractor (or more precisely, \emph{singular-space} extractor) for~$T$.

    Next, we show that this problem has orthogonal nested minimizers. We will prove this claim by induction. For $d=1$, the eigenspace extractor property immediately implies that $\hat\phi_1 = \alpha_1 \phi^*_1$ and $\hat\psi_1 = \beta_1 \psi^*_1$ for some constants $\alpha_1, \beta_1$ such that $\ketbra{\hat\phi_1}{\hat\psi_1} = s_1 \ketbra{\phi^*_1}{\psi^*_1}$. As induction hypothesis, assume that $\hat\phi_i = \alpha_i \phi^*_i$ and $\hat\psi_i = \beta_i \psi^*_i$, $\forall i \in [d-1]$ where $\ketbra{\hat\phi_i}{\hat\psi_i} = s_i \ketbra{\phi^*_i}{\psi^*_i}$.
    Then, if we define $T_{d-1} \defeq \sum_{i=1}^{d-1} s_i \ketbra{\phi^*_i}{\psi^*_i}$, the problem in \Cref{eq:lora-base} becomes
    \begin{equation}\label{eq:lora-schmidt-marginal}
        \argmin_{\psi_d \in \mcH_1, \ \phi_d \in \mcH_2} \norm[\Big]{T - T_{d-1} -  \ketbra{\phi_d}{\psi_d}}_{\rm HS}^2 \ .
    \end{equation}
    \Cref{thm:SEYM}(i) implies that the solution of this satisfies $\ketbra{\hat\phi_d}{\hat\psi_d} = s_d \ketbra{\phi^*_d}{\psi^*_d}$. Consequently, from the eigenspace extractor result above applied to the operator $T-T_{d-1}$ (which has $\phi^*_d$ and $\psi^*_d$ as its top left/right singular functions), we have $\range(\ketbra{\hat\phi_d}{\hat\psi_d})=\sspan(\phi_d^*)$ which implies that $\hat\phi_d = \alpha_d \phi^*_d$ for some constant $\alpha_d$. Similarly, $\hat\psi_d = \beta_d \psi^*_d$ for some constant $\beta_d$ such that $\alpha_d \beta_d = s_d$. Thus, the induction step is complete, and the problem admits orthogonal nested minimizers since singular functions $(\phi^*_i)_{i \geq 1}$ (and $(\psi^*_i)_{i \geq 1}$) are pairwise orthogonal. 
\end{proof}

\subsection{Proof of LoRA Objective Simplification}\label{app:proof-lora-error-HS}

In \Cref{sec:lora}, we have defined the low-rank approximation (LoRA) objective $\mcL_d \coloneq \mcL_{\lora}(\bphi_{[d]}, \bpsi_{[d]})$ as 
\begin{equation}\label{eq:lora-appendix}
    \mcL_d \coloneq -2\sum_{i=1}^{d} \braket{\phi_i}{T \psi_i}_{\mcH_2} + \sum_{i=1}^{d} \sum_{j=1}^{d} \braket{\phi_i}{\phi_j}_{\mcH_2} \braket{\psi_i}{\psi_j}_{\mcH_1} \ .
\end{equation}
Minimizing $\mcL_d$ is equivalent to minimizing the original objective in \Cref{eq:lora-base} since the two objectives only differ by the constant $\norm[\big]{T}_{\rm HS}^2$, shown next.

\begin{lemma}\citep[Lemma C.1]{ryu2024operator} 
If $T$ is a Hilbert-Schmidt operator, then 
    \begin{equation}\label{eq:lora-error-HS}
        \mcL_{\lora}(\bphi_{[d]}, \bpsi_{[d]}) = \norm[\Big]{T - \sum_{i=1}^{d} \ketbra{\phi_i}{\psi_i}}_{\rm HS}^2 - \norm[\big]{T}_{\rm HS}^2 \ .
    \end{equation}
\end{lemma}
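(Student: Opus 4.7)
The plan is a direct expansion of the squared Hilbert--Schmidt norm using the bilinearity of the Hilbert--Schmidt inner product $\langle A, B \rangle_{\mathrm{HS}} = \sum_{k} \braket{e_k}{A^* B e_k}$ (for any orthonormal basis $(e_k)$ of $\mcH_1$). Writing $S \coloneq \sum_{i=1}^{d} \ketbra{\phi_i}{\psi_i}$, I expand
\begin{equation}
    \norm{T - S}_{\mathrm{HS}}^2 = \norm{T}_{\mathrm{HS}}^2 - 2 \langle T, S \rangle_{\mathrm{HS}} + \norm{S}_{\mathrm{HS}}^2,
\end{equation}
so proving the claim reduces to showing $-2\langle T, S\rangle_{\mathrm{HS}} + \norm{S}_{\mathrm{HS}}^2 = \mcL_{\mathrm{LoRA}}(\bphi_{[d]},\bpsi_{[d]})$.

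Next, I would establish two elementary identities for rank-one operators between separable Hilbert spaces. First, for any Hilbert--Schmidt $T:\mcH_1\to\mcH_2$ and any $\phi\in\mcH_2,\,\psi\in\mcH_1$,
\begin{equation}
    \langle T, \ketbra{\phi}{\psi}\rangle_{\mathrm{HS}} = \braket{\phi}{T\psi}_{\mcH_2},
\end{equation}
which follows by expanding the HS inner product in an orthonormal basis containing $\psi/\|\psi\|$ (or directly from $(\ketbra{\phi}{\psi})^* = \ketbra{\psi}{\phi}$ and $\mathrm{tr}(\ketbra{\psi}{\phi}T) = \braket{\phi}{T\psi}$). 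Second, for rank-one operators,
\begin{equation}
    \langle \ketbra{\phi_i}{\psi_i}, \ketbra{\phi_j}{\psi_j}\rangle_{\mathrm{HS}} = \braket{\phi_i}{\phi_j}_{\mcH_2}\braket{\psi_i}{\psi_j}_{\mcH_1},
\end{equation}
which I would derive by applying the first identity with $T$ replaced by $\ketbra{\phi_j}{\psi_j}$ and then evaluating the resulting action, noting that in the real Hilbert space setting all pairings are symmetric so no conjugation arises. Summing over $i,j$ then gives the cross term and quadratic term of $\mcL_{\mathrm{LoRA}}$ exactly.

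Finally, I would combine the two identities: the linear term in $S$ contributes $-2\sum_{i=1}^d \braket{\phi_i}{T\psi_i}_{\mcH_2}$, and the $\norm{S}_{\mathrm{HS}}^2$ term contributes $\sum_{i,j} \braket{\phi_i}{\phi_j}_{\mcH_2}\braket{\psi_i}{\psi_j}_{\mcH_1}$, matching \Cref{eq:lora-appendix}. Rearranging gives the stated identity. The main (minor) obstacle is justifying the rank-one inner product formulas cleanly; I would handle this by computing $\mathrm{tr}((\ketbra{\phi_j}{\psi_j})^* \ketbra{\phi_i}{\psi_i}) = \mathrm{tr}(\ketbra{\psi_j}{\phi_j}\ketbra{\phi_i}{\psi_i})$ and collapsing the inner products, which is a one-line calculation once bra-ket conventions are fixed. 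No convergence issues arise because $T$ is Hilbert--Schmidt and $S$ has finite rank, so all traces are absolutely summable.
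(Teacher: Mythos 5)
Your proposal is correct and is essentially the same argument as the paper's: both expand the squared Hilbert--Schmidt norm as a quadratic and identify the cross term $-2\sum_i \braket{\phi_i}{T\psi_i}$ and the quadratic term $\sum_{i,j}\braket{\phi_i}{\phi_j}\braket{\psi_i}{\psi_j}$. The paper carries an explicit orthonormal basis $(h_j)$ of $\mcH_1$ through the calculation and resums via $\sum_j \ketbra{h_j}{h_j}=I$, whereas you package the same computation into the two rank-one HS inner product identities first; the underlying algebra is identical and both handle convergence the same way (finite rank of $S$ plus $T$ Hilbert--Schmidt).
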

\begin{proof}
    Let $\{h_j : j \in J\}$ be an orthonormal basis of $\mcH_1$. Also note that $\sum_{j \in J} \ketbra{h_j}{h_j} = I$, where $I$ denotes the identity operator. Then, we have
    \begin{equation}
        \begin{aligned}
            \norm[\Big]{T - \sum_{i=1}^{d} \ketbra{\phi_i}{\psi_i}}_{\rm HS}^2 &- \norm[\big]{T}_{\rm HS}^2 \\ 
            &= \sum_{j \in J} \norm[\Big]{ T \ket{h_j} - \sum_{i=1}^{d} \ket{\phi_i} \braket{\psi_i}{h_j}}^2  - \sum_{j \in J} \norm{ T \ket{h_j}}^2 \\
            &= \sum_{j \in J} \left( -2 \sum_{i=1}^{d} \braket{\psi_i}{h_j} \braket{\phi_i}{Th_j} + \sum_{i=1}^{d} \sum_{i'=1}^{d} \braket{\psi_i}{h_j} \braket{\psi_{i'}}{h_j} \braket{\phi_i}{\phi_{i'}} \right) \\
            &= -2 \sum_{i=1}^{d} \braket{\phi_i}{T \Big(\sum_{j \in J} \ketbra{h_j}{h_j}\Big)\psi_i} + \sum_{i=1}^{d} \sum_{i'=1}^{d} \braket{\psi_i}{\Big(\sum_{j \in J} \ketbra{h_j}{h_j}\Big)\psi_{i'}} \braket{\phi_i}{\phi_{i'}} \\
            &= -2 \sum_{i=1}^{d} \braket{\phi_i}{T \psi_i} + \sum_{i=1}^{d} \sum_{i'=1}^{d} \braket{\psi_i}{\psi_{i'}} \braket{\phi_i}{\phi_{i'}} \\
            &= \mcL_{\lora}(\bphi_{[d]}, \bpsi_{[d]}) \ .
        \end{aligned}
    \end{equation}
\end{proof}

\subsection{LoRA for Contextual Kernel}\label{app:lora-contextual-kernel}

We give more detailed derivations for \Cref{sec:lora-contextual-kernel} on the contextual kernel $\kxa = \frac{P^+(x,a)}{P_X(x)P_A(a)}$. The first term of the LoRA objective in \Cref{eq:lora} becomes
\begin{align}
    \braket{\phi_i}{ \txa \psi_i} =  \int_{x,a} \phi_i(x) \frac{P^+(x,a)}{\px(x)\pa(a)} \psi_i(a)  \px(x)\pa(a) dx da = \E_{x,a \sim P^+(x,a)}[ \phi_i(x) \psi_i(a)] \ .
\end{align}
The second term of the LoRA objective becomes
\begin{align}
    \sum_{i=1}^{d} \sum_{j=1}^{d}  \braket{\phi_i}{\phi_j} \braket{\psi_i}{\psi_j} &= \sum_{i=1}^{d} \sum_{j=1}^{d}  \int_{a} \psi_i(a) \psi_j(a) \pa(a) \, da \int_{x} \phi_i(x) \phi_j(x) \px(x) \, dx \\
    &= \int_{x} \int_{a} \sum_{i=1}^{d} \sum_{j=1}^{d} \phi_i(x)\psi_i(a)  \phi_j(x) \psi_j(a) \px(x) \pa(a) \, dx\, da \\ 
    &= \int_{x} \int_{a} \Big(\sum_{i=1}^{d}\phi_i(x)\psi_i(a) \Big) \Big( \sum_{j=1}^{d}   \phi_j(x) \psi_j(a) \Big) \px(x) \pa(a) \, dx \, da \\
    &= \E_{x \sim \px, a \sim \pa} \left[ \big(\Phi(x)^\top \Psi(a) \big)^2 \right] \ .
\end{align}
Thus, the final variational objective for the contextual kernel is given as follows
\begin{equation}
    \mcL_{\lora}(\Phi, \Psi) = -2 \E_{x,a \sim P(x,a)}[ \Phi(x)^\top \Psi(a)] 
 + \E_{x \sim \px, a \sim \pa} \left[ \big(\Phi(x)^\top \Psi(a) \big)^2 \right]  \ . 
\end{equation}
The pseudocode for training LoRA for the contextual kernel is given in \Cref{alg:lora}. 

\begin{algorithm}[t]
    \caption{LoRA for eigenspace extraction of the contextual kernel}
    \label{alg:lora}
    \begin{algorithmic}
    \State \textbf{Require:} $X \sim \px$, $A \sim P^+(\cdot \mid X)$, batch size $m$ 
    \State \textbf{Output:} $\Phi = [\phi_1, \dots, \phi_d] : \mcX \to \mathbb{R}^d$, and $\Psi = [\psi_1, \dots, \psi_d] : \mcA \to \mathbb{R}^d$
    \State Initialize $\Phi: \mcX \to \R^d$, $\Psi \to \R^d$
    \For{each training step}
        \State Sample batch of inputs $\{x_1,\dots,x_{m}\}$, then context samples $a_i \sim P^+(\cdot | x_i)$ for $i\in[m]$
        \State Estimate the first term in \Cref{eq:lora-contextual-final} using all pairs $\{(x_i,a_i): i \in [m]\}$ in the minibatch:
        \[ \hat \E_{(x,a) \sim P^+(x,a)} [\Phi(x)^{\top}\Psi(a)] \approx \frac{1}{m}\sum_{i \in [m]} \Phi(x_i)^{\top}\Psi(a_i) \]
        \State Estimate the second term in \Cref{eq:lora-contextual-final} using all $m(m-1)$ pairs $(x_i,a_{(-i)})$ in the minibatch
        \[ \hat \E_{x \sim \px, a \sim \pa} \left[ \big(\Phi(x)^\top \Psi(a) \big)^2 \right] \approx \frac{1}{m} \sum_{i\in[m]} \frac{1}{(m-1)} \sum_{j \in [m] \setminus \{i\}} (\Phi(x_i)^{\top}\Psi(a_j))^2 \]
        \State Compute gradient of the loss $\mcL_{\lora}(\Phi, \Psi)$ and update the parameters of $\Phi$ and $\Psi$  
    \EndFor
    \end{algorithmic}
\end{algorithm}

\subsection{Derivation of SCL-LoRA Equivalence}\label{app:contrastive-lora-equivalence}

We expand on the discussion in \Cref{sec:lora-contextual-kernel} on contrastive learning and the LoRA connection. 
Spectral contrastive loss (SCL)~\citep{haochen2021provable} is given by
\begin{equation}\label{eq:contrastive-loss-appendix}
   \mcO_{\rm SCL} : \min_{\psi_1,\dots, \psi_d \in \lap}  \underset{X \sim \px}{\E} \; \underset{A,A^+ \sim P^+(\cdot |X)}{\E} \bigg[ - \dotp{\tPsi(A), \tPsi(A^+)} + \frac{1}{2} \underset{A^- \sim \pa}{\E} \Big[\dotp{\tPsi(A),\tPsi(A^-)}^2 \Big] \bigg]   ,
\end{equation}
where $\tPsi = \Psi - \E[\Psi]$ are centered features. The next result shows that $\mcO_{\rm SCL}$ extracts the eigenspace.

\begin{theorem}\label{thm:contrastive}
[Theorem 3.4 in \cite{zhai2025contextures}]\label{thm:SCL}    
Denote the minimizer of the contrastive objective $\mcO_{\rm SCL}$ in \Cref{eq:contrastive-loss-appendix} by $\htPsi$. Then, $\htPsi$ extracts the top-$d$ eigenspace of $\taa$, and $\htPhi = \txa\htPsi$ extracts the top-$d$ eigenspace of $\txx$.
\end{theorem}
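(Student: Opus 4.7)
The plan is to reduce $\mcO_{\rm SCL}$ to the LoRA base objective applied to the self-adjoint positive-pair operator $\taa$, and then invoke \Cref{thm:lora-eigenspace-extractor} essentially as a black box. The second claim then follows from the factorizations $\taa = \tax \txa$ and $\txx = \txa \tax$, which relate the two self-adjoint operators through the singular value decomposition of $\txa$.

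Concretely, I would first rewrite each term of $\mcL_{\rm SCL}$ as an inner product on $\lap$. Using $\taa = \tax \txa$ and the definition of $\txa$ as a conditional expectation, the attractive term becomes $\sum_i \braket{\tilde\psi_i}{\taa \tilde\psi_i}_{\pa} = \E_X \E_{A,A^+ \sim \pplus(\cdot|X)}[\tilde\Psi(A)^\top \tilde\Psi(A^+)]$. For the repulsive term, expanding the square and exchanging the summation and expectation yields $\sum_{i,j} \braket{\tilde\psi_i}{\tilde\psi_j}_{\pa}^2 = \E_{A,A' \sim \pa}[(\tilde\Psi(A)^\top \tilde\Psi(A'))^2]$, since the inner double sum factors as $(\sum_i \tilde\psi_i(A)\tilde\psi_i(A'))^2$. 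Comparing with \Cref{eq:lora-appendix}, this shows $\mcL_{\rm SCL} = \tfrac{1}{2}\, \mcL_{\lora}(\tilde\Psi,\tilde\Psi)$ for the choice $T = \taa$, so the two problems have identical minimizers.

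Next, I would apply \Cref{thm:lora-eigenspace-extractor} to deduce that $\hat{\tilde\Psi}$ spans the top-$d$ eigenspace of $\taa$. The one subtlety here is the centering constraint: $\tilde\Psi$ is orthogonal to $\psi_1^* \equiv 1$, so the LoRA theorem should be invoked on the restriction of $\taa$ to the orthogonal complement of the constants. That restriction remains compact and self-adjoint with eigenpairs $\{(s_i^2,\psi_i^*)\}_{i \geq 2}$, and every step of \Cref{thm:lora-eigenspace-extractor} goes through verbatim on this subspace. Thus $\sspan(\hat{\tilde\psi}_1,\ldots,\hat{\tilde\psi}_d) = \sspan(\psi_2^*,\ldots,\psi_{d+1}^*)$, matching the representation learning convention of discarding the trivial constant mode.

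For the second claim, the SVD of $\txa$ gives $\txa \psi_i^* = s_i \phi_i^*$ with $s_i > 0$ and distinct by assumption; consequently $\txa$ restricts to a linear bijection from $\sspan(\psi_i^*)_{i \in [d]}$ onto $\sspan(\phi_i^*)_{i \in [d]}$, which is exactly the top-$d$ eigenspace of $\txx = \txa \tax$. Therefore $\hat{\tilde\Phi} = \txa \hat{\tilde\Psi}$ spans the top-$d$ eigenspace of $\txx$. The main obstacle is the bookkeeping around centering---keeping track that the ``top-$d$ eigenspace'' in both conclusions refers to the non-constant part of the spectrum, and verifying that the LoRA identifiability argument survives the restriction to the centered subspace---but the restriction of $\taa$ handles this cleanly without requiring any new machinery.
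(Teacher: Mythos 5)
Your reduction of $\mcL_{\rm SCL}$ to $\tfrac{1}{2}\,\mcL_{\lora}(\tilde\Psi,\tilde\Psi)$ with $T=\taa$ is correct, and it is in fact the route the paper itself sketches in \Cref{sec:lora-contextual-kernel}. The paper's appendix proof of \Cref{thm:contrastive}, however, proceeds differently: it never invokes \Cref{thm:lora-eigenspace-extractor} as a black box. Instead it expands $\tpsi_i = \sum_{j\geq 1} c_{ij}\psi^*_j$ directly in the eigenbasis of $\taa$, writes both terms of the SCL objective as functions of the Gram matrix $\bB=\bC^\top\bC$, reduces the variational problem to minimizing $\norm{\bB - \diag(\lambda_1,\dots,\lambda_d)}_{\rm F}^2$ over rank-$\leq d$ matrices, and then applies the finite-dimensional Eckart--Young--Mirsky theorem (\Cref{thm:SEYM}(ii)). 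Your approach is more modular and is probably the ``right'' conceptual argument given that \Cref{thm:lora-eigenspace-extractor} is already proved; the paper's argument has the advantage of being fully explicit and self-contained, and it makes the role of the diagonal target $\bD=\diag(\lambda_1,\dots,\lambda_d)$ visible rather than hiding it inside the LoRA machinery.

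One bookkeeping point worth flagging: you correctly identify that the centering constraint means the LoRA theorem must be applied to the restriction of $\taa$ to $\{1\}^\perp$, and you conclude that $\hat{\tilde\Psi}$ spans $\sspan(\psi^*_2,\dots,\psi^*_{d+1})$ in the main-text indexing (where $\psi^*_1\equiv 1$). The paper's appendix silently switches to zero-based indexing ($\psi^*_0\equiv 1$), so its ``top-$d$ eigenspace of $\taa$'' means exactly your $\sspan(\psi^*_2,\dots,\psi^*_{d+1})$ --- the two statements agree, but a reader should be warned about the index shift. Your argument for the second claim (that $\txa$ is a linear bijection from the top-$d$ right singular subspace onto the top-$d$ eigenspace of $\txx$ because the $s_i$ are positive and distinct) is also correct and is slightly more explicit than the paper's one-line ``Equivalently'' remark.
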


\begin{proof}
    For completeness, we give the proof of this result. Suppose $\psi_i = \sum_{j\geq 0}c_{ij}\psi^*_j$ where $(\psi_j)_{j \geq 0}$ form an orthonormal basis for $\lap$. Extracting the trivial top-eigenfunction, let $\tpsi_i = \psi_i - \E[\psi_i] = \sum_{j\geq 1} c_{ij}\psi^*_j$. Denote matrix $\bC$ with entries  $(c_{ij})_{i, j \in [d]}$, matrix $\bB \coloneq \bC^\top \bC$ with entries $(b_{ij})_{i,j\in[d]}$, and let $s_i$ be the $i^\textrm{th}$ singular value of $\txa$ and $\lambda_i =s_i^2$. Then, we have 
    \begin{align}
        \underset{X \sim \px}{\E} & \underset{A,A' \sim P^+(\cdot |X)}{\E} \brac{-\dotp{\tPsi(A), \tPsi(A')}} \\
        &= - \iiint \dotp{\tPsi(a), \tPsi(a')} P^+(a|x) P^+(a'|x) \px(x) dx\,da\,da' \\
        &= - \int \dotp{\int \tPsi(a) P^+(a|x) dy, \int \tPsi(a') P^+(a'|x) da'} \px(x) dx\\
        &= - \int \dotp{\txa \tPsi(x), \txa \tPsi(x)} \px(x) dx = \|\txa \tPsi\|^2_{\px} \\ 
        &= - \sum_{i} \lambda_i b_{ii} \ , \label{eq:contrastive-expansion-1}
    \end{align}     
    where in the last line we have used $\txa \psi^*_j = s_i \phi_j^*$. Next, for the second term of SCL, we have
\begin{align}
    \underset{A,A^- \sim \pa}{\E} \brac{\dotp{\tPsi(A), \tPsi(A^-)}^2}
    &= \iint \Big[\sum_{i=1}^d \tilde \psi_i(a)\tilde \psi_i(a^-)\Big]^2 d\pa(a) d\pa(a^-)  \\
    &= \sum_{1\leq i,j\leq d} \brac{\int \tilde\psi_i(a)\tilde\psi_j(a) d\pa(a)}^2  \label{eq:contrastive-expansion-2} = \sum_{i,j} b_{ij}^2   .
\end{align}
Thus, SCL becomes
\begin{equation}
   - \sum_i \lambda_i b_{ii} + \frac12 \sum_{i,j} b_{ij}^2 \\= \|\bB - \bD\|_{\rm F}^2 - \|\bD\|_{\rm F}^2  \ ,
\end{equation}
and if suffices to minimize $\|\bB - \bD\|_{\rm F}^2$ where $\rank(\bB) \leq d$. By \Cref{thm:SEYM}(ii), we know that the minimizer is $\hat\bB = \text{diag}(\lambda_1,\cdots,\lambda_d)$, which gives $\hat\bC = \bU\diag(s_1,\cdots,s_d)$ where $\bU \in \R^{d\times d}$ is an orthonormal matrix. This means that the minimizer $\htPsi = \hat \bC \Psi^*$ extracts the linear span of $(\psi^*_i)_{i \in [d]}$, \ie top-$d$ eigenspace of $\taa$. Equivalently, $\htPhi = \txa\htPsi$ extracts the top-$d$ eigenspace of $\txx$. 
\end{proof}

Now, recall the LoRA objective in \Cref{eq:lora}. For simplicity, write it for the eigenfunctions of a self-adjoint operator (as opposed to the general SVD):
\begin{equation}\label{eq:lora-simplified}
   \min_{\tpsi_1,\dots,\tpsi_d \in \lap} -2\sum_{i=1}^{d} \braket{\tpsi_i}{\txa\tpsi_i} + \sum_{i=1}^{d} \sum_{j=1}^{d} \braket{\psi_i}{\psi_j}^2 \ . 
\end{equation}
Note that \Cref{eq:contrastive-expansion-2} implies that the second terms of the LoRA objective~\Cref{eq:lora-simplified} and contrastive objective~\Cref{eq:contrastive-loss-appendix} are equivalent. Also, we have
\begin{align}
    \tpsi_i &= \sum_{j} c_{ij} \psi^*_j \quad \text{and} \quad
    \txa\tpsi_i = \sum_{j} c_{ij} \txa\psi^*_j = \sum_{j} c_{ij} \lambda_j \psi^*_j \ .
\end{align}
Using the fact that $\lambda_j = s_j^2$ and the definition of matrix $\bB$, we obtain $\braket{\tpsi_i}{\txa\tpsi_i} = \sum_{i} s_{i}^2 b_{ii}$. Therefore, using \Cref{eq:contrastive-expansion-1}, we see that the first terms of the LoRA and contrastive objectives are equivalent. Hence, the spectral contrastive loss is a special case of the unconstrained LoRA objective for extracting the top-$d$ eigenspace. \looseness-1

\subsection{Connection to Related Work on Low-rank Approximation}\label{app:related-lora}

To our knowledge, \cite{bengio2004learning} is the first work to study eigenfunction extraction from an optimization angle. Specifically, they proposed a method for sequentially extracting the eigenpairs of a continuous kernel $K$ via minimizing the factorization error, that is, the approximation error of $K$ by the rank-$d$ approximation $K_d$:\looseness=-1
\begin{equation}
    K_d(x,y) \coloneq \sum_{i=1}^m \lambda_i^* \phi_i^*(x) \phi_i^*(y) \ .
\end{equation}
Given the eigenpairs $\{(\lambda_i^*, \phi_i^*)\}_{i=1}^{d-1}$, the $d$-th eigenpair $(\lambda_d^*, \phi_d^*)$ can be obtained by minimizing the following objective
\begin{equation}\label{eq:bengio-sequential}
    \min_{\lambda_d, \phi_d : \norm{\phi_d}_\px^2 = 1} \iint \big( K(x,y) - \lambda_d \phi_i(x) \phi_i(y) - K_{d-1}(x,y) \big)^2 dP_{\mcX}(x)dP_{\mcX}(y) \ .
\end{equation}
\begin{proposition}\citep[Proposition 3]{bengio2004learning}\label{thm:bengio-sequential}
    Sequentially solving \Cref{eq:bengio-sequential} for $i = 1,\dots, d$ gives the top-$d$ eigenpairs $\{(\lambda_i^*, \phi_i^*)\}_{i \in [d]}$ of $K$.
\end{proposition}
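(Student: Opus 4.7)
The plan is to proceed by induction on $d$, at each step reducing \Cref{eq:bengio-sequential} to a best rank-$1$ Hilbert-Schmidt approximation of the residual kernel $K - K_{d-1}$, and then invoking \Cref{thm:SEYM}. The crucial observation is that the inner integral in \Cref{eq:bengio-sequential} is exactly the squared Hilbert-Schmidt norm $\|\,(K - K_{d-1}) - \lambda_d \ketbra{\phi_d}{\phi_d}\,\|_{\rm HS}^2$ of the difference between the residual operator on $L^2(P_\mcX)$ and a single rank-$1$ operator.

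Step~1 (base case, $d=1$): With $K_0 \equiv 0$, the problem reduces to finding the best rank-$1$ approximation of $K$ itself in HS norm over operators of the form $\lambda_1 \ketbra{\phi_1}{\phi_1}$ with $\|\phi_1\|_{P_\mcX} = 1$. By \Cref{thm:SEYM}(i) (Schmidt's theorem) and the assumed uniqueness of the top eigenvalue, the unique minimizer among rank-$1$ self-adjoint operators is $\lambda_1^* \ketbra{\phi_1^*}{\phi_1^*}$, which under the unit-norm constraint yields $\lambda_1 = \lambda_1^*$ and $\phi_1 = \pm\phi_1^*$.

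Step~2 (inductive step): Assume the first $d-1$ sequential optimizations have recovered $(\lambda_i^*, \phi_i^*)_{i=1}^{d-1}$. Then $K_{d-1}(x,y) = \sum_{i=1}^{d-1} \lambda_i^* \phi_i^*(x)\phi_i^*(y)$ coincides with the top-$(d-1)$ truncation of the Mercer expansion of $K$, so the residual operator has spectral decomposition
\begin{equation}
    K - K_{d-1} \;=\; \sum_{i \geq d} \lambda_i^* \ketbra{\phi_i^*}{\phi_i^*},
\end{equation}
whose top eigenpair is $(\lambda_d^*, \phi_d^*)$. Problem \Cref{eq:bengio-sequential} is then the best rank-$1$ approximation of $K - K_{d-1}$; by \Cref{thm:SEYM}(i), together with $\lambda_d^* > \lambda_{d+1}^*$ from the distinct-eigenvalue assumption, the minimizer is uniquely $\lambda_d^* \ketbra{\phi_d^*}{\phi_d^*}$. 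The unit-norm constraint $\|\phi_d\|_{P_\mcX} = 1$ then pins down $\lambda_d = \lambda_d^*$ and $\phi_d = \pm \phi_d^*$. Orthogonality of $\phi_d^*$ to the previously extracted $\phi_1^*, \dots, \phi_{d-1}^*$ is automatic from Mercer's theorem, completing the induction.

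The main obstacle is the uniqueness of the rank-$1$ minimizer of the residual, which genuinely requires the distinct-eigenvalue hypothesis; if $\lambda_d^* = \lambda_{d+1}^*$, any unit vector in the corresponding eigenspace minimizes the residual, and the sequential procedure would only identify the top-$d$ eigenspace, not the individual eigenfunctions. A minor subtlety is the sign ambiguity in $\phi_d^*$, which is intrinsic to the parameterization $\lambda_d \phi_d(x)\phi_d(y)$ (invariant under $\phi_d \mapsto -\phi_d$) and has to be acknowledged in the conclusion ``recovers $(\lambda_d^*, \phi_d^*)$ up to sign.''
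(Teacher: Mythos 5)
Your proposal is correct, and the reduction-to-residual argument is the standard and natural one. Note, however, that the paper does not actually supply a proof of this proposition: it is stated with a citation to \citet{bengio2004learning}. The closest in-paper analogue is the proof of \Cref{thm:lora-eigenspace-extractor} in \Cref{app:proof-lora-eigenspace-extractor}, which establishes the orthogonal-nested-minimizer property for the general LoRA objective via exactly the same inductive residual argument $T \rightsquigarrow T - T_{d-1}$ and an appeal to \Cref{thm:SEYM}(i); your proof is a specialization of that argument to the symmetric, rank-one, unit-norm parameterization of \citet{bengio2004learning}.

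One small point worth making explicit: \Cref{thm:SEYM}(i) gives the best approximation over \emph{all} rank-$1$ operators, whereas \Cref{eq:bengio-sequential} only searches over self-adjoint rank-$\leq 1$ operators of the form $\lambda_d \ketbra{\phi_d}{\phi_d}$. Your argument is still valid because the residual $K - K_{d-1}$ is self-adjoint and positive semidefinite (all $\lambda_i^* = (s_i^*)^2 \geq 0$ for the contextual kernel), so the unconstrained Schmidt minimizer $\lambda_d^*\ketbra{\phi_d^*}{\phi_d^*}$ happens to lie inside the constrained set, making the two problems share the same unique minimizer. If the residual had a dominant negative eigenvalue, a general rank-$1$ approximant and a symmetric one could disagree, and the reduction would need an extra line (e.g., fixing $\lambda = \braket{\phi}{(K-K_{d-1})\phi}$ and maximizing $|\braket{\phi}{(K-K_{d-1})\phi}|$). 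Spelling this out would tighten the step where you pass from ``unique minimizer among rank-$1$ self-adjoint operators'' to an application of a theorem stated for arbitrary rank-$1$ operators.
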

Although this sequential approach gives exact eigenfunctions and resembles the sequential nesting method in \Cref{sec:sequential-nesting}, it has the obvious limitation of failing the joint optimization criteria. The original paper \citep{bengio2004learning} did not discuss how to recover the eigenfunctions (or the eigenspace) jointly. However, we can transform this into a joint optimization problem by trading the precision of the top-$d$ eigenpairs for retaining only the top-$d$ eigenspace. Then, having access to an \emph{eigenspace extractor}, we can follow the methods in \Cref{sec:eigenspace-to-eigenfunctions} to obtain the exact eigenfunctions via joint optimization.

Let $T$ and $T_d$ be integral operators of $K$ and $K_d$. Then, Hilbert-Schmidt norm of $T - T_d$ is given by
\begin{equation}
    \norm{T - T_d}_{\rm HS}^2 = \int \int | K(x,y) - K_d(x,y)|^2 dP_{\mcX}(x)dP_{\mcX}(y)  \ .
\end{equation}
For $\Phi = (\phi_1,\dots,\phi_d) \in \lxp^d$ and $\lambda \in \R^d$, denote $K_{\lambda, \Phi} \coloneq \sum_{i=1}^{d} \lambda_i \phi_i(x) \phi_i(y)$ with operator $T_{\lambda, \Phi}$. Then, the joint extension of \Cref{eq:bengio-sequential} becomes the following optimization problem
\begin{equation}\label{eq:bengio-joint}
    \min_{\lambda, \Phi} \;\; \norm{T - T_{\lambda, \Phi}}_{\rm HS}^2 \ . 
\end{equation}
The above derivation can be viewed as a special case of \Cref{thm:lora-eigenspace-extractor}, where we demonstrated that the base LoRA objective in \Cref{eq:lora-base} is, in fact, an eigenspace extractor for the operator $T$.

Recently, \citet{ryu2024operator} proposed using nesting methods with the LoRA objective for exact singular function extraction, called NeuralSVD. Recall that our \Cref{thm:sequential-nesting-general} (sequential nesting) and \Cref{thm:joint-nesting-general} (joint nesting) show the extraction of ordered eigenfunctions given \emph{any} base problem $\mcO$ that is an eigenspace extractor and admits orthogonal nested minimizers. Since LoRA objective is shown to be such a base problem (\Cref{thm:lora-eigenspace-extractor}), the following results of \citet{ryu2024operator} become special cases of \Cref{thm:sequential-nesting-general} and \Cref{thm:joint-nesting-general}. 
 
\begin{proposition}\citep[Theorem 3.2, Sequential Nesting]{ryu2024operator}\label{thm:lora-sequential}
    Suppose that $T: \mcH_1 \to \mcH_2$ is compact. Let $(\hat\psi_m, \hat\phi_m) \in \mcH_1 \times \mcH_2$ be a global minimizer of $\mcL_{\lora}(\bpsi_{[m]}, \bphi_{[m]})$, where $\sum_{i=1}^{m-1} \ketbra{\phi_i}{\psi_i} = \sum_{i=1}^{m-1} s_i \ketbra{\phi^*_i}{\psi^*_i}$. If $s_m > s_{m+1}$, then $\ketbra{\hat\phi_m}{\hat\psi_m} = s_m \ketbra{\phi^*_m}{\psi^*_m}$.
\end{proposition}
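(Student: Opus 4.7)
The plan is to reduce the statement to a direct application of Schmidt's best low-rank approximation theorem (Theorem~\ref{thm:SEYM}, item~1) applied to the residual operator $T - T_{m-1}$, where $T_{m-1} \coloneq \sum_{i=1}^{m-1} s_i \ketbra{\phi^*_i}{\psi^*_i}$.

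First, using the identity $\mcL_{\lora}(\bpsi_{[m]}, \bphi_{[m]}) = \|T - \sum_{i=1}^{m} \ketbra{\phi_i}{\psi_i}\|_{\rm HS}^2 - \|T\|_{\rm HS}^2$ established in Appendix~\ref{app:proof-lora-error-HS}, I would substitute the hypothesis $\sum_{i=1}^{m-1} \ketbra{\phi_i}{\psi_i} = T_{m-1}$ into the objective. Since the first $m-1$ components are fixed and the constant $\|T\|_{\rm HS}^2$ plays no role, minimizing $\mcL_{\lora}$ over $(\phi_m, \psi_m)$ is equivalent to minimizing
\begin{equation*}
    \norm[\Big]{(T - T_{m-1}) - \ketbra{\phi_m}{\psi_m}}_{\rm HS}^2 \ .
\end{equation*}
Thus the problem reduces to finding the best rank-one approximation of the residual operator $R \coloneq T - T_{m-1}$ in Hilbert--Schmidt norm.

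Next, I would invoke the SVD structure of the residual. Since $T = \sum_{i=1}^{\infty} s_i \ketbra{\phi^*_i}{\psi^*_i}$ and $T_{m-1}$ strips off the top $m-1$ components, we have $R = \sum_{i=m}^{\infty} s_i \ketbra{\phi^*_i}{\psi^*_i}$, which is the SVD of $R$ with singular values $s_m \geq s_{m+1} \geq \cdots$. By Theorem~\ref{thm:SEYM}(1) (Schmidt's theorem), the best rank-one approximation of $R$ is precisely the truncated SVD $R_1 = s_m \ketbra{\phi^*_m}{\psi^*_m}$. Hence any global minimizer $(\hat\phi_m, \hat\psi_m)$ must satisfy $\ketbra{\hat\phi_m}{\hat\psi_m} = s_m \ketbra{\phi^*_m}{\psi^*_m}$ as an operator equality.

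The main obstacle is the uniqueness of the rank-one minimizer, which is exactly where the strict gap hypothesis $s_m > s_{m+1}$ enters. Schmidt's theorem alone guarantees optimality of the truncated SVD but not uniqueness of the optimizer as an operator: if $s_m = s_{m+1}$, then any unit-norm pair selected from the top singular subspace of $R$ would achieve the same approximation error, so $\ketbra{\hat\phi_m}{\hat\psi_m}$ would only be determined up to rotation inside the degenerate eigenspace. Under the strict gap $s_m > s_{m+1}$, the top singular subspace of $R$ is one-dimensional, forcing $\hat\phi_m \propto \phi^*_m$ and $\hat\psi_m \propto \psi^*_m$ with the product of scalars equal to $s_m$, which yields the claimed operator identity. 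I would make this uniqueness argument explicit by considering any candidate $\ketbra{\phi_m}{\psi_m}$ achieving the optimum, expanding it in the singular bases of $R$, and showing that the Hilbert--Schmidt error strictly decreases unless all weight concentrates on the $(\phi^*_m, \psi^*_m)$ direction.
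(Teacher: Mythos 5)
Your proposal is correct and takes essentially the same route the paper uses: in the proof of Theorem~\ref{thm:lora-eigenspace-extractor}'s orthogonal nested minimizer property, the paper performs exactly this reduction — substituting the first $m-1$ fixed components to obtain the best rank-one approximation problem for the residual $T - T_{m-1}$ and then invoking Theorem~\ref{thm:SEYM}(1). Your explicit discussion of why the gap $s_m > s_{m+1}$ is needed for uniqueness of the rank-one minimizer is a useful clarification that the paper handles more implicitly via its blanket assumption that all singular values are distinct.
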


\begin{proposition}\citep[Theorem 3.3, Joint Nesting]{ryu2024operator}\label{thm:lora-joint}
    Assume that $T: \mcH_1 \to \mcH_2$ is compact and top-$(d+1)$ singular values are distinct. Let $\{(\hat\psi_i, \hat\phi_i) : i \in [d]\} \in (\mcH_1 \times \mcH_2)^d$ be the global minimizer of 
    \begin{equation}\label{eq:neuralsvd-joint-objective}
        \mcL_{\jnt}(\bpsi_{[d]},\bphi_{[d]},\bw) \coloneq \sum_{i=1}^{d} w_i \mcL_{\lora}(\bpsi_{[i]}, \bphi_{[i]})
    \end{equation}
    for some positive weights $\bw \in \R_+^d$. Then, $\ketbra{\hat\phi_i}{\hat\psi_i} = s_i \ketbra{\phi^*_i}{\psi^*_i}$ for each $i\in[d]$.
\end{proposition}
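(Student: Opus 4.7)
The plan is to derive this proposition as a direct corollary of the general joint nesting theorem (\Cref{thm:joint-nesting-general}) applied to the sequence of base LoRA problems, leaning on \Cref{thm:lora-eigenspace-extractor} to verify the two required structural properties. Specifically, let $\mcO_i$ denote the unconstrained minimization of $\mcL_{\lora}(\bpsi_{[i]},\bphi_{[i]})$. By \Cref{thm:lora-eigenspace-extractor}, each $\mcO_i$ is an eigenspace extractor for $T$ (in the SVD sense described in the footnote of the Eigenspace Extractor definition), and the sequence admits orthogonal nested minimizers with $\hat\phi_i = a_i \phi^*_i$, $\hat\psi_i = b_i \psi^*_i$, and $a_i b_i = s_i$. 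Both preconditions of \Cref{thm:joint-nesting-general} are therefore met.

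Next, I would invoke \Cref{thm:joint-nesting-general} with the positive weights $\bw$. The key step from its proof is that when every $w_i > 0$, any global minimizer of the weighted sum must simultaneously minimize each $\mcL_{\lora}(\bpsi_{[i]},\bphi_{[i]})$; otherwise, replacing the offending block by a per-index minimizer would strictly decrease the weighted sum, contradicting global optimality. Consequently, each pair $(\hat\phi_i,\hat\psi_i)$ satisfies the structural characterization of \Cref{thm:lora-eigenspace-extractor}, i.e.\ $\hat\phi_i = a_i \phi^*_i$, $\hat\psi_i = b_i \psi^*_i$ with $a_i b_i = s_i$. Taking outer products then yields $\ketbra{\hat\phi_i}{\hat\psi_i} = a_i b_i \ketbra{\phi^*_i}{\psi^*_i} = s_i \ketbra{\phi^*_i}{\psi^*_i}$, which is exactly the claim.

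The main obstacle I anticipate is pinning down the correct ordering of the blocks: \Cref{thm:joint-nesting-general} only certifies identification up to scaling within each index, so one must rule out the possibility that the $i$-th block aligns with some other singular pair $(\phi^*_{\pi(i)},\psi^*_{\pi(i)})$ for a permutation $\pi$. This is where the distinctness hypothesis on the top $d+1$ singular values enters. Within the induction step inside the proof of \Cref{thm:lora-eigenspace-extractor}, the residual operator $T - T_{i-1}$ has a unique dominant singular pair because $s_i > s_{i+1}$, so the Schmidt--Eckart--Young--Mirsky theorem (\Cref{thm:SEYM}) forces $\hat\phi_i$ (respectively $\hat\psi_i$) to be proportional to $\phi^*_i$ (respectively $\psi^*_i$) rather than any other singular direction. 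The joint minimizer inherits this ordering block by block, and the remaining argument is bookkeeping.
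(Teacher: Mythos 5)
Your proposal matches the paper's treatment exactly: the paper presents this proposition as a direct corollary of \Cref{thm:joint-nesting-general} once \Cref{thm:lora-eigenspace-extractor} has established that the sequence of LoRA objectives is an eigenspace extractor with orthogonal nested minimizers, and that is precisely the route you take, including the final step of using $a_i b_i = s_i$ to pass from per-function identification to the outer-product form. Two small remarks: your phrase ``replacing the offending block by a per-index minimizer'' is a bit loose, since a block cannot be swapped in isolation (the later terms $\mcL_k$ for $k > i$ depend on the same prefix); the actual argument in \Cref{thm:joint-nesting-general} compares the joint minimizer against a \emph{single} orthogonal nested minimizer that simultaneously attains every per-level minimum, which is what forces term-by-term equality. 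Also, your concern about a permutation ambiguity in paragraph three is unfounded: \Cref{thm:joint-nesting-general} already certifies that the $i$-th component aligns with $\phi^*_i$ (in decreasing order of singular value), because its proof reduces to the inductive argument in \Cref{thm:sequential-nesting-general}, so no additional bookkeeping is required to pin down the ordering.
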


\section{Proofs and Discussion on Rayleigh Quotient Optimization}\label{app:rayleigh}

\subsection{\texorpdfstring{Proof of~\Cref{thm:rq-eigenspace-extractor}}{Proof of Theorem 7}}\label{app:proof-rq-eigenspace-extractor}

\paragraph{\Cref{thm:rq-eigenspace-extractor}.}
{\it
The RQ objective in \Cref{eq:rq-svd} is an eigenspace extractor for $T$, \ie
$\sspan(\hat\phi_1,\dots,\hat\phi_d) = \sspan(\phi^*_1,\dots,\phi^*_d)$ and $\sspan(\hat\psi_1,\dots,\hat\psi_d) = \sspan(\psi^*_1,\dots,\psi^*_d)$. Furthermore, it admits orthogonal nested minimizers, $\hat \phi_i = c_i \phi^*_i$ and $\hat \psi_i = c_i \psi^*_i$ for $c_i = \pm 1$.}

\begin{proof}
    Since $(\phi^*_j)_{j \geq 1}$ and $(\psi^*_j)_{j \geq 1}$ form orthonormal bases for $\mcH_2$ and $\mcH_1$, respectively, let us denote
    \begin{equation}
        \phi_i = \sum_{j\geq 1} \alpha_{ij}\phi^*_j \quad \mbox{and} \quad \psi_i = \sum_{j \geq 1} \beta_{ij}\psi^*_j \ ,
    \end{equation}
    where $\balpha_i = (\alpha_{i1},\dots)$ and $\bbeta_i = (\beta_{i1},\dots)$ are coefficient vectors for $i \in [d]$. Then, substituting $T \psi^*_j = s_j \phi^*_j$, the objective in \Cref{eq:rq-svd} becomes
    \begin{align}
        -\sum_{i=1}^{d} \braket{\phi_i}{T \psi_i}_{\mcH_2} & = -\sum_{i=1}^{d} \braket{\sum_{j\geq 1}\alpha_{ij}\phi^*_j}{\sum_{j\geq 1}\beta_{ij} s_j \phi^*_j}_{\mcH_2}  = -\sum_{i=1}^{d} \sum_{j\geq 1} \alpha_{ij} \beta_{ij} s_j
    \end{align}
    where in the last equality, we have used the orthonormality of the true singular functions $(\psi^*_j)_{j \geq 1}$. Thus, the objective is equivalent to 
    \begin{equation}
        \maximize_{\{\balpha_i, \bbeta_i \; : \; i \in [d]\}} \sum_{j\geq 1} s_j  \sum_{i=1}^{d} \alpha_{ij}\beta_{ij} \ .
    \end{equation}
    Using the arithmetic-geometric mean inequality, 
    \begin{equation}
        \sum_{j\geq 1} s_j  \sum_{i=1}^{d} \alpha_{ij}\beta_{ij} \leq \sum_{j \geq 1} s_j \sum_{i=1}^{d}\frac{\alpha_{ij}^2+\beta_{ij}^2}{2} = \sum_{j\geq 1} s_j c_j \ ,
    \end{equation}
    where we defined $c_j \coloneq \sum_{i=1}^{d}\frac{\alpha_{ij}^2+\beta_{ij}^2}{2}$. Next, note that the constraints for $(\phi_i)_{i \in [d]}$ and $(\psi_j)_{j \in [d]}$ in \Cref{eq:rq-svd} and the orthonormality of the true singular functions imply that $\norm{\balpha_i}_2 = \norm{\bbeta_i}_2 = 1$ for all $i\in [d]$, $\balpha_i^\top \balpha_j = 0$ and $\bbeta_i^\top \bbeta_j = 0$ for all $i \neq j$. Therefore,
    \begin{equation}\label{eq:bound-cj-1}
        \sum_{j \geq 1}\sum_{i=1}^{d} \alpha_{ij}^2 = d \ , \;\; \sum_{j \geq 1}\sum_{i=1}^{d} \beta_{ij}^2 = d \;\; \implies \sum_{j\geq 1} c_j = d \ .
    \end{equation}
    Moreover, let $\be_j= (0,\cdots,0,1, 0, \cdots)$ denote the $j$-th standard basis vector, whose $j$-th entry is one.
    We have 
    \begin{equation}\label{eq:bound-cj-2}
        \sum_{i=1}^{d} \alpha_{ij}^2 = \sum_{i=1}^{d} \langle \balpha_i, \be_j \rangle^2 \leq \norm{\be_j}_2^2 = 1,
    \end{equation}
    where we apply Bessel's inequality to the orthonormal sequence $\{\balpha_1, \balpha_2,\cdots, \balpha_d \}$. Similarly, we also have $\sum_{i=1}^{d} \beta_{ij}^2 \leq 1$. Together, they imply $c_j \leq 1$ for all $j$.
    
    Since singular values $(s_j)_{j \geq 1}$ are in non-decreasing order, the objective $\sum_{j\geq 1}s_j c_j$ attains its maximum when $c_j=1$ for all $j\in[d]$ and $c_j=0$ for all $j>d$, \ie $\alpha_{ij}=\beta_{ij}=0$ for all $j>d$. Furthermore, for the equality of the arithmetic-geometric inequality, we also have $\alpha_{ij}=\beta_{ij}$ for all $i,j \in [d]$. Therefore, the minimizers of \Cref{eq:rq-svd} become
    \begin{equation}
        \hat \phi_i = \sum_{j=1}^{d} \alpha_{ij}\phi^*_j \, \quad \mbox{and} \quad  \hat \psi_i = \sum_{j=1}^{d} \alpha_{ij}\psi^*_j \ , \quad \forall i \in [d] \ . 
    \end{equation}
    Also noting that $(\balpha_i)_{i \in [d]}$ vectors are orthonormal, we conclude that $\sspan(\hat\phi_1,\dots,\hat\phi_d)=\sspan(\phi^*_1,\dots,\phi^*_d)$ and $\sspan(\hat\psi_1,\dots,\hat\psi_d)=\sspan(\psi^*_1,\dots,\psi^*_d)$, which completes the proof that the objective is an eigenspace extractor.

    We will prove the second part by induction. For $d=1$, $\sspan(\hat\phi_1)=\sspan(\phi^*_1)$ and the constraint $\braket{\hat\phi_1}{\hat\phi_1}=1$ together yield that $\hat\phi_1 = \pm \phi^*_1$. Similarly, we have $\hat\psi_1=\pm \psi^*_1$. As induction hypothesis, assume that for $\mcO_1,\dots,\mcO_{d-1}$, the problem in \Cref{eq:rq-svd} has orthogonal nested minimizers $\hat \phi_i = \pm \phi^*_i$ and $\hat \psi_i = \pm \psi^*_i$ for $i\in[d-1]$. Then, let
    \begin{equation}\label{eq:induction-step-rq-svd}
        (\hat \phi_d, \hat \psi_d) \in \argmax_{\psi_d \in \mcH_1, \phi_d \in \mcH_2} \braket{\phi_d}{T \psi_d}_{\mcH_2} \quad \st \;\;
        \begin{aligned}
        \braket{\phi_d}{\phi_j}_{\mcH_2} &= \delta_{ij} \\
        \braket{\psi_d}{\psi_j}_{\mcH_1} &= \delta_{ij}
    \end{aligned}  \ , \;\; \forall j \in [d] \ .
    \end{equation}
    Note that $\hat\phi_d$ must be in $\sspan(\phi^*_1,\dots,\phi^*_d)$ since $\mcO_d$ is an eigenspace extractor. Similarly $\hat \psi_d \in \sspan(\psi^*_1,\dots,\psi^*_d)$. Then, the orthonormality constraints in \Cref{eq:induction-step-rq-svd} yield that $\hat\phi_d = \pm \phi^*_d$ and $\hat \psi_d = \pm \psi^*_d$, which completes the induction step and the proof of the theorem.
\end{proof}

\subsection{\texorpdfstring{Proof of~\Cref{thm:rq-direct-eigenfunctions}}{Proof of Theorem 8}}\label{app:proof-rq-direct-eigenfunctions}

\paragraph{\Cref{thm:rq-direct-eigenfunctions}}
{\it 
    The following objective (\Cref{eq:rq-direct-eigenfunctions-appendix}) obtains a permutation of the true eigenfunctions directly, \ie the minimizers $(\hat\phi_i)_{i\in [d]}$ satisfy $\hat \phi_i = \pm \phi^*_{\sigma(i)}$ for some permutation $\sigma:[d]\mapsto[d]$.
    \begin{equation}\label{eq:rq-direct-eigenfunctions-appendix}
        \min_{\substack{\phi_i \in \mcH}} - \sum_{i=1}^{d} \braket{\phi_i}{ T  \phi_i}_{\mcH}\;\;\st\;\;
        \begin{aligned}
            \braket{\phi_i}{\phi_i}_{\mcH} &= 1, \; \forall i. \\
            \braket{\phi_i}{T \phi_j}_{\mcH} &= 0, \;\forall i \neq j. \\
        \end{aligned}
    \end{equation}
}

\begin{proof}
    Let compact, self-adjoint, positive definite operator $T$ has distinct eigenvalues $(\lambda_i)_{i \geq 1}$ in descending order, and therefore the eigenfunctions $(\phi^*_i)_{\geq i}$ form an orthonormal basis for $\mcH$.
    Denote $\phi_i = \sum_{j \geq 1} \alpha_{ij}\phi^*_j$, where $\balpha_i = (\alpha_{i1},\dots)$ is the coefficient vector for $i \in [d]$. 
    We will prove the desired result by showing that optimal $(\hat \phi_i)_{i \in[d]}$ will have $\balpha_i = \pm \be_{\sigma(i)}$ for a permutation $\sigma : [d] \mapsto [d]$ where $\be_i$ denotes the $i$-th standard basis vector $(0,\dots,0,1,0,\dots)$
    We start by rewriting the optimization problem. Substituting $T \phi^*_j = \lambda_j \phi^*_j$, the objective in \Cref{eq:rq-direct-eigenfunctions-appendix} becomes minimizing
    \begin{align}
        -\sum_{i=1}^{d} \braket{\phi_i}{T \phi_i}_{\mcH} &= - \sum_{i=1}^{d} \braket{\sum_{j \geq 1} \alpha_{ij}\phi^*_j}{\sum_{j \geq 1}\alpha_{ij}T \phi^*_j}_{\mcH} \\
        &= - \sum_{i=1}^{d} \braket{\sum_{j \geq 1}\alpha_{ij}\phi^*_j}{\sum_{j \geq 1}\alpha_{ij}\lambda_j \phi^*_j}_{\mcH} \\
        &= - \sum_{i=1}^{d} \sum_{j \geq 1} \alpha_{ij}^2 \lambda_j \\
        &= - \sum_{i=1}^{d} \sum_{j \geq 1} \beta_{ij}^2 \ , 
    \end{align}
    where $\beta_{ij} \coloneq \sqrt{\lambda_j}\alpha_{ij}$ and we have used the orthonormality of the true eigenfunctions $(\phi^*_j)_{j \geq 1}$, $\braket{\phi^*_j}{\phi^*_j}_{\mcH}=1$. Define the vectors $\bbeta_i = (\beta_{i1},\beta_{i2},\dots)$ for $i\in[d]$, for which the second constraint becomes $\langle \bbeta_i, \bbeta_j \rangle =0$ for all $i \neq j$, $i,j \in [d]$. Also, the first constraint becomes
    \begin{equation}
        1 = \braket{\phi_i}{\phi_i}_{\mcH} = \sum_{j \geq 1}\alpha_{ij}^2 = \sum_{j \geq 1}\frac{\beta_{ij}^2}{\lambda_j} \ .
    \end{equation}
    Then, the constrained optimization problem is equivalently given by
    \begin{equation}\label{eq:rq-direct-eigenfunctions-appendix-objective-2}
        \maximize_{\{\bbeta_i \; : \; i \in [d]\}} \sum_{i=1}^{d} \sum_{k \geq 1} \beta_{ik}^2  \quad \st \quad 
        \begin{aligned}
            &\sum_{k \geq 1} \frac{\beta_{ik}^2}{\lambda_k} = 1 \ , \; \forall i \in [d] \ . \\
            & \sum_{k \geq 1}\beta_{ik}\beta_{jk} = 0 \;\; \forall 1 \leq i < j \leq d \ .
        \end{aligned} 
    \end{equation}
    We now use Lagrange multipliers to infer the structure of the optimal solutions. Specifically, we will show that the optimal solutions $\bbeta_i$'s are all supported on at most $d$ coordinates only. 
    To show this, consider a convex relaxation of the problem where we include the interior of the ellipsoid in the feasible region, \ie we change the constraints $\sum_{k\geq 1} \frac{\beta_{ik}^2}{\lambda_k}=1$ to $\sum_{k\geq 1} \frac{\beta_{ik}^2}{\lambda_k} \leq 1,~\forall i \in [d]$. 
    Note that the optimal solution of the relaxed problem will still satisfy the original equality constraints since we are maximizing a positive quadratic form, and scaling up any $\bbeta_i$ that does not satisfy the equality constraint will increase the objective while still satisfying the relaxed constraint. Thus, the optimal value and the minimizer of the relaxed problem are equal to those of the original problem in \Cref{eq:rq-direct-eigenfunctions-appendix-objective-2}.
    
    So, we write the Lagrangian for this relaxed problem with non-negative Lagrange multipliers $\mu_1, \ldots, \mu_d > 0$ and $m_{ij} \in \R$ for all $1 \leq i < j \leq d$ as follows: 
    \begin{equation}
        \mcL\Big((\bbeta_i)_{i \in [d]}, (\mu_i)_{i \in [d]}, (m_{ij})_{1\leq i < j \leq d}\Big) = \sum_{i=1}^d \sum_{k\geq 1} \beta_{ik}^2 - \sum_{i=1}^d \mu_i \left ( \sum_{k\geq 1} \frac{\beta_{ik}^2}{\lambda_k} - 1 \right ) - \sum_{\substack{i, j \in [d] \\ i < j}} m_{ij} \Big( \sum_{k \geq 1}\beta_{ik}\beta_{jk} \Big)
        \ .
    \end{equation}
    We also define $m_{ji} = m_{ij}$ for all $i<j$.
    Here, we want to maximize the Lagrangian $\mcL$ and use KKT conditions. To find the stationary points, we set the derivative with respect to each $\beta_{ik}$ to zero,
    \begin{equation} \label{eqn:kkt-condition}
        \frac{\partial \mcL}{\partial \beta_{ik}} = 2\beta_{ik} \left(1 - \frac{\mu_i}{\lambda_k} \right) - \sum_{j \in [d] \setminus \{i\}} m_{ij}\beta_{jk} = 0 \ , \quad \forall  i \in [d], \, k \geq 1 \ .
    \end{equation}
    Multiplying both sides by $\beta_{ik}$, and summing over $k \geq 1$, we obtain 
    \begin{align} \label{eqn:kkt-condition1}
         2 \sum_{k \geq 1} \beta_{ik}^2\left(1 - \frac{\mu_i}{\lambda_k} \right) - \sum_{j \in [d] \setminus \{i\}} m_{ij} \sum_{k \geq 1} \beta_{ik}\beta_{jk} = 0 \ , \quad \forall i \in [d] \ .
    \end{align}
    Using the constraints $\sum_{k \geq 1} \beta_{ik}\beta_{jk} = 0,~\forall i \neq j$ in \Cref{eq:rq-direct-eigenfunctions-appendix-objective-2}, the second term in \Cref{eqn:kkt-condition1} vanishes. Using the first constraint $\sum_{k \geq 1} \frac{\beta_{ik}^2}{\lambda_k} = 1$, we obtain
    \begin{equation}
        \mu_i = \sum_{k\geq 1} \beta_{ik}^2, \quad \forall  i \in [d] \ .
    \end{equation}
    Next, multiplying both sides of \Cref{eqn:kkt-condition} by $\beta_{\ell k}$ for some $\ell \in [d] \setminus \{i\}$ and summing over $k \geq 1$, we obtain
    \begin{align} \label{eqn:kkt-condition2}
         & 2 \sum_{k \geq 1} \beta_{ik} \beta_{\ell k}\left(1 - \frac{\mu_i}{\lambda_k} \right) - \sum_{j \in [d] \setminus \{i\}} m_{ij} \sum_{k \geq 1} \beta_{\ell k}\beta_{jk} = 0
         ,\;\; \forall  i \neq \ell, \, i,\ell \in [d] \ .
    \end{align}
    Using the constraint that $\sum_{k \geq 1} \beta_{\ell k} \beta_{jk} = 0$ for all $\ell \neq j$, the second sum in \Cref{eqn:kkt-condition2} reduces to $m_{i \ell} \sum_{k\geq 1} \beta_{\ell k}^2 = m_{i \ell} \mu_\ell$. Then, we have 
    \begin{equation}
        -2\mu_i \sum_{k \geq 1} \frac{\beta_{ik} \beta_{\ell k}}{\lambda_k}  = m_{i \ell} \sum_{k\geq 1} \beta_{\ell k}^2 
        = m_{i \ell} \mu_\ell
        ,\quad  \forall  i \neq \ell, \, i, \ell \in [d].
    \end{equation}
    Repeating the derivation by reversing $i$ and $\ell$, we also have
    \begin{equation}
        -2\mu_\ell \sum_{k \geq 1} \frac{\beta_{ik} \beta_{\ell k}}{\lambda_k} 
        = m_{\ell i} \mu_i = m_{i \ell} \mu_i,
    \end{equation}
    where $m_{\ell i}  =  m_{i \ell}$ is by definition of $m_{ij}$. Note that $\mu_i =  \sum_{k\geq 1} \beta_{ik}^2> 0$. Then, if $m_{i \ell}$ is zero, the sum $\sum_{k \geq 1} \frac{\beta_{ik} \beta_{\ell k}}{\lambda_k}$ also needs to be zero. Vice versa, if this sum is zero, then we must have $m_{i \ell}=0$. If both $m_{i \ell}$ and the sum term are nonzero, dividing the two equations side by side, we get $\mu_i / \mu_\ell = \mu_\ell / \mu_i$, which gives $\mu_i=\mu_\ell$, in which case we have $m_{i \ell} = -2\sum_{k\geq 1}\frac{\beta_{ik}\beta_{lk}}{\lambda_k} \neq 0$. Therefore, for each $i \neq \ell$, we have    
    \begin{align}\label{eq:kkt-condition3}
        m_{i \ell} = &-2 \sum_{k \geq 1} \frac{\beta_{ik} \beta_{\ell k}}{\lambda_k} \quad \mbox{and}\\
        m_{i \ell}= 0 \quad &\mbox{or} \quad \Big\{\mu_i = \mu_\ell \text{ and }  m_{i \ell} \neq 0 \Big\}.
    \end{align}
    This result implies that the parameters $(\mu_i)_{i \in [d]}$ can be partitioned into $n$ distinct groups $S_1, \dots, S_n$, such that $\mu_i = \mu_j$ for all $i,j \in S_k$, and $\mu_i \neq \mu_j$ whenever $i$ and $j$ belong to different groups. 
    Without loss of generality, we index these parameters by
    \begin{equation}
       \mu_1 = \cdots = \mu_{r_1}, \quad 
        \mu_{r_1+1} = \cdots = \mu_{r_2}, \quad     \dots, \quad 
        \mu_{r_{n-1}+1} = \cdots = \mu_{r_n},   
    \end{equation}
    for some integers $1 = r_0 \leq r_1 < r_2 < \cdots < r_n = d$.
    We denote by $G(i)$ the group index such that $r_{G(i)} < i \leq r_{G(i)+1}$. 
    By \Cref{eq:kkt-condition3}, we have $m_{ij}=0$ if $i,j$ are in different groups, then \Cref{eqn:kkt-condition} becomes
        \begin{equation} 
        \frac{\partial \mcL}{\partial \beta_{ik}} = 2\beta_{ik} \left(1 - \frac{\mu_{r_{G(i)}}}{\lambda_k} \right) - \sum_{j  \in [r_{G(i)}+1, r_{G(i)+1}] \setminus i} m_{ij}\beta_{jk} = 0 \ ,\;\; \forall  i \in [d], \; k \geq 1.
    \end{equation}
    Now, we define $\bM \in \R^{d \times d}$ with the entries $\bM_{ij} =  \sum_{k \geq 1} \frac{\beta_{ik} \beta_{jk}}{\lambda_k} = -\frac{m_{ij}}{2} $, and $\bM_{ii} = \sum_{k \geq 1} \frac{\beta_{ik}^2}{\lambda_k} =1 $.
    The above equation becomes
    \begin{equation} 
        \beta_{ik} \left(\bM_{ii} - \frac{\mu_{r_{G(i)}}}{\lambda_k} \right) + \sum_{j  \in [r_{G(i)}+1, r_{G(i)+1}] \setminus i} \frac{-m_{ij}}{2}\beta_{jk} = 0 \ ,\;\; \forall  i \in [d], \; k \geq 1.
    \end{equation}
    Thus, for each $k \geq 1$, we have the following system of linear equations: 
    \begin{equation}
        \begin{bmatrix}
        \bM_1 -  \frac{\mu_{r_1}}{\lambda_k}\bI_1 & 0   & \cdots & 0 \\
        0   & \bM_2 -  \frac{\mu_{r_2}}{\lambda_k} \bI_2 & \cdots & 0 \\
        \vdots & \vdots & \ddots & \vdots \\
        0   & 0   & \cdots & \bM_n -  \frac{\mu_{r_n}}{\lambda_k} \bI_n
        \end{bmatrix}
        \begin{bmatrix}
        \beta_{1k} \\ 
        \beta_{2k} \\
        \cdots \\
        \beta_{dk} \\
        \end{bmatrix} = \mathbf{0}, \; \; \forall k \geq 1,
    \end{equation}
    where $\bM_i \in \R^{|S_i|\times |S_i|}$ is the submatrix in $\bM$ corresponding to group $S_i$. 
    For each $1 \leq i \leq n$, we have 
    \begin{equation}
    \bM_i 
    \begin{bmatrix}
    \beta_{(r_{i-1}+1)k} \\ 
    \beta_{(r_{i-1}+2)k} \\
    \vdots \\
    \beta_{r_{i}k} \\
    \end{bmatrix} 
    = \frac{\mu_{r_i}}{\lambda_k}
    \begin{bmatrix}
    \beta_{(r_{i-1}+1)k} \\ 
    \beta_{(r_{i-1}+2)k} \\
    \vdots \\
    \beta_{r_{i}k} \\
    \end{bmatrix} \ ,   \; \; \forall k \geq 1.
    \end{equation}
    This yields that $\bbeta_{S_i k} \defeq [\beta_{(r_{i-1}+1)k} ;\beta_{(r_{i-1}+1)k}; \cdots; \beta_{r_{i}k} ] \in \R^{|S_i| \times 1}$ are either zero vectors or (unnormalized) eigenvectors of $\bM_i$, multiplied by some scalar $c_k \in \R$, and $(\frac{\mu_{r_i}}{\lambda_k})_{k\geq 1}$ are the eigenvalues. We note that by construction of the matrix $\bM$,
    \begin{equation}
    \bM_i = \begin{bmatrix}\bbeta_{S_i 1}; \bbeta_{S_i 2},\cdots, \bbeta_{S_i k},\cdots   \end{bmatrix} \begin{bmatrix}
        \frac{1}{\lambda_1} & 0   & \cdots & 0 & \cdots \\
        0   &\frac{1}{\lambda_2}& \cdots & 0 & \cdots \\
        \vdots & \vdots & \ddots & \vdots & \vdots \\
        0   & 0   & \cdots & \frac{1}{\lambda_k} & \cdots \\
         \vdots   &  \vdots  &  \vdots &  \vdots & \ddots \\
        \end{bmatrix}
        \begin{bmatrix}\bbeta_{S_i 1}^\top \\ \bbeta_{S_i 2}^\top \\
        \vdots \\
        \bbeta_{S_i k}^\top \\
        \vdots \\
        \end{bmatrix}.
    \end{equation}
    Therefore, the matrix $\bM_i$ is a full-rank positive definite matrix since the rows of the first matrix are orthogonal by constraints (\Cref{eq:rq-direct-eigenfunctions-appendix-objective-2}). 
    Next, since all $(\mu_{r_i} /\lambda_k)_{k \geq 1}$ are distinct and $\bM_i$ have at most $|S_i|$ eigenvalues, we must have at most $|S_i|$ nonzero $\bbeta_{S_i k}$ vectors. 
    We then construct a matrix using these nonzero vectors as columns:
    $[\bbeta_{S_i k}]_{k \in \mcK}$ form a $|S_i| \times |\mcK|$ matrix with orthogonal rows (due to \Cref{eq:rq-direct-eigenfunctions-appendix-objective-2} and the fact that all other entries in the row are zero). Therefore, it should have exact $|S_i|$ rank and implies that $|\mcK| = |S_i|$. As all $(\mu_{r_i} /\lambda_k)_{k \in \mcK}$ are distinct, we can then conclude that all columns are corresponding to different eigenvectors of $\bM_i$. 
    We index such $k \in \mcK$ as $(k_j)_{r_{i-1}< j \leq r_i}$.
    We then conclude that the eigenvalues of $\bM_i$ are $(\frac{\mu_{r_i}}{\lambda_{k_j}})_{r_{i-1}< j \leq r_i}$.

    Since the sum of eigenvalues of $\bM_i$ equal to the trace of $\bM_i$, we have 
    \begin{equation}
    \trace(\bM_i) = r_i - r_{i-1} 
    =\sum_{j=r_{i-1}+1}^{r_i} \frac{\mu_{r_i}}{\lambda_{k_j}} 
    \Rightarrow \; \; \mu_{r_{i-1} + 1} = \mu_{r_{i-1}+2} = \cdots = \mu_{r_i} = \frac{r_i - r_{i-1}}{\sum_{j=r_{i-1}+1}^{r_i} \frac{1}{\lambda_{k_j}}}.
    \end{equation}
    By the arithmetic-harmonic mean inequality, we have 
    \begin{align}\label{eq:beta-upper-bound-1}
        \sum_{j=r_{i-1}+1}^{r_i}  \mu_j = \frac{(r_i - r_{i-1})^2}{\sum_{j=r_{i-1}+1}^{r_i} \frac{1}{\lambda_{k_j}}} \leq \sum_{j=r_{i-1}+1}^{r_i}  \lambda_{k_j}, \quad \forall i \in [d] 
    \end{align}
    At this juncture, also note that all $(k_i)_{i \in [d]}$ are distinct. Specifically, if $i$ and $j$ belong to the same group, then $\lambda_{k_i}$ and $\lambda_{k_j}$ being distinct, we have $k_i \neq k_j$; if $i$ and $j$ belong to different groups, then
    \begin{equation}
    \frac{\mu_{r_{G(i)}}}{\lambda_{k_i}} \neq \frac{\mu_{r_{G(j)}}}{\lambda_{k_j}} \text{ since $r_{G(i)}$ are different across groups.} 
    \end{equation}
    In this case, the vector 
    $[0;\, \cdots;\, \bbeta_{S_i k_i};\, \cdots;\, \bbeta_{S_j k_j};\, \cdots;\, 0] \in \mathbb{R}^{d}$
    is not an eigenvector of the block-diagonal matrix $\bM$, which leads to a contradiction since the eigenvectors of $\bM$ must arise directly from the (zero-padded) eigenvectors of its diagonal blocks. Therefore, summing \Cref{eq:beta-upper-bound-1} for $i\in[d]$, we have
    \begin{equation}
        \sum_{i=1}^{d}\sum_{k \geq 1} \beta_{ik}^2 = \sum_{i=1}^{d} \mu_i \leq \sum_{i=1}^{d} \lambda_{k_i} \leq \sum_{i=1}^{d} \lambda_i \ .    
    \end{equation}
    The equality holds if and only if the arithmetic-harmonic mean inequality in \Cref{eq:beta-upper-bound-1} holds with equality for each $i$, which implies $\lambda_{k_j} = \lambda_{k_\ell}$ for $j,\ell \in [r_{i-1}+1,r_i]$. Since all $(k_i)_{i \in [d]}$ are distinct, this implies $r_i = i$ for $i \in [d]$, \ie each group $S_i$ consists of only one element, and we have $\mu_i=\lambda_{k_i}$ for all $i\in[d]$. Now, \Cref{eq:kkt-condition3} implies that $m_{i \ell}=0$ for all $i \neq \ell$, since $\mu_i \neq \mu_{\ell}$.   Substituting $m_{i \ell}=0$ into \Cref{eqn:kkt-condition}, we obtain
    \begin{align} \label{eqn:kkt-condition4}
      \beta_{ik}\left(1 -\frac{\mu_i}{\lambda_k} \right) = 0 \ , \quad \forall i, k \in [d] \ .
    \end{align}
    However, since $(\lambda_k)_{k \in [d]}$ and $(\mu_i)_{i \in [d]}$ are distinct, for a given $i\in[d]$, $\beta_{ik}$ can be nonzero for at most one value of $k \in [d]$. Moreover, all $(\beta_{ik})_{k \in [d]}$ cannot be zero due to the constraints. Therefore, there exists a surjective mapping $\sigma : [d] \to [d]$ such that 
    \begin{equation}
        \beta_{ik} = 0 \ , \;\; \forall k \neq \sigma(i) \ , \quad \mbox{and} \quad \beta_{i \sigma(i)} = \pm \sqrt{\lambda_{\sigma(i)}} \;\; \implies \alpha_{i k} = \pm \mathds{1}\{k = \sigma(i)\} \ .
    \end{equation}
    This implies that  
    \begin{equation}
        0 = -2\sum_{k=1}^{d} \frac{\beta_{ik}\beta_{\ell k}}{\lambda_k} =  -2 \cdot \mathds{1}\{\sigma(i) = \sigma(\ell)\} \ , \quad \forall i \neq j, \, i,j \in [d] \ ,
    \end{equation}
    and consequently, $\sigma: [d] \mapsto [d]$ is a permutation. To sum up, the maximum value of the constrained problem is $\sum_{i=1}^{d} \lambda_i$, and substituting $\balpha_i$ parameterization, the optimal solutions satisfy $\hat\phi_i = \pm \phi^*_{\sigma(i)}$ for some permutation $\sigma: [d] \mapsto [d]$, which completes the proof.
\end{proof}

\subsection{Connection to Related Work on Rayleigh Quotient Optimization}\label{app:related-rayleigh}

Spectral Inference Networks (SpIN) proposed by \cite{pfau2019spectral} is the first parametric framework to learn parameterized eigenfunctions. The core idea of SpIN stems from the simple yet elegant observation that we can write the Rayleigh quotient optimization problem for extracting the top-$d$ eigenspace of a kernel in a similar way as extracting the top-$d$ eigenspace of a matrix. For $\Phi: \px \to \R^d$, define 
\begin{equation}
    \bSigma_\Phi : [\bSigma_\Phi]_{i,j}= \braket{\phi_i}{\phi_j} \ , \quad \text{and} \quad \bPi_{\Phi} : [\bPi_{\Phi}]_{i,j} = \braket{\phi_i}{T \phi_j} \ .   
\end{equation}
Then, we can write the following constrained optimization objective for extracting the top-$d$ eigenspace: 
\begin{equation}\label{eq:spin-joint-constrained}
    \max_{\Phi} \trace(\bPi_{\Phi}) = \max_{\Phi} \braket{\Phi(X)}{T \Phi(X)} \ , \quad
    \st \quad \braket{\phi_i}{\phi_j} = \delta_{ij} \ . 
\end{equation}
Since the optimization problem is constrained, subsequent work has explored various approaches to address it. \citet{pfau2019spectral} proposed a bilevel optimization algorithm to enable online learning, while \citet{wu2023dive} improved upon this by introducing a novel regularization mechanism. \citet{deng2022neural} further modified the objective in \Cref{eq:spin-joint-constrained} by altering the constraints as
\begin{equation} \label{eqn:neuralEF_single}
    \max_{\phi_i} \braket{\phi_i}{T \phi_i} \quad \st \;\; \braket{\phi_i}{\phi_i} = 1 \ , \quad \braket{\phi_i}{T \phi_j}=0 \ , \;\; \forall j \in [i-1] \ ,
\end{equation}
and use EigenGame~\citep{gemp2021eigengame,gemp2022eigengame} perspective for relaxing the constraints. 
We note that this objective alone can only solve one eigenfunction $\phi_i$. To enable exact eigenfunction recovery for all top-$d$ eigenfunctions, they employ stop gradient tricks (see \Cref{sec:sequential-to-joint}) to enable joint optimization, which, however, violates the joint parameterization desiderata.
In contrast, our \Cref{thm:rq-direct-eigenfunctions} provides a more general formulation: by directly maximizing the sum over  \Cref{eqn:neuralEF_single} over all $1 \le i \le d$, we can recover the exact eigenfunctions (up to permutation) without relying on additional mechanisms such as stop gradients.  
Additionally, we emphasize that our \Cref{thm:rq-eigenspace-extractor} applies to non-self-adjoint operators, as it is capable of recovering singular functions, unlike the methods proposed in prior work \citep{deng2022neural,pfau2019spectral}.

\section{Detailed Algorithms for Rayleigh-Ritz Method} \label{app:rayleigh_ritz_implementation}
In this section, we present a detailed implementation of the Rayleigh–Ritz methods for both the LoRA and RQ objectives within our framework. Since the positive-pair kernel $\kaa(a,a') = \frac{\pplus(a,a')}{\pa(a)\pa(a')}$ is defined only implicitly, and modern learning algorithms typically have only stochastic access to the underlying data distribution, a streaming algorithm is required to align with contemporary practical settings.

\paragraph{Rayleigh quotient.}
We begin with the Rayleigh–Ritz algorithm for the Rayleigh quotient objective (\Cref{eq:rq-population}), which produces an orthonormal set of basis functions ${\hat{\psi}_1, \ldots, \hat{\psi}_d}$ due to its normalization constraints. Consequently, \Cref{thm:rayleigh-ritz} applies directly.

For the positive-pair kernel, the empirical covariance matrix $\bB \in \mathbb{R}^{d \times d}$ has entries 
\begin{equation}
    \bB_{ij} \vcentcolon = \braket{\hat \psi_i}{\taa \hat \psi_j} = \E_{x \sim \px} \E_{a,a^+ \sim p^+(\cdot|x)}[\hat \psi_i(a) \hat \psi_j(a^+)] \ ,
\end{equation}
which can be estimated from training samples. The detailed procedure is provided in \Cref{alg:rr_rq}. 

In contrast, for the VICReg objective (\Cref{eqn:vicreg}), the encoder outputs are not centered. To handle this, we employ Welford’s algorithm to estimate the covariance matrix in a single pass under an online (streaming) setting. The detailed procedure is provided in \Cref{alg:rr_vicreg}.

\begin{algorithm}[h!]
    \caption{The Rayleigh–Ritz method for representations learned from the RQ objective (\Cref{eq:rq-loss}). }
    \label{alg:rr_rq}
    \begin{algorithmic}[1]
    \Require A trained encoder $\hat{\Psi} : \mathcal{A} \to \mathbb{R}^d$ with RQ , access to $p^+(\cdot | x)$.
    
    \State \textbf{(Training Phase)}
    \State Initialize the covariance matrix $\bB \in \mathbb{R}^{d \times d}$ as a zero matrix, and the counter variable $\texttt{cnt} = 0$.
    \For{each step}
        \State Sample a batch of samples $\{x_1, \cdots, x_m\}$.
        \State Sample positive pairs $(a_i, a_i^+) \sim p^+(\cdot | x_i)$ for $i \in [m]$.
        \State Update the covariance matrix:
        \[
            \bB \leftarrow \frac{1}{\texttt{cnt} + m}
         \left(
             \texttt{cnt} \cdot \bB + \sum_{i=1}^m  \hat{\Psi}(a_i)\hat{\Psi}(a_i^+)^\top 
        \right).
        \]
    \State Update the counter variable: $\texttt{cnt} \leftarrow \texttt{cnt} + m$.
\EndFor
\State Perform eigendecomposition $\bB = \bU^\top \mathbf{\Sigma} \bU$, where the eigenvalues in $\mathbf{\Sigma}$ are sorted in descending order.

\State \textbf{(Testing Phase)}
\State Given a test sample $a'$, eigenfunctions are given as $\Psi^*(a') =  \hat \Psi(a') \bU $.
\end{algorithmic}
\end{algorithm}

\begin{algorithm}[h!]
    \caption{The Rayleigh–Ritz method for representations learned from the VICReg objective (\Cref{eqn:vicreg}). }
    \label{alg:rr_vicreg}
    \begin{algorithmic}[1]
    \Require A trained encoder $\hat{\Psi} : \mathcal{A} \to \mathbb{R}^d$ with VICReg , access to $p^+(\cdot | x)$.
    
    \State \textbf{(Training Phase)}
    \State Initialize the mean vector $\mu \in \mathbb{R}^d$ as a zero vector, the covariance matrix $\bB \in \mathbb{R}^{d \times d}$ as a zero matrix, and the counter variable $\texttt{cnt} = 0$.
    \For{each step}
        \State Sample a batch of samples $\{x_1, \cdots, x_m\}$.
        \State Sample positive pairs $(a_i, a_i^+) \sim p^+(\cdot | x_i)$ for $i \in [m]$.
        \State Let $\delta_i = \Psi(a_i) - \mu $ for $i \in [m]$.
        \State Update the mean:
    \[
        \mu \leftarrow \frac{1}{\texttt{cnt} + m}
        \left(
            \texttt{cnt} \cdot \mu + \frac{1}{2} \sum_{i=1}^m \left( \hat{\Psi}(a_i) + \hat{\Psi}(a_i^+) \right)
        \right).
    \]
    \State Update the covariance matrix:
    \[
        \bB \leftarrow \frac{1}{\texttt{cnt} + m}
        \left(
            \texttt{cnt} \cdot \bB + \sum_{i=1}^m \delta_i \left( \hat{\Psi}(a_i^+) - \mu \right)^\top
        \right).
    \]
    \State Update the counter variable: $\texttt{cnt} \leftarrow \texttt{cnt} + m$.
\EndFor
\State Perform eigendecomposition $\bB = \bU^\top \mathbf{\Sigma} \bU$, where the eigenvalues in $\mathbf{\Sigma}$ are sorted in descending order.

\State \textbf{(Testing Phase)}
\State Given a test sample $a'$, eigenfunctions are given as $\Psi^*(a') = (\hat \Psi(a') - \mu) \bU $.
\end{algorithmic}
\end{algorithm}

\paragraph{Low-rank approximation.}
For low-rank approximation objectives, the extracted basis functions are not orthonormal since its optimal solution is given by $\hat \Psi = \bQ \sqrt{\mathbf{\Sigma}} \Psi^*_d$, where $\bQ \in \R^{d \times d}$ is an orthonormal matrix, $\sqrt{\bSigma} \in \R^{d \times d}$ is a diagonal matrix containing the square root of the top-$d$ eigenvalues in descending order, and $\Psi^*_d = [\psi^*_1,\dots,\psi^*_d]$ are the top-$d$ eigenfunctions. Therefore, we can directly compute the matrix $\bB_{ij} \vcentcolon = \braket{\hat \psi_i}{\hat \psi_j}$ and perform eigendecomposition on the matrix $\bB$. The detailed procedure for spectral contrastive loss is provided in \Cref{alg:rr_scl}. 

\begin{algorithm}[h!]
    \caption{The Rayleigh–Ritz method for representations learned from the SCL objective (\Cref{eq:spectral-contrastive-loss}). }
    \label{alg:rr_scl}
    \begin{algorithmic}[1]
    \Require A trained encoder $\hat{\Psi} : \mathcal{A} \to \mathbb{R}^d$ with SCL , access to $p^+(\cdot | x)$.
    
    \State \textbf{(Training Phase)}
    \State Initialize the covariance matrix $\bB \in \mathbb{R}^{d \times d}$ as a zero matrix, and the counter variable $\texttt{cnt} = 0$.
    \For{each step}
        \State Sample a batch of samples $\{x_1, \cdots, x_m\}$.
        \State Sample positive pairs $(a_i, a_i^+) \sim p^+(\cdot | x_i)$ for $i \in [m]$.
        \State Update the covariance matrix:
        \[
            \bB \leftarrow \frac{1}{\texttt{cnt} + 2m}
         \left(
             \texttt{cnt} \cdot \bB + \sum_{i=1}^m  \hat{\Psi}(a_i)\hat{\Psi}(a_i)^\top + \sum_{i=1}^m  \hat{\Psi}(a_i^+)\hat{\Psi}(a_i^+)^\top
        \right).
        \]
    \State Update the counter variable: $\texttt{cnt} \leftarrow \texttt{cnt} + 2m$.
\EndFor
\State Perform eigendecomposition $\bB = \bU^\top \mathbf{\Sigma} \bU$, where the eigenvalues in $\mathbf{\Sigma}$ are sorted in descending order.

\State \textbf{(Testing Phase)}
\State Given a test sample $a'$, eigenfunctions are given as $\Psi^*(a') = \mathbf{\Sigma}^{-1/2} \hat \Psi(a') \bU $.
\end{algorithmic}
\end{algorithm}

\section{Experimental Details and Additional Results}\label{app:exp-details}
We provide additional experimental results and details for both synthetic (\Cref{app:synthetic-exp}) and real-world data experiments (\Cref{app:image-exp}). \looseness=-1 

\subsection{Experiments on Synthetic Data} \label{app:synthetic-exp}

In this section, we provide a detailed construction of the synthetic data kernel, including both the theoretical validation that establishes it as a valid kernel and the rejection sampling algorithm used for its generation. In addition, \Cref{fig:eigenfunction_comparison_appendix} offers a visual comparison of eigenfunctions estimated by different methods. 

\paragraph{Construction of synthetic data kernel.}
Our synthetic data kernels, $\kaa(a,a') = \frac{ P^+(a,a')}{\pa(a)\pa(a')}$, are constructed using well-established orthogonal function bases, such as Legendre polynomials and Fourier base functions. Formally, given the input dimensionality $p$, and $r$ positive eigenvalues $1=\lambda_1 \geq \lambda_2 \cdots \geq \lambda_{r} > 0$, we let $\kaa(a,a') = \sum_{i=1}^{r} \lambda_i \psi_i(a) \psi_i(a')$, where 
\begin{equation}
\psi_i(a) = 
\begin{cases}
\prod_{j=1}^p  \sqrt{2i-1}P^{L}_{i-1}(a_j) & \text{ for Legendre kernels}, \\
\prod_{j=1}^p \sqrt{2} \cos((i-1)\pi a_j), & \text{ for Fourier kernels},
\end{cases}
\end{equation}
where $P^{L}_i$ denotes the $i^{th}$ Legendre polynomial.
The scalars $\sqrt{2i-1}$ and $\sqrt{2}$ are added to make the functions orthonormal under $\lap$.
For both kernels, the corresponding joint distribution and marginal distributions are defined as follows:
\begin{equation}
\forall a, a' \in [-1,1]^p, \;\; \pa(a)=  \pa(a') = \frac{1}{2^p}, \;\; \text{and} \;\; P^+(a,a') =  \frac{1}{2^{2p}} \sum_{i=1}^{r} \lambda_i \psi_i(a) \psi_i(a') \ .
\end{equation}

To ensure that $P^+(a,a')$ is a valid joint distribution, we prove the following sufficient condition. 
\begin{proposition}
\label{prop:eigenvalues_synthetic}
$P^+(a,a')$ is a valid joint distribution if
$\sum_{i=2}^{r} (2i-1)^p\lambda_i \leq 1$ for Legendre kernels and $\sum_{i=2}^{r} 2^p\lambda_i \leq 1$ for Fourier kernels. 
\end{proposition}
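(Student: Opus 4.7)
The plan is to verify the two defining properties of a joint density on $[-1,1]^p\times[-1,1]^p$: that $P^+$ integrates to $1$, and that $P^+(a,a')\geq 0$ pointwise. The integration will fall out almost for free from the orthonormality built into the chosen bases, so all of the real content sits in the pointwise non-negativity, and the eigenvalue hypothesis in the statement is precisely what I would use to secure it. First I would set up both goals in parallel, noting that $P^+(a,a')=\frac{1}{2^{2p}}\sum_{i=1}^{r}\lambda_i\psi_i(a)\psi_i(a')$ with $\lambda_1=1$ and $\psi_1$ constant (after adopting the convention of \Cref{lm:spectral-decomposition} that the top eigenfunction of the positive-pair kernel is the constant function).

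For the integral, I would expand
\[
\iint P^+(a,a')\,da\,da' \;=\; \frac{1}{2^{2p}}\sum_{i=1}^{r}\lambda_i\Bigl(\int_{[-1,1]^p}\psi_i(a)\,da\Bigr)^{2},
\]
and argue that every $i\ge 2$ summand vanishes. In both bases the higher-order $\psi_i$ are orthogonal to the constant $\psi_1\equiv 1$ against $\pa$: for Legendre this is the classical $\int_{-1}^{1}P^L_n(x)\,dx=0$ for $n\ge 1$, and for Fourier it is $\int_{-1}^{1}\cos(n\pi x)\,dx=0$ for $n\ge 1$. Hence only the $i=1$ term survives, and by normalization it equals $\lambda_1=1$.

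The non-negativity argument is where the eigenvalue condition actually earns its keep. I would isolate the $i=1$ term, which contributes the strictly positive constant $\lambda_1\psi_1(a)\psi_1(a')=1$, and control the remaining terms by the triangle inequality together with pointwise bounds on each $\psi_i$. These bounds are classical: the Legendre polynomials satisfy $\sup_{x\in[-1,1]}|P_n^L(x)|\le 1$, which gives $\|\psi_i\|_\infty\le(2i-1)^{p/2}$; and $|\cos|\le 1$ gives $\|\psi_i\|_\infty\le 2^{p/2}$ in the Fourier case. Taking products across $(a,a')$ yields $|\psi_i(a)\psi_i(a')|\le(2i-1)^p$ and $2^p$ respectively, so that $P^+(a,a')\ge\frac{1}{2^{2p}}\bigl(1-\sum_{i\ge 2}\lambda_i\|\psi_i\|_\infty^{2}\bigr)$. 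The hypotheses $\sum_{i\ge 2}(2i-1)^p\lambda_i\le 1$ and $\sum_{i\ge 2}2^p\lambda_i\le 1$ are then exactly the quantities that force the parenthesis to be nonnegative.

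The main thing to be careful about, rather than a genuine technical obstacle, is consistency of the Fourier normalization at the DC mode: the product formula in the proposition evaluated at $i=1$ gives the constant $2^{p/2}$, whereas \Cref{lm:spectral-decomposition} demands $\psi_1\equiv 1$. The cleanest fix is to adopt the standard Fourier convention of dropping the $\sqrt{2}$ factor on the constant term so that $\psi_1\equiv 1$, in which case the sup-norm bounds above match the stated conditions exactly; under any alternative reading, the same triangle-inequality strategy goes through and still yields the proposition's sufficient conditions.
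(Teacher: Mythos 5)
Your proof is correct and follows essentially the same route as the paper's: isolate the $i=1$ constant term, use $\|P^L_n\|_\infty\le 1$ and $|\cos|\le 1$ to bound $|\psi_i(a)\psi_i(a')|$ by $(2i-1)^p$ or $2^p$, and read off the sufficient condition from the triangle inequality. One small but worthwhile difference: you only verify $\iint P^+ = 1$, whereas the paper shows the stronger fact that $\int P^+(a,a')\,da = \pa(a')$, i.e. that the marginals of $P^+$ actually are $\pa$. That stronger statement is what the construction really needs — $\kaa = P^+/(\pa\pa)$ is only the intended positive-pair kernel if $\pa$ is literally the marginal of $P^+$ — and it costs nothing extra since it uses the same orthogonality to $\psi_1$. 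You are right to flag the Fourier DC-mode normalization: the displayed product formula evaluated at $i=1$ gives $2^{p/2}$, not $1$, and the constant mode should not carry the $\sqrt{2}$; the paper's proof implicitly assumes $\psi_1\equiv 1$ when it writes the leading term as $\tfrac{1}{2^{2p}}$, so your note is a genuine correction rather than a pedantic aside.
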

\begin{proof}
We first show that $\int_{x \in [-1,1]^p} P^+(a,a') da = \pa(a')$ for both kernels:
\begin{align}
    \int_x P^+(a,a') da &= \frac{1}{2^{2p}} \sum_{i=1}^{r} \lambda_i \psi_i(a')\int_{a \in [-1,1]^p} \psi_i(a) da  \\
    &= \frac{1}{2^{2p}} \lambda_1 \psi_1(x')\int_{a \in [-1,1]^p} \psi_1(a) da \\
    &= \frac{1}{2^p} = \pa(a),
\end{align}
where we use the fact that $\int_{-1}^1 P^{L}_i(a_j) da_j = \int_{-1}^1 \cos(i\pi a_j) da_j = 0$ for all $i \geq 1$ and $2 \leq j \leq p$.  

Next, we show that $P^+(a,a') \geq 0$ for all $a, a' \in [-1,1]^p$. 
$$
P^+(a,a') 
= \frac{1}{2^{2p}} \sum_{i=1}^{r} \lambda_i \psi_i(a) \psi_i(a')
= \frac{1}{2^{2p}} +  \frac{1}{2^{2p}} \sum_{i=2}^{r} \lambda_i \psi_i(a) \psi_i(a')
\geq \frac{1}{2^{2p}}  - \frac{1}{2^{2p}} \sum_{i=2}^{r} \lambda_i |\psi_i(a)| |\psi_i(a')|.
$$
Since we have $|P^L_i(a_j)| \leq 1$ and $|\cos(i \pi a_j)| \leq 1$ for all $i$ and $j$, we have 
$|\psi_i(a)| \leq (2i-1)^{p/2}$ for Legendre kernels and $|\psi_i(a)| \leq 2^{p/2}$ for Fourier kernels. Plug these inequalities to the former one, we get $P^+(a,a')  \geq 0$.
Therefore, $P^+(a,a') $ is a valid distribution.
\end{proof}

\paragraph{Training details.}
For both Legendre and Fourier kernels, we define the rank of the synthetic kernel as $r \in \{6, 8, 10\}$, with eigenvalues given by an exponential decay $\lambda_i = c e^{-0.3 i}$, where the constant $c$ is the largest value satisfying Proposition~\ref{prop:eigenvalues_synthetic}. The encoder output dimension is set to $d = r/2$, extracting the top half of the eigenfunctions.

We randomly generate $10^7$ training samples from the joint $P^+(a,a')$. The encoder is a multi-layer perceptron with four hidden layers, each containing $128$ neurons and GeLU activation functions. Training is performed using the Adam optimizer with a learning rate of $10^{-3}$ and a minibatch size of $1,000$. The total number of training steps is $3 \times 10^5$. To facilitate the training process,  we append the input $x$ with polynomial features $\{a_1^{i_1}a_2^{i_2}\cdots a_p^{i_p}\}_{0 \leq i_1,\cdots,i_p \leq r}$  for Legendre kernels,  and Fourier features $\{\cos(i \pi a_j)\}_{0 \leq i \leq r, 1\leq j \leq p}$ for Fourier kernels.
This setting is fixed across all methods. 

For evaluation, we generate another $10^6$ samples independently to estimate the L2 distance. For RQ and VICReg objectives, we set the penalty hyperparameters $\mu=10$ and $\nu=30$.

For the joint nesting method, we simply take the average of all objectives across all dimensions. For Rayleigh-Ritz methods, we adopt \Cref{alg:rr_scl} for LoRA, \Cref{alg:rr_rq} for RQ, and \Cref{alg:rr_vicreg} for VICReg, using all training samples. 

\paragraph{Visualization of the estimated eigenfunctions.}

\Cref{fig:eigenfunction_comparison_appendix} illustrates the eigenfunctions in a representative setting where the true kernel has rank $r=8$ and we extract $d=4$ eigenfunctions.

\begin{figure}[t!]
    \centering
    \begin{subfigure}[b]{\textwidth}
        \includegraphics[width=\textwidth]{figures/synthetic_exp/legendre_Rayleigh_Ritz_eigenfunction.png}
        \caption{Legendre kernels with Rayleigh-Ritz post-processing ($p=1, r=8, d=4$).}
        \label{fig:legendre_rr}
    \end{subfigure}
    
    \begin{subfigure}[b]{\textwidth}
        \includegraphics[width=\textwidth]{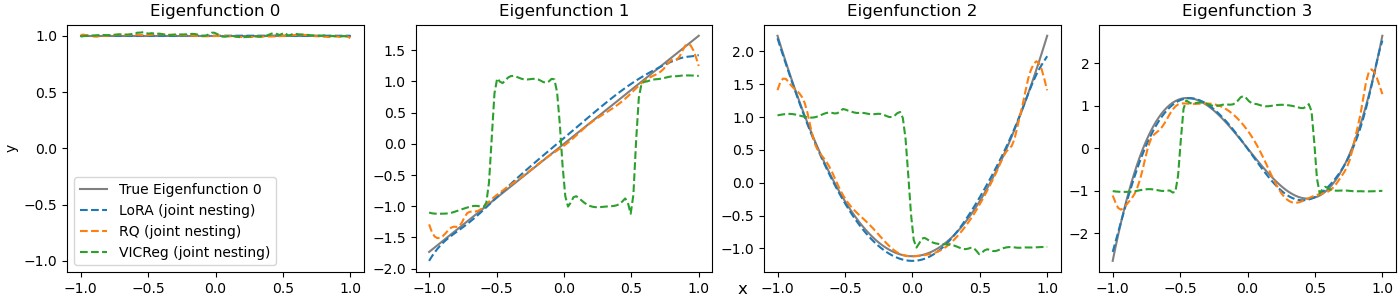}
        \caption{Legendre kernels with joint nesting ($p=1, r=8, d=4$).}
        \label{fig:legendre_jn}
    \end{subfigure}
    
        \vspace{0.5cm}

    \begin{subfigure}[b]{\textwidth}
        \includegraphics[width=\textwidth]{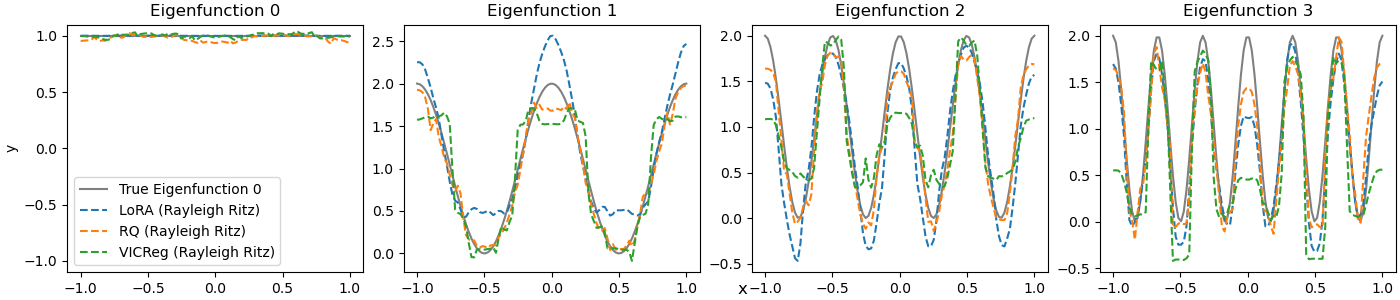}
        \caption{Fourier kernels with Rayleigh-Ritz post-processing ($p=2, r=8, d=4$).}
        \label{fig:fourier_rr}
    \end{subfigure}

    \begin{subfigure}[b]{\textwidth}
        \includegraphics[width=\textwidth]{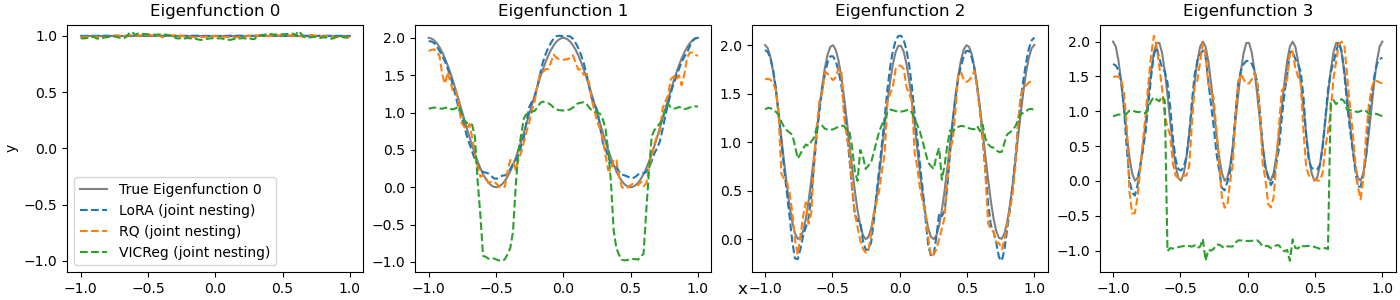}
        \caption{Fourier kernels with joint nesting ($p=2, r=8, d=4$).}
        \label{fig:fourier_jn}
    \end{subfigure}

    \caption{A comparison of eigenfunctions estimated by different methods (LoRA, RQ, and VICReg) against the true eigenfunctions (solid gray line). Each subplot shows the results for a specific kernel type and post-processing technique with parameters. For an input dimension $p=2$, the plot shows the functions $\psi_i(a,a)$ from $a=-1$ to $a=1$. }
    \label{fig:eigenfunction_comparison_appendix}
\end{figure}

\subsection{Experimental Details of the Adaptive-dimensional Representations}\label{app:image-exp}

\paragraph{Datasets.}
We use the following datesets for our experiments. We report the validation performance on ImageNette as the test sets are not available.  
\begin{enumerate}

    \item \textbf{CIFAR-10}~\citep{krizhevsky2009learning} is a widely-used image classification benchmark consisting of $60,000$ $32 \times 32$ pixel color images across $10$ object classes. The dataset is balanced, with $50,000$ images for training and $10,000$ for testing.
    
    \item \textbf{ImageNette}~\footnote{\url{https://github.com/fastai/imagenette}} is a subset of ImageNet designed for rapid experimentation. It consists of 10 classes (tench, English springer, cassette player, chain saw, church, French horn, garbage truck, gas pump, golf ball, parachute). The dataset contains $9,469$ training images and $3,925$ validation images with variable resolutions, which are typically resized during preprocessing.

\end{enumerate}

\paragraph{Data Augmentation.}
For ImageNette, we adopt the image augmentation settings used in VICReg~\citep{bardes2022vicreg}, and for CIFAR-10, we follow those from SimSiam~\citep{chen2021exploring}. The applied augmentations include random cropping, horizontal flipping, color jittering, grayscale conversion, Gaussian blur, solarization, and color normalization.

\paragraph{Network architectures.}
We employ ResNet-18~\citep{he2016deep} as the backbone network, followed by a two-layer MLP with a hidden dimension of 2048 and an output linear layer chosen from adaptive dimensions of lengths $r=\{4,8,16,32,64,128,256\}$.

\paragraph{Optimization details.}
We train our models on the ImageNette/CIFAR-10 dataset for $600/800$ epochs using the Stochastic Gradient Descent (SGD) optimizer. The optimizer is configured with a momentum of $0.9$, a weight decay of $5 \times 10^{-4}$, and an initial learning rate of $0.1$. The learning rate is dynamically adjusted throughout training using a cosine decay schedule~\citep{loshchilov2016sgdr} with $60$ warmup epochs. 
To prevent model collapse and ensure training stability, we employ gradient clipping with a maximum norm of $3.0$.  
Our implementation is built using PyTorch and the lightly library~\citep{susmelj2020lightly}. All models are trained on a single NVIDIA A6000 GPU.

\paragraph{Objectives.}
Given a mini-batch of paired augmented samples $(a_i,a_i^+)_{i=1}^n$, the VICReg objective is implemented as follows:
\begin{equation} \label{eqn:vicreg}
\mathcal{L}_{\rm vicreg}(\Psi) = 
\frac{\lambda}{n} \sum_{i=1}^{n} \| \tilde{\Psi}(a_i) - \tilde{\Psi}(a_i^+)\|_2^2 
\; + \; \frac{\mu}{d} \sum_{i=1}^{d} \mbox{ReLU}\left(1 - \sqrt{C[i,i] + \epsilon} \right) 
\; + \; \frac{\nu}{d(d-1)} \sum_{i \neq j} C[i,j]^2,
\end{equation}
where $\tilde{\Psi}(a) = \Psi(a) -\frac{1}{n}\sum_{i=1}^n \Psi(a_i)$ is a centered encoder,
$d$ is the dimension of output representations, $\epsilon=10^{-4}$ is a small positive constant for numerical stability, and $C \in \mathbb{R}^{d \times d}$ is the empirical covariance matrix with $C[i,j] = \frac{1}{n- 1} \sum_{i=1}^{n} \tilde{\psi}_i(a) \tilde{\psi}_j(a)$, and
$\lambda,\mu$ and $\nu$ are positive hyperparameters.  We set $\lambda=50$, $\mu=25$, and $\nu=512$ in our experiments.

For the spectral contrastive loss, we follow the original implementation~\citep{haochen2021provable}, applying $\ell_2$ normalization to the final representation before computing the loss (\Cref{eq:spectral-contrastive-loss}), \ie $\Psi'(a) = \Psi(a) / \|\Psi(a)\|_2$. This normalization substantially stabilizes training and improves performance (by about 10\% on ImageNette).

\paragraph{Feature Importance Calculation.}
For joint nesting, we average all objectives across the selected dimensions with uniform weights. For the Rayleigh–Ritz methods, we use \Cref{alg:rr_scl} for SCL and \Cref{alg:rr_vicreg} for VICReg, training for an additional 10 and 30 epochs on ImageNette and CIFAR-10, respectively. For random selection, we train 300 linear classifiers in parallel, each using a randomly chosen subset of $r$ features, and report the averaged accuracy. 

\paragraph{Evaluation.}
We follow the standard protocol of training a linear classifier on top of frozen ResNet-18 representations. The linear classifier is trained for $50$ epochs using SGD with a learning rate of $0.1$, momentum of $0.9$, and weight decay of $10^{-4}$. The training augmentation pipeline includes random cropping to $224 \times 224$, random horizontal flips, and color normalization. For validation, images from ImageNette are center-cropped to $224 \times 224$ and normalized, images from CIFAR-10 are simply normalzied. The precise accuracy values shown in \Cref{fig:image_results} are given in \Cref{tab:cifar10} and \Cref{tab:imagenette}.

\begin{table}[t]
\centering
\begin{tabular}{c|c|c|c|c|c|c|c|c}
\toprule
\multicolumn{9}{c}{VICReg~\citep{bardes2022vicreg} in \Cref{eqn:vicreg}} \\
\midrule \midrule
Methods $\backslash$ Dims & 4 & 8 & 16 & 32 & 64 & 128 & 256 & 512 \\
\midrule
Rayleigh Ritz (RR) & 66.31 & 76.05 & \textbf{80.69} & \textbf{82.20} & 82.91 & 84.04 & 84.07 & 84.06 \\
\midrule
Joint nesting (JN) & \textbf{69.17} & \textbf{77.62} & 80.37 & 82.49 & 83.72 & 83.94 & 83.99 & \textbf{84.16} \\
\midrule
Fixed features (FF)   & 68.21 & 72.65 & 77.42 & 79.66 & \textbf{83.74} & \textbf{85.40} & \textbf{84.37} & 84.10 \\
\midrule
Random selection (RS)   & 30.02 & 42.07 & 57.55 & 76.16 & 82.97 & 83.50 & 83.81 & 84.01 \\
\bottomrule
\multicolumn{9}{c}{SCL~\citep{haochen2021provable} in \Cref{eq:spectral-contrastive-loss} } \\
\midrule \midrule
Methods $\backslash$ Dims & 4 & 8 & 16 & 32 & 64 & 128 & 256 & 512 \\
\midrule
Rayleigh Ritz (RR) & 75.62 & 83.43 & 86.47 & 86.89 & 86.88 & 86.89 & 86.89 & 86.90\\
\midrule
Joint nesting (JN) & \textbf{77.55} & \textbf{83.89} & 86.38 & 86.64 & 86.66 & 86.67 & 86.66 & 86.66\\
\midrule
Fixed features (FF)  & 72.67 & 81.05 & \textbf{86.72} & \textbf{86.94} & \textbf{87.00} & \textbf{87.07} & \textbf{86.94} & \textbf{86.91} \\
\midrule
Random selection (RS) & 61.94 & 79.10 & 84.48 & 85.61 & 86.10 & 86.39 & 86.62 & 86.72\\
\midrule
\bottomrule
\end{tabular}
\caption{
Linear probe accuracies (\%) of different representation sizes on CIFAR-10.
}
\label{tab:cifar10}
\end{table}

\begin{table}[t]
\centering
\begin{tabular}{c|c|c|c|c|c|c|c|c}
\toprule
\multicolumn{9}{c}{VICReg~\citep{bardes2022vicreg} in \Cref{eqn:vicreg}} \\
\midrule \midrule
Methods $\backslash$ Dims & 4 & 8 & 16 & 32 & 64 & 128 & 256 & 512 \\
\midrule
Rayleigh Ritz (RR)  & 78.39 & \textbf{87.39} & \textbf{89.15} & \textbf{89.81} & \textbf{89.94} & \textbf{89.94} & \textbf{89.89} & \textbf{89.89} \\
\midrule
Joint nesting (JN) & \textbf{81.78} & 87.08 & 88.61 & 88.84 & 88.76 & 88.89 & 89.04 & 89.07 \\
\midrule
Fixed features (FF)  & 72.51 & 81.04 & 87.08 & 88.79 & 88.71 & 89.10 & 89.30 & 89.86\\
\midrule
Random selection (RS) & 51.72 & 75.76 & 87.98 & 89.15 & 89.54 & 89.68 & 89.67 & 89.65\\
\bottomrule
\multicolumn{9}{c}{SCL~\citep{haochen2021provable} in \Cref{eq:spectral-contrastive-loss} } \\
\midrule \midrule
Methods $\backslash$ Dims & 4 & 8 & 16 & 32 & 64 & 128 & 256 & 512 \\
\midrule
Rayleigh Ritz (RR) & 49.12 & 71.41 & 79.39 & 83.39 & 84.56 & 84.89 & 84.76 & 84.76 \\
\midrule
Joint nesting (JN) & \textbf{82.06} & \textbf{87.44} & \textbf{88.56} & \textbf{88.94} & \textbf{88.69} & \textbf{88.87} & \textbf{88.89} & \textbf{88.89} \\
\midrule
Fixed features (FF) & 71.26 & 86.22 & 87.92 & 88.10 & 87.75 & 86.11 & 86.34 & 84.79 \\
\midrule
Random selection (RS) &  32.23 & 69.19 & 81.93 & 84.25 & 84.61 & 84.74 & 84.73 & 84.79 \\
\midrule
\bottomrule
\end{tabular}
\caption{
Linear probe accuracies (\%) of different representation sizes on Imangenette.
}
\label{tab:imagenette}
\end{table}

\subsection{Comparison between RQ and VICReg}
\label{app:compare_rq_vicreg}
In this section, we compare the RQ objective (\Cref{eq:rq-loss}) and the VICReg objective (\Cref{eqn:vicreg}). Although their formulations appear similar, subtle differences in their design lead to substantial variations in empirical performance.

First, both methods compute the invariance term (the first term in \Cref{eq:rq-loss}) in the same way. However, VICReg applies a hinge loss penalty to the variance term (the second term), whereas RQ uses a squared loss penalty. The hinge loss enforces a lower bound on the variance, penalizing only when it falls below 1, thereby encouraging representations with unit or greater variance. Moreover, VICReg estimates this term using minibatch statistics, introducing bias in the estimation of the population variance. In contrast, RQ employs a squared penalty on the population variance and uses sample splitting to obtain an unbiased estimator.

Similarly, VICReg’s estimation of the covariance term is also biased. While both methods use a squared penalty, VICReg relies on the empirical covariance matrix computed within a minibatch, without sample splitting, resulting in a biased estimator of the population covariance.

In our synthetic experiments (\Cref{tab:synthetic_comparison}, \Cref{fig:eigenfunction_comparison_appendix}), VICReg yields poorer estimates of eigenfunctions and eigenvalues. Nonetheless, it achieves much higher linear probe accuracy on CIFAR-10 (84\% vs. 75\%). We attribute this improvement to the hinge loss, which prevents representation collapse, a phenomenon also noted in the original paper by \citet{bardes2022vicreg}, who observe that the hinge loss is critical to prevent variance gradient from being too small.

\bibliography{ref}

\end{document}